\newcommand{\ignore}[1]{}
\newtheorem{definition}{Definition}
\newtheorem{theorem}{Theorem}
\newtheorem{example}{Example}
\newtheorem{lemma}{Lemma}
\title{Constrained Optimization with Qualitative Preferences
}
\author{
	Sultan Ahmed\\
		Department of Computer Science\\
	  University of Regina\\
	\texttt{ Sultan.Ahmed@uregina.ca}
	\AND
	\href{https://orcid.org/0000-0001-7381-1064}{\includegraphics[scale=0.06]{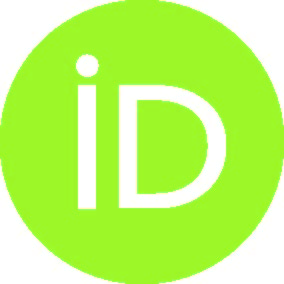}\hspace{1mm}Malek Mouhoub} \\
 	Department of Computer Science\\
	  University of Regina\\
	\texttt{mouhoubm@uregina.ca} \\
}
\begin{document}
\maketitle

\begin{abstract}
 The \emph{Conditional Preference Network} (CP-net) graphically represents user's qualitative and conditional preference statements under the \emph{ceteris paribus} interpretation. The constrained CP-net is an extension of the CP-net, to a set of constraints. The existing algorithms for solving the constrained CP-net require the expensive dominance testing operation. We propose three approaches to tackle this challenge. In our first solution, we alter the constrained CP-net by eliciting additional relative importance statements between variables, in order to have a total order over the outcomes. We call this new model, the constrained Relative Importance Network (constrained CPR-net). Consequently, We show that the Constrained CPR-net has one single optimal outcome (assuming the constrained CPR-net is consistent) that we can obtain without dominance testing. In our second solution, we extend the Lexicographic Preference Tree (LP-tree) to a set of  constraints. Then, we propose a recursive backtrack search algorithm, that we call Search-LP, to find the most preferable outcome. We prove that the first feasible outcome returned by Search-LP (without dominance testing) is also preferable to any other feasible outcome. Finally,  in our third solution, we preserve the semantics of the CP-net and propose a divide and conquer algorithm that compares outcomes according to dominance testing.
 \keywords{Preference-based reasoning \and Qualitative preferences \and Constraint solving \and Constrained optimization \and CP-nets \and LP-trees \and Divide and Conquer}
 \end{abstract}

\section{Introduction}

Many real world applications, including recommender systems \cite{Zanker2010,Jin2002PreferenceBasedCollobarativeFiltering}, product configurations~\cite{Brafman2002ProdConTCPnet}, and online auction systems~\cite{Sadaoui2016_pref_constraints}, require the management of both hard constraints and preferences. A preference represents user's desire~\cite{Zajonc1980feeling}, while a hard constraint, or simply a constraint, specifies legal combinations of assignments of values to some of the variables~\cite{PooleMackworth2017}. In such applications, helping users by providing the most preferable feasible outcome is of great interest as stated in recent Artificial Intelligence research~\cite{Boutilier2004constrainedCP_net,Alanazi2016constrained_cp_net,Brafman2006TCP_net_JAIR,Domshlak2003reasoningabout}.  Quantitative representation of preferences is well-known in Multi-Attribute Utility Theory~\cite{Keeney1993decision}. In this direction, a number of models extending the \emph{Constraint Satisfaction Problem} (CSP)  formalism~\cite{Dechter2003constraint} and including the Semiring-based CSP (SCSP)~\cite{Bistarelli1997SCSP} and the Valued CSP (VCSP)~\cite{Schiex1995VCSP,Bistarelli1999SCSP_VCSP} have been proposed to represent both constraints and quantitative preferences. However, considering qualitative preferences is interesting as these preferences are natural, and easier to elicit from users. Nevertheless, in a multi-attribute case, the number of outcomes is exponential in the number of variables. Thus, a direct assessment of the preference order is usually not practical. In this regard, many graphical models and logical languages have been proposed to represent qualitative preferences compactly~\cite{Baier2009AIMagazine,Kaci2011,Amor2016,Pigozzi2016}.

In particular, the \emph{Conditional Preference Network} (CP-net)~\cite{Boutilier2004_CPnetJAIR} is a well-known model to represent user's conditional preference statements under the \emph{ceteris paribus} (all else being equal) assumption. In general, an acyclic CP-net induces a strict partial order over the outcomes. This means that the preference order is irreflexive, asymmetric and transitive, however not necessarily complete, i.e., a pair  of outcomes is not always preferentially comparable. A noticeable extension of the CP-net is the Tradeoffs-enhanced CP-net (TCP-net)~\cite{Brafman2006TCP_net_JAIR}, in which an unconditional or conditional relative importance relation between a pair of variables is considered.  The TCP-net can capture more preferential information than the CP-net, yet pairs of outcomes are not necessarily comparable. Reasoning with CP-nets consists of two fundamental queries:  outcome comparison and outcome optimization. Outcome comparison queries are of two types: ordering query and dominance testing. The ordering query answers if an outcome is not preferable to another outcome. In this case, we say that the latter outcome is consistently orderable over the former outcome. In the acyclic CP-net, at least one of the two ordering queries between two outcomes can be answered in linear time in the number of variables.

On the other hand, dominance testing answers if an outcome is preferable to another outcome. In general, dominance testing in CP-nets is PSPACE-complete~\cite{Goldsmith2008computational} while it can be in NP (or even in P) under various assumptions~\cite{Boutilier2004_CPnetJAIR}. \ignore{This computational hardness of dominance testing makes the reasoning task very complicated, in particular when applying the CP-net in preference-based constrained optimization~\cite{Boutilier2004constrainedCP_net,Ahmed2018PartialCPnet}.
More specifically, when a set of hard feasibility constraints is involved with the acyclic CP-net, the optimal outcome of the CP-net might not be feasible. In such cases, there can be more than one feasible outcome, which are not preferentially comparable to each other, and also not preferentially dominated by other feasible outcomes. These feasible and preferentially non-dominated outcomes are called the Pareto optimal outcomes (a.k.a., the Pareto set). To obtain the Pareto set, by using the search process described in~\cite{Boutilier2004constrainedCP_net,Ahmed2018PartialCPnet}, the dominance testing is needed between a new feasible outcome and each outcome of the existing Pareto set. Note that the Pareto set can be exponential in size. To tackle the problem, we extended the CP-net model such that the extended model (the CPR-net) always represents a total order over the outcomes~\cite{Ahmed2018CPRnet}. We considered a set of constraints with the CPR-net, and proposed a backtrack solving algorithm that returns the most preferable feasible outcome without performing dominance testing. The problem can also be tackled by considering hard constraints with the LP-tree model~\cite{Booth2010LPtree,Ahmed2019CanAI_LP_tree} that represents user's lexicographic preferences. However, the CPR-net and the LP-tree are considered to be less expressive than the CP-net~\cite{Ahmed2018CPRnet,Fargier2018LPtrees}. Hereby, in this paper, we study a practical way of performing dominance testing instead of extending the CP-net to less expressive model.} Boutilier et al.~\cite{Boutilier2004_CPnetJAIR} considered the problem of dominance testing as the problem of searching for an improving flipping sequence from one outcome to another outcome. An improving flip was defined using the CP-net semantics. An outcome can be improved to another, if they differ only on the values of a single variable and the variable gives a better value to the latter outcome than the former one. If there is an improving flipping sequence from one outcome to another, the latter outcome is preferred to the former one. Then, given the problem is intractable in acyclic CP-nets, some pruning techniques to reduce the search space such as the suffix fixing and the forward pruning were described. The suffix fixing rule guarantees that any improving flip induced by a variable such that this variable and its descendant set in the CP-net give the same value to both outcomes, can be pruned. In forward pruning, the values for each variable which are not relevant to a particular dominance query can be eliminated.  This requires a forward sweep through the network.  In the forward sweep, for each variable, the relevant values are selected and the rest of the network is restricted to these relevant values.  The above techniques can be applied to any generic search method to prune the search space without impacting the correctness or the completeness of the search procedure. On the other hand, Santhanam et al.~\cite{Santhanam2010dominanceModelChecking} translated dominance testing to model checking, and showed that existing model checking methods can be applied.  This translation facilitates the implementation of dominance testing. However, this does not establish additional computational advantages.

Outcome optimization is to obtain the most preferable outcome(s) which are also called the optimal outcome(s). For the acyclic CP-net or the acyclic TCP-nets, there is a single optimal outcome which can be found by the \emph{forward sweep} procedure~\cite{Boutilier2004_CPnetJAIR} that takes linear time in the number of variables. However, if the CP-net or the TCP-net is involved with a set of hard constraints, the most preferable outcome might be infeasible. In this case, the outcome optimization task is not trivial given that both the CP-net and the TCP-net represent a strict partial order over the outcomes. In constrained optimization using these models, the solving algorithms~\cite{Boutilier2004constrainedCP_net,Alanazi2016constrained_cp_net,Brafman2006TCP_net_JAIR,Ahmed2018PartialCPnet} suffer from a common drawback, i.e., requiring dominance testing between outcomes.  This is due to the fact that, when a new feasible outcome is found during the search process, this feasible outcome is considered as a Pareto optimal outcome if it is not preferentially dominated by any outcome of the existing Pareto set. In this paper, to the problem of requiring dominance testing in qualitative preference-based constrained optimization, we propose three solutions in which dominance testing is not needed or dominance testing is performed efficiently.

In our first solution, we propose a variant of the CP-net model, that will prevent us from using dominance testing, when constraints are considered. In this regard, we formally illustrate the relative importance relation between variables, which is induced by the parent-child relation of an acyclic CP-net. We show that the CP-net represents a total order of the outcomes, in particular cases, if and only if the induced variable importance order is total.  This motivates us to alter the CP-net model such that it guarantees to represent a total order over the outcomes.  In this regard, after constructing an acyclic CP-net, we determine every pair of variables in which the CP-net does not induce a relative importance relation, and we ask the user to explicitly provide a variable importance order for that pair.  We call the extended model the CP-net with Relative Importance (CPR-net).  We demonstrate that an acyclic CPR-net always represents a total order over the outcomes.  As a result, there is a single optimal outcome for the constrained optimization problem in which the preferences over the outcomes are described using the acyclic CPR-net.  Finally, we give an efficient algorithm that we call Search-CPR, to obtain this optimal outcome by utilizing the topological order of the acyclic CPR-net.  The formal properties of the algorithm are presented and discussed.  The main advantage of Search-CPR is that it does not require dominance testing between outcomes.

Note that related works on tackling preference-based constrained optimization without the need for dominance testing has been reported in the literature. In this regard, Wallace and Wilson~\cite{Wallace2006} described a method for representing constrained optimization problems with conditional preferences based on a lexicographic order of both variables and values. This representation requires to elicit a total order of the variables according to their importance and, in the method, searching for the most preferable feasible outcome by following this order does not require dominance testing. However, in some cases, this variable order can be conflicting to the variable order induced by the parent-child relationship of the conditional preferences, which indicates that some of the outcomes are incomparable or equally good.  In this case, the user is required to explicitly provide preferences over the incomparable outcomes, which introduces additional difficulty in the preference elicitation process. Instead of asking the user to provide a total order of variable importance, our method first determines the implicit variable importance orders induced by the conditional \emph{ceteris paribus} preference statements encoded in the CP-net. Then, the user is asked to explicitly provide the relative importance order between every pair of variables, which has not already been ordered by the CP-net.  Thus, in this case, the user provided variable order does not conflict with the CP-net induced order.


Freuder et al.~\cite{Freuder2003OrdinalCSP} studied the Ordinal CSP formalism which extends hard constraints to preferences. In that formalism, preferences are represented as a lexicographic order over variables and domain values. The authors proposed a backtrack search algorithm to obtain the most preferable feasible outcome. They considered both lexical order (induced by the lexicographic preferences) and ordinary CSP heuristics, to determine the variable order for instantiation, and provided appropriate trade-off between these two in terms of efficiency. One limitation of that formalism is that it does not consider conditional preferences.  \ignore{Consequently, the above formalism was extended~\cite{Wallace2006,Freuder2010Lexicographic} to incorporate conditional preferences. In the extended model, the user provides: (1) a unique total order over the variables based on the variable importance, and (2) conditional preferences over the variable domain values.  Given that the user provided variable order corresponds to the one induced by the parent-child relationship of the conditional preferences, the authors showed that the algorithms originally devised for the Ordinal CSP can be applied in this extended model as well. The model is similar to the CP-net provided that the lexicographic variable order guarantees a total order over the outcomes in the former one.  Yet, this does not consider the fact that the variable order can also be conditioned for some value of other variables. } Given this limitation, the Lexicographic Preference tree (LP-tree)~\cite{Booth2010LPtree} is a more general formalism as it considers the fact that both lexicographic variable order and value order can be conditioned on the actual value of some more important variables.  Instead of representing a unique total order over the variables, the LP-tree represents a set of hierarchical orders over the variables which are also total order. This has inspired us to reply on LP-trees for our second solution. More precisely, we extend the LP-tree graphical model to a set of hard constraints. We call the new model the Constrained LP-tree. To our best knowledge, this is the first time such model is proposed. In the model, we define a most preferable feasible outcome as an outcome, which is feasible and no other feasible outcome is preferable to the outcome. In order to find the most preferable feasible outcome, we propose a recursive backtrack search algorithm that we call Search-LP.  Search-LP begins with the instantiation of the most preferred value of the root node of the LP-tree.  After strengthening the constraints, the algorithm checks the consistency of the new set of constraints. If this set is inconsistent, the branch for this assigned value is terminated, and Search-LP continues with the next values of the root node, according to the preference order, until a consistent set of constraints is found.  Then, the partial assignment induced by the root node value and the new set of constrains is obtained, and the termination criterion is checked. If all variables are instantiated, this feasible outcome that we prove to be the optimal one, is returned. If the termination criterion is not met, we reduce the LP-tree by removing the instantiated variables.  We prove that the Reduced LP-tree is compatible with the original LP-tree, i.e., the preferences induced by the Reduced LP-tree are also held by the original LP-tree, given the instantiation of the removed variables. Then, Search-LP is called recursively, until the termination criterion is met. To every recursive call, the instantiation of the removed variables is forwarded.  Finally, if no feasible outcome exists, i.e., the CSP is inconsistent, Search-LP stops by returning $null$.

Search-LP produces the most preferable feasible outcome, while the underlying CSP might have an exponential number of feasible outcomes\ignore{ and all such outcomes are needed to produce by considering the CSP standalone}. In this sense, we can say that solving the Constrained LP-tree is no harder than solving the underlying CSP, given that dominance testing is not needed as it is the case for the Constrained CP-net and the Constrained TCP-net where this operation is of exponential cost~\cite{Boutilier2004_CPnetJAIR,Goldsmith2008computational,Brafman2006TCP_net_JAIR}. Saying this, we cannot apply variable ordering heuristics~\cite{Mouhoub2011VariableValueOrderingCSP,Yong2017VariableValueOrderCSP} as it was done in the case of Constrained CP-nets~\cite{Alanazi2016constrained_cp_net}, to improve the performance of the search in practice. This is due to the fact that Search-LP instantiates the variables based on a hierarchical order of the variables defined by the LP-tree.

In our third solution, we consider the problem of dominance testing for acyclic CP-nets, and we propose a divide and conquer algorithm that we call Acyclic-CP-DT, to answer any dominance query. We observe that some upper portion of an arbitrary topological order which gives the same value to both outcomes, are not significantly needed to answer the dominance query. Given the first variable in the topological order which gives two different values to the outcomes, we divide the problem into two sub problems -- one for each value.  For each sub problem, we build a sub CP-net by removing the variable and its ancestors from the original CP-net.  Then, Acyclic-CP-DT is called recursively for the sub CP-nets until it reaches to a base condition.  By evaluating the return value of the sub calls, Acyclic-CP-DT determines and returns an answer to the original query.  We formally show that this evaluation method is correct, and a query can be answered efficiently in particular cases. However, the completeness portion of the algorithm is very complicated.  In the worst case, a significant portion of the search space, related to the distinct values given by some bottom portion of a topological order, needs to be searched.  That is why all instances of the problem class are still not tractable.  Nevertheless, we show that Acyclic-CP-DT is computationally an improvement to the existing methods of the dominance testing~\cite{Boutilier2004_CPnetJAIR}.

The rest of the paper is organized as follows. In Section~\ref{SEC_BK}, we provide the necessary preliminary knowledge. In Section~\ref{SEC_CON_CPR_NET}, we present both the CPR-net and the Constrained CPR-net. The Constrained LP-tree is then described in Section~\ref{SEC_CON_LP_TREE}. In Section~\ref{Acyclic_CP_DT_Section_Algorithm_Example}, we describe the divide and conquer algorithm to perform dominance testing. We list concluding remarks and some future research directions in Section~\ref{SEC_CON_FUTURE_WORK}. Note that this paper is a comprehensive study on qualitative preference-based constrained optimization, and extends the previous works we conducted in this regard~\cite{Ahmed2018CPRnet,Ahmed2019LPtreeSMC,Ahmed2019DivedeConquerDT}. The main purpose here is to provide the reader with the alternatives to consider when tackling problems of this kind.

\section{Background Knowledge}
\label{SEC_BK}
\subsection{Preference and relative importance relations}

We assume a set of variables $V=\{X_1,X_2,\cdots,X_n\}$ with the finite domains $D(X_1),D(X_2),\cdots ,D(X_n)$. We use $D(\cdot)$ to denote the domain of a set of variables as well.  The decision maker wants to express preferences over the complete assignments on $V$.  Each complete assignment can be seen as an outcome of the decision maker's action.  The set of all outcomes is denoted by $O$.  A preference order $\succ$ is a binary relation over $O$.  For $o_1,o_2\in O$, $o_1\succ o_2$ indicates that $o_1$ is strictly preferred to $o_2$.  The preference order $\succ$ is necessarily a partial order, i.e., $\succ$ is irreflexive, asymmetric and transitive.  The preference order $\succ$ is a total order, if $\succ$ is also complete, i.e., for every $o_1,o_2\in O$, we have either $o_1\succ o_2$, $o_2\succ o_1$, or $o_1$ and $o_2$ are preferentially incomparable.

The size of $O$ is exponential in the number of variables.  Therefore, direct assessment of the preference order is usually impractical.  In this case, the notions of \emph{preferential independence} and \emph{conditional preferential independence} play a key role to represent the preference order compactly, at least if the preference order is partial.  These are standard and well-known notions of independence in multi-attribute utility theory~\cite{Keeney1993decision,Boutilier2004_CPnetJAIR,Brafman2006TCP_net_JAIR}.

\begin{definition}~\cite{Boutilier2004_CPnetJAIR}
Let $x_1,x_2\in D(X)$ for some $X\subseteq V$, and $y_1,y_2\in D(Y)$, where $Y=V-X$.  We say that $X$ is preferentially independent of $Y$ iff, for all $x_1,x_2,y_1,y_2$, we have that $x_1 y_1\succ x_2 y_1\Leftrightarrow x_1 y_2\succ x_2 y_2$.
\end{definition}

If this preferential independence holds, we say that $x_1$ is preferred to $x_2$ \emph{ceteris paribus} (all else being equal).  This implies that the decision maker's preferences for different values of $X$ do not change as other attributes vary.  The analogous definition of conditional preferential independence is as follows.

\begin{definition}~\cite{Boutilier2004_CPnetJAIR} Let $X$, $Y$ and $Z$ be a partition of $V$ and let $z\in D(Z)$. We say that $X$ is conditionally preferentially independent of $Y$ given $z$ iff, for all $x_1,x_2\in D(X)$ and $y_1,y_2\in D(Y)$, $x_1 y_1z\succ x_2y_1z\Leftrightarrow x_1y_2z\succ x_2 y_2 z$.  $X$ is conditionally preferentially independent of $Y$ given $Z$, iff $X$ is conditionally preferentially independent of $Y$ given every assignment $z\in D(Z)$.
\end{definition}

We now define the notion of \emph{relative importance} of variables.  The ordering of the outcomes induced by this notion is relatively stronger than that of the preferential independence~\cite{Brafman2006TCP_net_JAIR}. For example, a reader wants to borrow a book from the following available options in a library: $\{Fiction,Paper\}$, $\{Fiction,Electronic\}$, $\{Nonfiction,Paper\}$ and $\{Nonfiction,Electronic\}$.  In this case, the attributes are:  $Genre=\{Fiction, Nonfiction\}$ and $Media=\{Paper, Electronic\}$.  The user's preferences on $Genre$ and $Media$ are not affected by each other, i.e., $Genre$ and $Media$ are preferentially independent.  The user expresses the following preferences:  $Fiction\succ Nonfiction$ and $Paper\succ Electronic$. Obviously, $\{Fiction,Paper\}$ is the most preferred outcome and $\{Nonfiction,Electronic\}$ is the least preferred outcome in this scenario. However, we do not know the order between $\{Fiction,Electronic\}$ and $\{Nonfiction,$ $Paper\}$.  This case is typical for independent variables.  Using the \emph{ceteris paribus} semantics, we can always compare two outcomes when they differ on a single variable. However, we cannot always compare them when they differ by more than one variable. The relative importance of variables can address some of such comparisons.  For example, if the reader specifies that having a better $Genre$ is more important than having a better $Media$, then we get:  $\{Fiction,Electronic\}\succ \{Nonfiction,Paper\}$.

\begin{definition}~\cite{Brafman2006TCP_net_JAIR} Let a pair of variables $X$ and $Y$ be mutually preferentially independent given $W=V-\{X,Y\}$.  We say that $X$ is more important than $Y$, denoted by $X\rhd Y$, iff for every assignment $w\in D(W)$ and for every $x_i,x_j\in D(X)$, $y_a,y_b\in D(Y)$, such that $x_i\succ x_j$ given $w$, we have that:  $x_iy_aw\succ x_jy_bw$.
\end{definition}

The relative importance of variables is closely related to the lexicographic preferences~\cite{Fishburn1974Lexicographic,Freuder2010Lexicographic}.  In lexicographic preferences, the relative importance order of variables is considered to be a total order, i.e., there exists a relative importance order between every pair of variables.

\subsection{CP-nets}

A \emph{Conditional Preference Network} (CP-net)~\cite{Boutilier2004_CPnetJAIR} graphically represents user's conditional preference statements using the notions of preferential independence and conditional preferential independence under the \emph{ceteris paribus} assumption.  A CP-net consists of a directed graph, in which, preferential dependencies over the set $V$ of variables are represented using directed arcs.  An arc $\vv{(X_i,X_j )}$ for $X_i,X_j\in V$ indicates that the preference orders over $D(X_j)$ depend on the actual value of $X_i$.  For each variable $X\in V$, there is a \emph{Conditional Preference Table} (CPT) that represents the preference orders over $D(X)$ for each $p\in D(Pa(X))$, where $Pa(X)$ is the set of $X$'s parents.  Note that, nothing prevents the CP-net to be cyclic, however an acyclic CP-net always represents a preference order (at least a partial order) over the outcomes while a cyclic CP-net does not guarantee it.

\begin{definition}~(CP-net)~\cite{Boutilier2004_CPnetJAIR} A Conditional Preference Network (CP-net) $N$ over variables $V=\{X_1,X_2,\cdots ,X_n \}$ is a directed graph over $X_1,X_2,\cdots ,X_n$ whose nodes are annotated with $CPT(X_i )$ for each $X_i\in V$.
\end{definition}

\begin{example} An acyclic CP-net with four variables, $A$, $B$, $C$ and $D$, is shown in Figure~\ref{label_cp_netfig}, where $D(A)=\{a_1,a_2\}$, $D(B)=\{b_1,b_2,b_3\}$, $D(C)=\{c_1,c_2\}$ and $D(D)=\{d_1,d_2\}$.  The preference order over $D(A)$  is $a_1\succ a_2$, and depends on no other variable.  The preferences over $D(B)$ depend on the actual value of $A$.  Given $A=a_1$, the preference order over $D(B)$ is $b_1\succ b_2\succ b_3$; while given $A=a_2$, the order is $b_3\succ b_2\succ b_1$.  The preferences over $D(C)$ depend on $B$.  Given $B=b_1$ or $B=b_3$, the user has the same preference $c_1\succ c_2$ on $D(C)$; while the preference is $c_2\succ c_1$ for $B=b_2$.  The variable $D$ is preferentially independent from the other variables.  The preference order on $D(D)$ is $d_1\succ d_2$. $\hfill \square$
\end{example}

\begin{figure}[htbp]
\centerline{\includegraphics[width=0.8\textwidth]{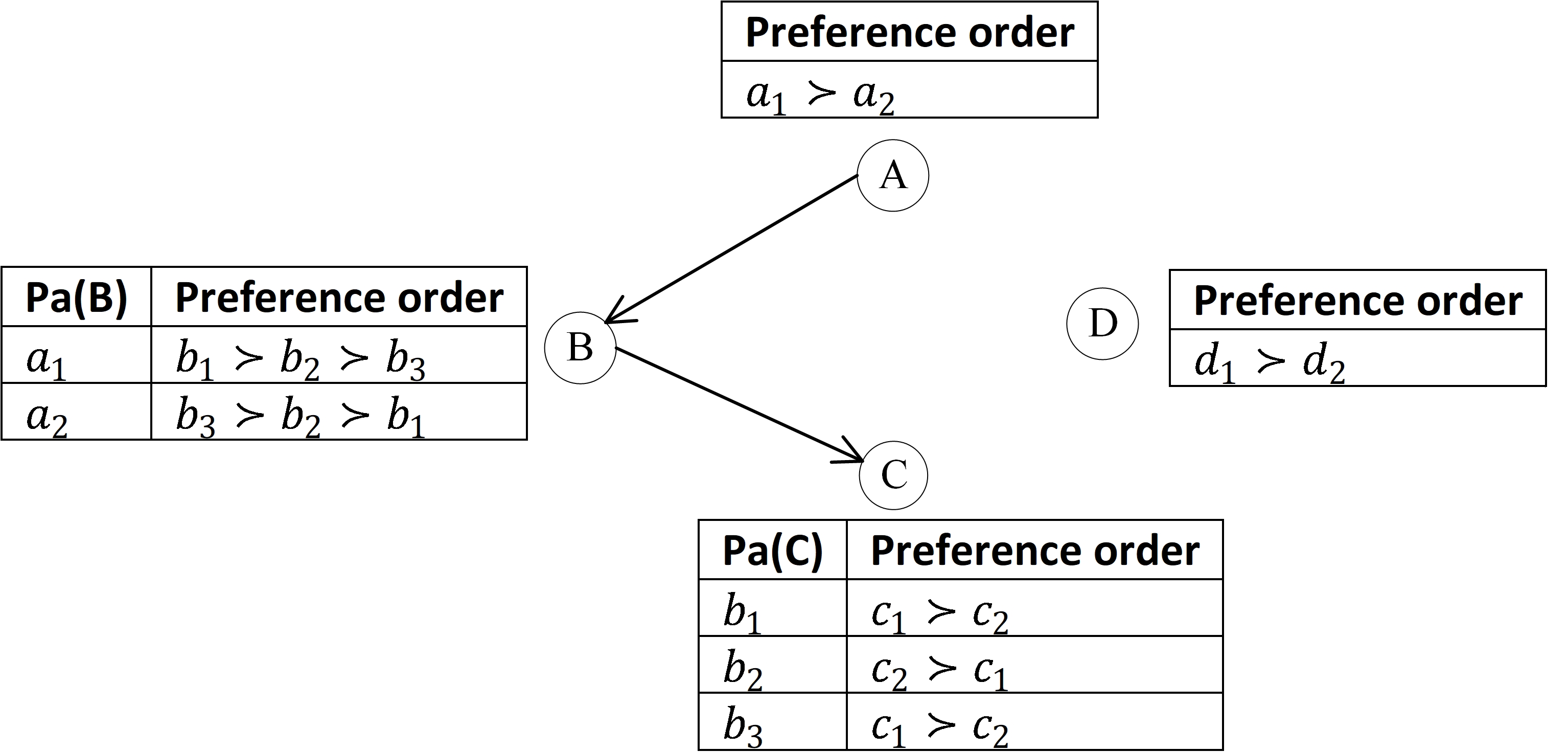}}
\caption{A CP-net.}
\label{label_cp_netfig}
\end{figure}

The semantics of a CP-net is defined in terms of the set of preference orders over the outcomes, which are consistent with the set of preferences imposed by the CPTs.  A preference order $\succ$ on the outcomes of a CP-net $N$ satisfies the CPT of a variable $X$, iff $\succ$ orders any two outcomes that differ only on the value of $X$ consistently with the preference order on $D(X)$ for each $p\in D(Pa(X))$.  $\succ$ satisfies $N$ iff $\succ$ satisfies each CPT of $N$.  If $o_1$ and $o_2$ are two outcomes of $N$, we say that $N$ entails $o_1\succ o_2$, written as $N\models o_1\succ o_2$, iff $o_1\succ o_2$ holds in every preference order that satisfies $N$. Similarly, $N\not \models o_1\succ o_2$ indicates that $o_1\succ o_2$ does not hold in every preference order that satisfies $N$. With respect to $N$, two outcomes $o_1$ and $o_2$ can stand in one of the three following options: $N\models o_1\succ o_2$; or $N\models o_2\succ o_1$; or $N\not\models o_1\succ o_2$ and $N\not \models o_2\succ o_1$. The third option, specifically, indicates that the network does not have enough information to compare $o_1$ and $o_2$. There are two types that two outcomes can be compared. First, determining if $N\models o_1\succ o_2$ holds or not, is called the dominance testing, which is generally a PSPACE-complete problem~\cite{Boutilier2004_CPnetJAIR,Goldsmith2008computational}.  On the other hand, an ordering query determines if an outcome $o_1$ is not preferable to another outcome $o_2$, i.e., $N\not\models o_1\succ o_2$. If $N\not\models o_1\succ o_2$ is true, we say that $o_2$ is consistently orderable over $o_1$, which we denote as $N\vdash_{oq} o_2\gg o_1$. This means that there exists at least a consistent preference order of $N$ in which $o_2\succ o_1$ holds. For an acyclic CP-net, at least one of the ordering queries ($o_1\gg o_2$ or $o_2\gg o_1$) between the outcomes $o_1$ and $o_2$ can be answered in linear time in the number of variables.

If two outcomes $o_1$ and $o_2$ of an acyclic CP-net $N$ differ only on the values of a variable $X$, and $X$ gives a better value to $o_1$ than $o_2$, we call that there exists an improving flip from $o_2$ to $o_1$.  This improving flip also implies $N\models o_1\succ o_2$.  A sequence of improving flips from one outcome to another outcome implies that the latter outcome is preferred to the former outcome. An improving flipping sequence from $o_2$ to $o_1$ is irreducible, if deleting any entry except $o_1$ and $o_2$ does not produce an improving flipping sequence. Let $F$ be the set of all irreducible improving sequences among outcomes.  We denote by $MaxFlip(X)$ the maximal number of times that a variable $X$ can change its value in any improving flipping sequence in $F$.

An acyclic CP-net can be satisfied by more than a preference order over the outcomes.  However, it is guaranteed to have a single optimal outcome of the CP-net which preferentially dominates every other outcome.  This optimal outcome can be obtained using the \emph{forward sweep} procedure~\cite{Boutilier2004_CPnetJAIR}, that takes linear time in the number of variables.  For example, using the forward sweep procedure, we can easily find the optimal outcome $a_1b_1c_1d_1$ for the CP-net of Figure~\ref{label_cp_netfig}.

\subsection{LP-trees}

In lexicographic preference order, we consider both variable and value orders. The variable order is based on the relative importance of the variables. A variable $X$ is more important than another variable $Y$ iff having a better value for $X$ is preferred to having a better value for $Y$.  The decision maker specifies a total order on the variables first, and then a total order on the values of each variable.  Here, both variable and value orders can depend on the actual value of more important variables.  To represent such conditional lexicographic preferences, the Lexicographic Preference Tree (LP-tree) has been proposed~\cite{Booth2010LPtree}.

\begin{definition}~\cite{Booth2010LPtree} A Lexicographic Preference tree (LP-tree) $L$ over the variable set $V$ is a tree such that the following statements are true.

\begin{enumerate}
  \item Every node is labelled with an attribute $X\in V$.  The function $An(X)$ indicates the set of ancestor nodes of node $X$, while $De(X)$ indicates the set of descendant nodes of node $X$.
  \item Every arc $\vv{(X,Y)}$ represents a relative importance relation, i.e., the parent node $X$ is more important than the child node $Y$ given the instantiation of $An(X)$.
  \item Every arc $\vv{(X,Y)}$ is labeled with at least a value of $X$, indicating that the preferences represented by the subtree of the child node $Y$, i.e., the subtree where $Y$ is the root node, hold given these values of $X$ and the instantiation of $An(X)$.  The subtree of $Y$ is unique to the subtrees of other children of $X$.
  \item Every node $X$ is labelled with a \emph{Conditional Preference Table} (CPT).  The $CPT(X)$ represents a preference order on $D(X)$ for every instantiation of $An(X)$.
\end{enumerate}
\end{definition}

\begin{example} Let a customer needing to choose his dinner has the following configuration:  he needs to choose meat ($a_1$) or fish ($a_2$) for the main course ($A$), vegetable soup ($b_1$) or fish soup ($b_2$) as soup ($B$), and red wine ($c_1$) or white wine ($c_2$) as drink ($C$).  The customer's preferences are as follows.

Meat ($a_1$) is preferred to fish ($a_2$) as the main course regardless on the preferences of soup and drink.  If he chooses meat ($a_1$) as the main course, then having a better soup is more important than having a better drink.  He prefers vegetable soup ($b_1$) to a fish soup ($b_2$), while preferences on drink are conditioned on the choice of soup.  If vegetable soup ($b_1$) is chosen, then he prefers red wine ($c_1$) to white wine ($c_2$); else he prefers white wine ($c_2$) to red wine ($c_1$). On the other hand, if he chooses fish ($a_2$) as the main course, then having a better drink is more important than having a better soup.  He prefers a red wine ($c_1$) to a white wine ($c_2$), and also he prefers a vegetable soup ($b_1$) to a fish soup ($b_2$). In this case, note that his preferences on soup and drink are mutually independent.

The above preferences are represented using the LP-tree in Figure~\ref{fig_my_dinner_lp_tree}. Each node represents a variable while each arc corresponds to a relative importance relation. Arc $\vv{(A,B)}$ indicates that $A$ is more important than $B$. Given $A=a_1$, arc $\vv{(B,C)}$ indicates that $B$ is more important than $C$; while for $A=a_2$, we have that $C$ is more important than $B$.  This representation explicitly indicates a relative importance relation. On the other hand, the CPT of a variable represents the local preferences on the variable.  For the branch $A=a_1$,  $CPT(C)$ represents the preferences on $D(C)$ which are conditioned on the actual value of $B$. $\hfill \square$
\label{example_my_dinner_lp_tree}
\end{example}

\begin{figure}
\centering
\includegraphics[width=0.4\textwidth]{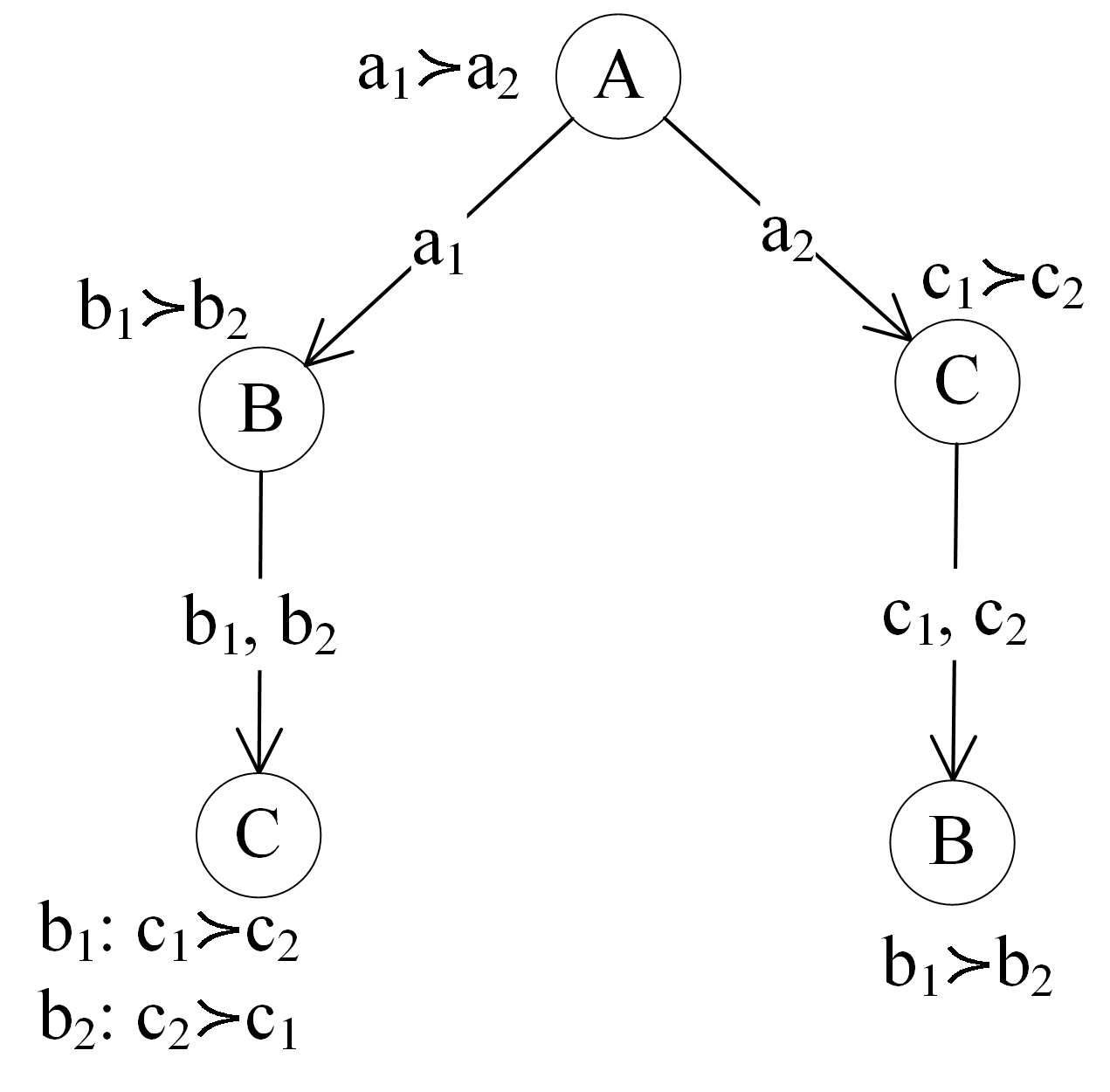}
\caption{LP-tree corresponding to Example~\ref{example_my_dinner_lp_tree}.}
\label{fig_my_dinner_lp_tree}
\end{figure}

The preference order between two outcomes with respect to an LP-tree is defined below.

\begin{definition}~\cite{Booth2010LPtree} Let $o_1$ and $o_2$ be two outcomes of an LP-tree $L$. Let $X$ be the node such that $An(X)$ gives the same value $R$ to both $o_1$ and $o_2$. Node $X$ gives two different values $x_1$ and $x_2$ to $o_1$ and $o_2$ correspondingly, and $X$ exists in the subtree corresponding to $An(X)=R$. Given the instantiation of $An(X)$, we get $o_1\succ o_2$ (denoted as $L\models o_1\succ o_2$) iff $CPT(X)$ represents $x_1\succ x_2$. Similarly, we get $L\models o_2\succ o_1$ iff $CPT(X)$ represents $x_2\succ x_1$.
\label{definition_lex_order}
\end{definition}

\begin{example}
In Figure~\ref{fig_my_dinner_lp_tree}, consider two outcomes $a_1 b_1 c_1$ and $a_1 b_2 c_2$.  In this case, $B$ is the node that gives two different values to the outcomes given that the ancestor variable $A$ gives the same value $a_1$ to both outcomes.  For $A=a_1$, $CPT(B)$ represents $b_1\succ b_2$.  Therefore, by Definition~\ref{definition_lex_order}, we get $a_1 b_1 c_1\succ a_1 b_2 c_2$. $\hfill \square$
\end{example}

Lemma~\ref{lemma_lex_total_order} is a direct consequence of Definition~\ref{definition_lex_order}, i.e., any two outcomes can be ordered by Definition~\ref{definition_lex_order}.

\begin{lemma}~\cite{Booth2010LPtree} Every LP-tree $L$ represents a total order over the outcomes.
\label{lemma_lex_total_order}
\end{lemma}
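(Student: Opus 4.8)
The plan is to verify the three defining properties of a strict total order for the relation $\succ$ given by Definition~\ref{definition_lex_order}: irreflexivity together with asymmetry, completeness (every pair of distinct outcomes is comparable), and transitivity. The first two are light, and indeed this is the content the remark preceding Lemma~\ref{lemma_lex_total_order} emphasizes; the real work is transitivity, which I would establish by induction on the structure of the tree.

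First I would make precise the object on which Definition~\ref{definition_lex_order} rests: for any two outcomes $o_1\neq o_2$, the decisive node $X$ whose ancestor set $An(X)$ agrees on both outcomes while $X$ itself splits them. To see that it exists and is unique, I would trace the unique root-to-leaf path dictated by the shared values. Starting at the root, as long as $o_1$ and $o_2$ assign the same value to the current node, item~3 of the LP-tree definition sends both outcomes into the same child subtree. Since $o_1\neq o_2$ and each variable labels a node on every root-to-leaf path of the LP-tree, this common descent must terminate at a first node $X$ whose assigned values $x_1,x_2$ differ; all of $An(X)$ agree by construction, so $X$ is exactly the node named in Definition~\ref{definition_lex_order}, and it is unique as the first divergence. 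Because $CPT(X)$ is a total order on $D(X)$ for the instantiation $An(X)=R$, exactly one of $x_1\succ x_2$ and $x_2\succ x_1$ holds, which yields completeness and asymmetry at once; irreflexivity is immediate, since equal outcomes admit no decisive node.

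For transitivity I would argue by induction on the height of the LP-tree (equivalently, on $|V|$), assuming $o_1\succ o_2$ and $o_2\succ o_3$ and deriving $o_1\succ o_3$. The inductive step is a case analysis on the values $v_1,v_2,v_3$ that $o_1,o_2,o_3$ assign to the root $r$. If all three coincide, the root decides no pair and all three comparisons are settled inside the single child subtree selected by that common value, which is itself an LP-tree on fewer variables, so the induction hypothesis applies directly. If the three values are distinct, then $r$ is the decisive node for every pair, and $o_1\succ o_2$, $o_2\succ o_3$ read as $v_1\succ_r v_2$, $v_2\succ_r v_3$ in the total order $CPT(r)$; transitivity of that domain order gives $v_1\succ_r v_3$, hence $o_1\succ o_3$. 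The remaining cases have exactly two of the $v_i$ equal, and here I would read the conclusion off at the root, checking that the configuration $v_1=v_3\neq v_2$ is ruled out: it would force both $v_1\succ_r v_2$ and $v_2\succ_r v_1$, contradicting asymmetry of $CPT(r)$, so the premises make it vacuous, while $v_1=v_2\neq v_3$ and $v_2=v_3\neq v_1$ both let $r$ decide $(o_1,o_3)$ in the correct direction.

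I expect transitivity to be the main obstacle, precisely because the LP-tree encodes \emph{conditional} importance: which variable is compared next depends on the values already fixed, so one cannot reason over a single global variable ordering as in the unconditional lexicographic case. Organizing the argument as an induction rooted at $r$ neutralizes this, since the conditioning only bites after all three outcomes commit to a common value at the root, and that is exactly the one case pushed into the subtree and discharged by the induction hypothesis.
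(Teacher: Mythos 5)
Your proof is correct, and it does genuinely more than the paper does. The paper gives no proof of Lemma~\ref{lemma_lex_total_order}: the result is imported from the cited work and justified only by the preceding one-line remark that any two outcomes can be ordered by Definition~\ref{definition_lex_order} --- which is exactly the comparability half of your argument (existence and uniqueness of the decisive node, plus totality of $CPT(X)$ at that node). Where you depart is in recognizing that transitivity and asymmetry are not literally immediate from Definition~\ref{definition_lex_order}: the decisive node changes from pair to pair, so a chain $o_1\succ o_2\succ o_3$ involves up to three different decisive nodes, and your induction at the root --- all three root values equal (push the chain into the common child subtree), all three distinct (use transitivity of $CPT(r)$), or exactly two equal (read the conclusion off at the root, with the configuration $v_1=v_3\neq v_2$ vacuous by asymmetry of $CPT(r)$) --- is precisely the argument that closes this gap. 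The one implicit premise you lean on, namely that every variable occurs on every root-to-leaf path so that the common descent must reach a divergence, is equally implicit in the paper's Definition~\ref{definition_lex_order}, so it is a fair assumption. In short, the paper buys brevity by deferring to \cite{Booth2010LPtree}; your version buys a self-contained, checkable proof at the cost of a short case analysis, and it is the version a reader could actually verify.
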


\begin{example} Figure~\ref{fig_my_dinner_lp_tree} represents the total order $a_1 b_1 c_1\succ a_1 b_1 c_2\succ a_1 b_2 c_2\succ a_1 b_2 c_1\succ a_2 c_1 b_1\succ a_2 c_1 b_2\succ a_2 c_2 b_1\succ a_2 c_2 b_2$. $\hfill \square$
\label{example_my_dinner_lp_tree_total_order}
\end{example}

The optimal outcome for an LP-tree is the one which is preferred to other outcomes.  In the example above, $a_1 b_1 c_1$ is the optimal outcome.

\subsection{Constraint satisfaction}

A hard constraint, or simply a constraint, specifies legal combinations of assignments of values to some of the variables. A Constraint Satisfaction Problem (CSP) consists of a set of variables $V$, their domains $D$, and a set of constraints $C$. A complete assignment on $V$ satisfies, or is consistent with, a constraint $c\in C$ on $U\subseteq V$ if the projection\footnote{Let $d_1d_2\cdots d_n$ be a complete assignment on variables $X_1X_2\cdots X_n$. The projection of $d_1d_2\cdots d_n$ on some subset of variables $X_{i_1}X_{i_2}\cdots X_{i_l}$ is $d_{i_1}d_{i_2}\cdots d_{i_l}$.} of the complete assignment on $U$ belongs to $c$. The complete assignment is consistent with the CSP if it is consistent with all constraints in $C$. In this case, we call the complete assignment a feasible outcome. A CSP is consistent if it has at least one feasible outcome. The tasks with respect to a CSP consists of the following questions: is the CSP consistent, finding a feasible outcome, or finding all feasible outcomes. On the other hand, a Constrained Optimization Problem (COP) is a CSP together with an objective function, which is subject to be minimized or maximized, such as we minimize a cost function and maximize a profit function. The tasks in COPs are to find one or all optimal outcome(s), where an optimal outcome is a feasible outcome that optimizes the objective function.

\begin{example} Consider three variables $A$, $B$, and $C$ with domains $D(A)=\{a_1,a_2\}$, $D(B)=\{b_1,b_2\}$, and $D(C)=\{c_1,c_2\}$. We also consider the constraints $\{A=a_1\leftrightarrow B=b_1\}$ and $\{A=a_2\leftrightarrow B=b_2\}$ on $AB$, and the constraints $\{A=a_1\leftrightarrow C=c_1\}$ and $\{A=a_2\leftrightarrow C=c_2\}$ on $AC$. The complete assignment $a_1b_1c_1$ satisfies the constraints on $AB$ as projection of $a_1b_1c_1$ on $AB$ is $a_1b_1$, which is legal with respect to $\{A=a_1\leftrightarrow B=b_1\}$. Similarly, $a_1b_1c_1$ satisfies the constraints on $AC$, and we say that $a_1b_1c_1$ satisfies the CSP. On the other hand, $a_1b_1c_2$ does not satisfy the constraints on $AC$ as projection of $a_1b_1c_2$ on $AC$ is $a_1c_2$, which is not legal with respect to the constraints on $AC$. $\hfill \square$
\end{example}

\section{Constrained CPR-nets}
\label{SEC_CON_CPR_NET}

In this section, first, we illustrate the implicit relative importance order of the variables induced by the dependency edges in CP-nets. Then, we provide a necessary and sufficient condition when a CP-net represents a total order over the outcomes. Second, we describe our CPR-net model that always represents a total order over the outcomes. Finally, we apply the model in constrained optimization.

\subsection{Dependency and ordering in CP-nets}
\label{section_dependency}

We identify two types of preferential dependencies (called partial dependency and total dependency) induced by a CP-net.  Then, we show that a total dependency between two variables also indicates a relative importance relation.  By using such relations, we provide a necessary and sufficient condition of a CP-net for its outcomes to be totally ordered.

\subsubsection{Partial dependency and total dependency}

An arc $\vv{(X,Y)}$ in a CP-net does not necessarily indicate that, for every $x\in D(X)$, there is a unique preference order over $D(Y)$, i.e., for two different values of $X$, $CPT(Y)$ can represent the same preference order. For example, let us consider arc $\vv{(B,C)}$ in Figure~\ref{label_cp_netfig}. $CPT(C)$ represents the same preference order $c_1\succ c_2$ for $B=b_1$ and $B=b_3$. In this regard, it is guaranteed by definition of preferential dependency that there are two partitions of $D(B)$, $\{b_1,b_3\}$ and $\{b_2\}$, such that $CPT(C)$ represents a unique preference order over $D(C)$, for each. When such a partition contains more than one value, the dependency does not hold for the values, i.e., the preference order is the same given the values. In this case, it is reasonable to argue that the preferences over $D(Y)$ partially depend on the values of $X$.

\begin{definition}~(Partial dependency arc)~A CP-net arc $\vv{(X,Y)}$ is a partial dependency arc, iff $CPT(Y)$ represents the same preference relation over any $y_1,y_2\in D(Y)$ for any two or more values of $X$, given any $p\in D(Pa(Y)-\{X\})$.
\end{definition}

\begin{example} Consider the CP-net of Figure~\ref{label_cp_netfig}. The arc $\vv{(B,C)}$ is a partial dependency arc, since for $B=b_1$ and $B=b_3$, we have the same preference order over $c_1$ and $c_2$.
\end{example}

\begin{definition}~(Totally dependent arc)~$\vv{(X,Y)}$ in a CP-net is a totally dependent arc, iff $CPT(Y)$ represents different preference relations over every $y_1,y_2\in D(Y)$ for every $x\in D(X)$ and $p\in D(Pa(Y)-\{X\})$.
\end{definition}

\begin{example} Consider arc $\vv{(A,B)}$ in Figure~\ref{label_cp_netfig}. $CPT(B)$ gives $b_1\succ b_2\succ b_3$ and $b_3\succ b_2\succ b_1$ for $A=a_1$ and $A=a_2$ correspondingly, which are unique. Therefore, arc $\vv{(A,B)}$ is totally dependent.
\end{example}

\begin{definition}~(Totally dependent CP-net)~A CP-net is called totally dependent iff every arc of the CP-net is totally dependent.
\end{definition}

\begin{figure}[t]
\centerline{\includegraphics[width=0.5\textwidth]{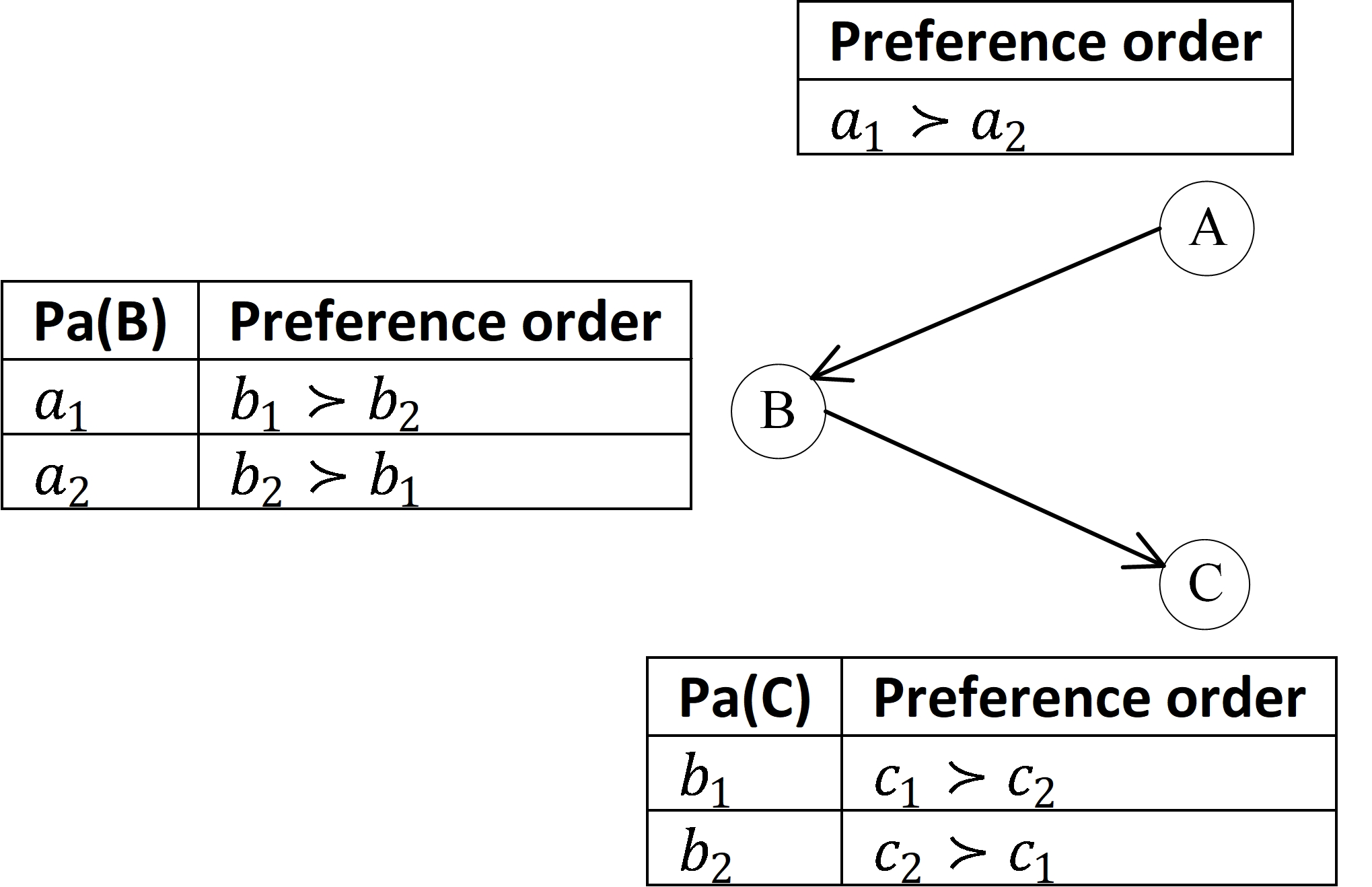}}
\caption{A totally dependent CP-net.}
\label{Label_totallydependentcpnet_fig}
\end{figure}

\begin{example} Consider the CP-net in Figure~\ref{Label_totallydependentcpnet_fig}, that is obtained by restricting the domain of $B$ from Figure~\ref{label_cp_netfig}, and deleting variable $D$. The CP-net is totally dependent given that every arc of this graph is totally dependent.
\end{example}

Note that the notion of CP-net is less restrictive than the notion of totally dependent CP-net. Indeed, due to the existence of partial dependency arcs in a general CP-net, some outcomes become incomparable. For example, in the CP-net of Figure~\ref{label_cp_netfig}, any outcome containing $a_1 b_1 c_2$ is incomparable to any other outcome containing $a_1 b_3 c_1$.  This is due to the existence of the partial dependency arc $\vv{(B,C)}$. On the other hand, solving Constrained CP-nets consists of finding the set of Pareto optimal outcomes, which requires dominance testing in addition to checking the feasibility of the outcomes. Generally, dominance testing is PSPACE-complete~\cite{Goldsmith2008computational}, and is needed between the considered feasible outcome and every element of the Pareto set, in the worst case. This is an expensive procedure knowing that the size of Pareto set can be exponential. It is obvious to see that we can overcome this challenge by reducing the Pareto set to one element and this can be achieved by imposing a total order on the outcomes when building the CP-net.  These obviously will give extra difficulty for the elicitation process, however it will also significantly improve the performance of the constrained optimization algorithms.  The rest of this section is based on totally dependent CP-nets, and we use CP-nets and totally dependent CP-nets interchangeably.

\subsubsection{Induced variable order}

In a CP-net, parent preferences have higher priority than children preferences~\cite{Boutilier2004_CPnetJAIR}. This induces a relative importance relation between the parent and the child, i.e., the parent is more important than the child. We formally explore this using the lemma below.

\begin{lemma} Let $N$ be a totally dependent acyclic CP-net on $V$.  $X\rhd Y$ is induced by $N$ if and only if $\vv{(X,Y)}$ exists in $N$.
\label{lemma_relative_importance_directed_arc}
\end{lemma}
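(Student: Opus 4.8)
The plan is to prove both directions through the improving‑flipping‑sequence characterization of entailment, which is available because $N$ is acyclic: for outcomes $o,o'$ we have $N\models o\succ o'$ iff there is an improving flipping sequence from $o'$ to $o$. Accordingly I read ``$X\rhd Y$ is induced by $N$'' as the outcome‑level condition that $N\models x_i y_a w\succ x_j y_b w$ for every $w\in D(V-\{X,Y\})$, every $y_a,y_b\in D(Y)$, and every $x_i,x_j\in D(X)$ with $x_i\succ x_j$ given $w$; this is exactly the consequence that the relative importance relation imposes on the order, and it is the sense in which the dependency structure of a CP‑net induces importance (the mutual preferential‑independence clause of the definition being relaxed here, since an arc $\vv{(X,Y)}$ makes $Y$ depend on $X$).

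For the ``if'' direction, assume $\vv{(X,Y)}$ is an arc of $N$, so $X\in Pa(Y)$ and, by acyclicity, $Y\notin Pa(X)$; in particular $Pa(X)\subseteq W:=V-\{X,Y\}$, so a flip of $X$ is governed only by values taken from $w$. Fix $w$, values $x_i\succ x_j$ given $w$, and arbitrary $y_a,y_b$. I would first reduce to a single hardest instance. Writing $y_i^{\bot}$ for the least‑preferred value of $Y$ under $CPT(Y)$ given $X=x_i$ (and $Pa(Y)-\{X\}$ from $w$) and $y_j^{\top}$ for the most‑preferred value given $X=x_j$, climbing the $CPT(Y)$ order gives $x_i y_a w\succeq x_i y_i^{\bot} w$ and $x_j y_j^{\top} w\succeq x_j y_b w$. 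Hence, by transitivity of $\succ$ combined with $\succeq$, it suffices to establish $N\models x_i y_i^{\bot} w\succ x_j y_j^{\top} w$. Because $\vv{(X,Y)}$ is totally dependent, the order that $CPT(Y)$ induces on $D(Y)$ under $x_i$ is the reverse of the one under $x_j$, so $y_i^{\bot}=y_j^{\top}=:y^{*}$; the single flip $x_j y^{*} w\to x_i y^{*} w$ is then improving (it flips only $X$, which is governed by $w$, and $x_i\succ x_j$ given $w$), giving the required entailment and hence $X\rhd Y$.

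For the ``only if'' direction I would argue the contrapositive: if $\vv{(X,Y)}\notin N$, then some admissible instance fails, so importance is not induced. Two cases arise in the acyclic graph. If $\vv{(Y,X)}\in N$, then the ``if'' direction with the roles reversed yields that $Y$ dominates $X$ in the order, which is incompatible with $N\models x_i y_a w\succ x_j y_b w$ holding for all $y_a,y_b$; choosing $y_b$ best and $y_a$ worst under the relevant contexts exhibits a failing instance. If $X$ and $Y$ are non‑adjacent, then $X\notin Pa(Y)$, so the order on $D(Y)$ depends only on variables from $W$; I would pick $y_b$ strictly preferred to $y_a$ under $w|_{Pa(Y)}$ and show $N\not\models x_i y_a w\succ x_j y_b w$, either by observing that any improving flipping sequence from $x_j y_b w$ would have to demote $Y$ from $y_b$ to $y_a$ under a fixed parent context (impossible), or, equivalently, by exhibiting a linear extension of $N$ that ranks $Y$'s improvement above $X$'s and hence places $x_j y_b w$ above $x_i y_a w$.

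The main obstacle is the step in the ``if'' direction that converts ``totally dependent'' into ``the two induced orders on $D(Y)$ are exact reverses,'' which underlies the identity $y_i^{\bot}=y_j^{\top}$. This is immediate when $|D(X)|=2$, since total dependency forces the single alternative order to flip every pair and hence to reverse the order; but for $|D(X)|>2$ one must check that the definition still guarantees, for the specific pair $x_i,x_j$, that the worst value under $x_i$ coincides with the best value under $x_j$. If it does not, the clean one‑flip argument must be replaced by producing a transition value $y^{*}$ with $y^{*}\succeq y_b$ under $x_j$ and $y_a\succeq y^{*}$ under $x_i$ (flip $Y$ up under $x_j$ to $y^{*}$, flip $X$, then flip $Y$ up under $x_i$ to $y_a$), whose existence would need separate justification from the total‑dependency hypothesis. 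A secondary delicate point is the non‑adjacent sub‑case of the converse, where controlling the value of $Pa(Y)$ along an arbitrary flipping sequence is cleanest to finesse through the linear‑extension construction rather than by direct sequence analysis.
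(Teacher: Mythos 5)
Your ``if'' direction is correct and is in substance the paper's own argument: the paper fixes a pair $y_1,y_2$, uses the complementarity of the two induced orders plus two \emph{ceteris paribus} flips of $X$ and transitivity to obtain $x_1y_1\succ x_2y_2$ and $x_1y_2\succ x_2y_1$, whereas you reduce to extreme values and route a three-segment flipping sequence through $y^{*}$; these are the same idea. The worry you flag about $|D(X)|>2$ dissolves under the reading of ``totally dependent'' that the paper itself uses: for \emph{every} pair $x_i,x_j$ and every pair $y_1,y_2$ the two relations differ, and since only two strict orders exist on a two-element set, ``differ'' means ``reversed''; hence the order on $D(Y)$ under $x_j$ is the exact reverse of the order under $x_i$, and $y_i^{\bot}=y_j^{\top}$ holds for any pair of $X$-values, not just in the binary case. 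Also, your separate treatment of the reversed-arc case $\vv{(Y,X)}\in N$ is \emph{more} careful than the paper, whose converse proof jumps from ``$\vv{(X,Y)}$ absent'' straight to ``$X$ and $Y$ mutually preferentially independent,'' which is wrong when $\vv{(Y,X)}$ is present.

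The genuine gap is in your non-adjacent subcase, and it is a real one (shared, as it happens, by the paper's own proof of this direction). Both of your justifications presuppose that $Y$'s parent context is fixed along any improving flipping sequence, or that some satisfying order ranks $Y$ above $X$; neither holds when $W$ mediates a dependency from $X$ to $Y$, i.e.\ when there is a directed path from $X$ to $Y$ through $W$. Take the paper's Figure~\ref{Label_totallydependentcpnet_fig} ($A\to B\to C$, all arcs totally dependent) with $X=A$, $Y=C$, $W=\{B\}$, $w=b_1$: your recipe picks $y_b=c_1\succ c_2=y_a$ given $b_1$ and asks you to show $N\not\models a_1b_1c_2\succ a_2b_1c_1$. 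But that entailment \emph{does} hold, via the improving sequence $a_2b_1c_1\to a_2b_2c_1\to a_2b_2c_2\to a_1b_2c_2\to a_1b_1c_2$ (the paper states this very entailment in its conclusion). Here $C$ is improved under the parent context $b_2$, not $b_1$, so your ``(impossible)'' step fails; and since the entailment holds, no linear extension of the kind you describe exists (every topological order puts $A$ before $C$ anyway). The lemma survives because the witnessing instance must be chosen carefully: take $w=b_2$, the forward-sweep completion of $a_2$; then $N\not\models a_1b_2c_1\succ a_2b_2c_2$, because the only improving flip out of $a_2b_2c_2$ is on $A$ and the outcomes reachable by improving flips, $\{a_1b_2c_2,\,a_1b_1c_2,\,a_1b_1c_1\}$, never include $a_1b_2c_1$. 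So a correct converse needs either this instance-specific construction, or the ordering-query criterion (the paper's Lemma~\ref{lemma_divide_conquer_no_return1}), which applies directly only when $X\notin An(Y)$. Your draft, like the paper's, leaves the case $X\in An(Y)$ open; note that the paper's blanket claim that all such cross pairs are incomparable is contradicted by its own conclusion-section example.
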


\begin{proof}~($\Rightarrow$)  Let us assume that $\vv{(X,Y)}$ does not exist in $N$.  Therefore, $X$ and $Y$ are mutually preferentially independent of $X$ given $W=V-\{X,Y\}$ (where $W$ might also be empty).  For every $x_1,x_2\in D(X)$, every $y_1,y_2\in D(Y)$ and every $w\in D(W)$ such that $x_1\succ x_2$ and $y_1\succ y_2$, we have that $wx_1 y_2$ and $wx_2 y_1$ are not preferentially comparable.  Therefore, neither $X$ is more important than $Y$ nor $Y$ is more important than $X$.

($\Leftarrow$) Let us consider that $\vv{(X,Y)}$ exists in $N$.  Since $\vv{(X,Y)}$ is a totally dependent arc, for every $x_1,x_2\in D(X)$ and every $y_1,y_2\in D(Y)$ such that $x_1\succ x_2$, $CPT(Y)$ represents two complimentary preference relations over $y_1,y_2$ for $x_1$ and $x_2$ correspondingly.  Let, these be as follows:

\hangindent=.5cm \hangafter=1 $(x_1:y_1\succ y_2)\land (x_2:y_2\succ y_1)$

\noindent $\Rightarrow (x_1 y_1\succ x_1 y_2)\land (x_2 y_2\succ x_2 y_1)$

\noindent $\Rightarrow$ $(x_1 y_1\succ x_1 y_2)\land (x_1 y_2\succ x_2 y_2)\land (x_2 y_2\succ x_2 y_1)$ [Since $X$ is independent of $Y$ and we have $x_1\succ x_2$]

\noindent $\Rightarrow ((x_1 y_1\succ x_1 y_2)\land (x_1 y_2\succ x_2 y_2))\land ((x_1 y_2\succ x_2 y_2)\land (x_2 y_2\succ x_2 y_1))$

\noindent $\Rightarrow (x_1 y_1\succ x_2 y_2 )\land (x_1 y_2\succ x_2 y_1 )$  [By transitivity]

\noindent $\Rightarrow X\rhd Y $.
\end{proof}

We use relative importance relation and variable order interchangeably.  If a given acyclic CP-net $N$ induces the variable order $X\rhd Y$, we denote it as $N\models X\rhd Y$.  By using the directed acyclic graph of the CP-net, we can easily find all induced variable orders.  Note that the induced variable order is not necessarily transitive.

\begin{lemma} Given a totally dependent acyclic CP-net $N$, $N\models X\rhd Y$ and $N\models Y\rhd Z$ do not always imply $N\models X\rhd Z$.
\end{lemma}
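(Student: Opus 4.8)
The plan is to establish this negative statement by exhibiting a single counterexample, since the phrase ``do not always imply'' requires only one instance in which the implication fails. The observation that drives the whole argument is Lemma~\ref{lemma_relative_importance_directed_arc}: for a totally dependent acyclic CP-net, $N\models X\rhd Y$ holds \emph{if and only if} the arc $\vv{(X,Y)}$ is present in $N$. This collapses the question of transitivity of $\rhd$ into a purely graph-theoretic one, namely whether the presence of both $\vv{(X,Y)}$ and $\vv{(Y,Z)}$ forces the shortcut arc $\vv{(X,Z)}$ to be present as well. A directed path of length two carries no such guarantee, so all I need is a totally dependent acyclic CP-net that contains a two-arc chain but omits the direct arc.

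Concretely, I would take the three-variable chain $A\to B\to C$ --- precisely the CP-net already displayed in Figure~\ref{Label_totallydependentcpnet_fig} --- with binary domains and with CPTs chosen so that each child reverses its preference order when the parent flips value (for instance $a_1:b_1\succ b_2$ together with $a_2:b_2\succ b_1$, and analogously for $C$ conditioned on $B$). I would then verify the two facts the argument needs: that the network is acyclic and totally dependent, so that Lemma~\ref{lemma_relative_importance_directed_arc} applies to it, and that it contains the arcs $\vv{(A,B)}$ and $\vv{(B,C)}$ while lacking $\vv{(A,C)}$. Applying the lemma three times then finishes the proof: $\vv{(A,B)}$ gives $N\models A\rhd B$, $\vv{(B,C)}$ gives $N\models B\rhd C$, and the ``only if'' direction applied to the absent arc $\vv{(A,C)}$ gives $N\not\models A\rhd C$. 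Thus $A$ and $C$ witness a failure of transitivity.

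There is no substantive obstacle here beyond bookkeeping; the one step that deserves care is confirming that the chosen CPTs make \emph{every} arc totally dependent, which reduces to checking that each child's induced preference order on its two values genuinely differs across the two parent values. It is worth emphasizing that, because the lemma's equivalence is exact, the mere non-existence of the arc $\vv{(A,C)}$ certifies $N\not\models A\rhd C$ outright --- there is no back-door route by which $A$ could still be induced to be more important than $C$ --- so the counterexample is airtight rather than merely suggestive.
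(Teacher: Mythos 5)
Your proof is correct and follows essentially the same route as the paper: both arguments reduce the question to arc existence via Lemma~\ref{lemma_relative_importance_directed_arc} and observe that a two-arc chain lacking the shortcut arc defeats transitivity. If anything, your version is slightly tighter, since you exhibit the concrete totally dependent chain of Figure~\ref{Label_totallydependentcpnet_fig} (arcs $\vv{(A,B)}$ and $\vv{(B,C)}$, no arc $\vv{(A,C)}$) as an explicit counterexample, whereas the paper argues abstractly by splitting into the two cases of whether $\vv{(X,Z)}$ exists.
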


\begin{proof} We can prove that by counter example when considering the following two cases.  First, $\vv{(X,Z)}$ exists in $N$.  Using Lemma~\ref{lemma_relative_importance_directed_arc}, $X\rhd Z$ holds, when $N\models X\rhd Y$ and $N\models Y\rhd Z$ hold.  The transitivity is true in this case.  Second, $\vv{(X,Z)}$ does not exist in $N$.  Using Lemma~\ref{lemma_relative_importance_directed_arc}, $X\rhd Z$ does not hold, while $N\models X\rhd Y$ and $N\models Y\rhd Z$ hold. The transitivity is not true in this case.
\end{proof}

\subsubsection{Outcome order}

An acyclic CP-net always represents a partial order of the preferences over the outcomes, i.e., the order is asymmetric and transitive.  In the following, we give a necessary and sufficient condition for the CP-net to be totally ordered, i.e., every two outcomes are preferentially comparable.

\begin{theorem} An acyclic CP-net represents a total order over the outcomes, if and only if the induced variable order is complete.
\label{Ch3OrderingTh1}
\end{theorem}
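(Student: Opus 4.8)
The plan is to prove both directions after first converting the hypothesis into a clean structural statement. By Lemma~\ref{lemma_relative_importance_directed_arc}, the induced variable order is complete exactly when every pair of variables is joined by an arc, i.e.\ the underlying undirected graph of $N$ is complete. Since $N$ is acyclic, its arcs then form a transitive tournament, which is precisely a linear importance order; after relabelling I may assume $X_1\rhd X_2\rhd\cdots\rhd X_n$ with $Pa(X_i)=\{X_1,\dots,X_{i-1}\}$. Thus the theorem reduces to showing that the outcomes are totally ordered if and only if the arcs induce such a linear order on the variables. I will use throughout the standard characterization that, for an acyclic CP-net, $N\models o_1\succ o_2$ holds exactly when there is an improving flipping sequence from $o_2$ to $o_1$.

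For the ($\Rightarrow$) direction I would argue the contrapositive. If the induced order is not complete, then by Lemma~\ref{lemma_relative_importance_directed_arc} some pair $X,Y$ has no arc, so $X$ and $Y$ are mutually preferentially independent. Fixing every other variable to a common assignment $w$ and choosing $x_1\succ x_2$ and $y_1\succ y_2$ in these contexts, the two outcomes $wx_1y_2$ and $wx_2y_1$ are preferentially incomparable: from either one the only improving flip inside the $\{X,Y\}$-subspace leads to the common local optimum $wx_1y_1$, whereas reaching the other outcome would require a worsening flip of $X$ or $Y$ whose context is pinned by $w$. This is the same phenomenon already noted for partial-dependency arcs in the discussion preceding the theorem, so $N$ is not totally ordered.

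The substantial direction is ($\Leftarrow$). Here I would first observe that the lexicographic order $\succ_{lex}$ determined by the linear variable order and the CPTs (compare two outcomes at their most important differing variable, using that variable's CPT under the common, hence agreeing, values of its ancestors) is a total order and, moreover, satisfies $N$: two outcomes differing only in $X$ agree on all more important variables, so $\succ_{lex}$ ranks them precisely by $CPT(X)$ in the relevant context. A one-line check shows that every improving flip strictly raises the $\succ_{lex}$-rank, since the flipped variable improves while all its more important variables are untouched; hence $N\models o_1\succ o_2$ already forces $o_1\succ_{lex}o_2$. What remains, and what is the crux of the proof, is the converse: that $o_1\succ_{lex}o_2$ implies an improving flipping sequence from $o_2$ to $o_1$.

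To prove this converse I would induct on $n$ with the claim that $o_1\succ_{lex}o_2$ implies $N\models o_1\succ o_2$. After instantiating the agreeing top variables up to the first differing one (which preserves completeness and total dependency and lets improving sequences lift back to $N$), I may assume the first difference occurs at the top variable $Z=X_1$, writing $o_1=z^1s_1$, $o_2=z^2s_2$ with $z^1\succ z^2$. If $s_1=s_2$, a single improving flip of $Z$ finishes. Otherwise let $X_j$ be the most important variable on which the suffixes differ; because the arc $\vv{(Z,X_j)}$ is totally dependent, the order induced by $CPT(X_j)$ is \emph{reversed} between $Z=z^1$ and $Z=z^2$ (its other, agreeing, parents held fixed), so $s_1$ is $\succ_{lex}$-above $s_2$ in exactly one of the restricted nets $N|_{z^1}$, $N|_{z^2}$. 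If it is $N|_{z^1}$, I flip $Z$ first, $z^2s_2\to z^1s_2$, and then splice in the improving sequence $s_2\to s_1$ furnished by the induction hypothesis on the $(n-1)$-variable net $N|_{z^1}$; if it is $N|_{z^2}$, I run that inductive sequence first inside $N|_{z^2}$ and flip $Z$ last. Either way I obtain an improving flipping sequence from $o_2$ to $o_1$. This dichotomy, that total dependency always makes at least one of ``flip the top variable first'' or ``flip it last'' admissible, is exactly where total dependency is indispensable and is the main obstacle to get right; once it is in place, completeness of the outcome order follows because $\succ_{lex}$ is total.
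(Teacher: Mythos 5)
Your ($\Leftarrow$) direction is correct and takes a genuinely different route from the paper. The paper's proof selects the unique most-important variable $X$ among the set $U$ of differing variables, notes its parents agree on both outcomes, and directly declares the outcomes comparable via $CPT(X)$ --- implicitly treating the pairwise relation $X\rhd Y$ (which is defined with \emph{all} of $V-\{X,Y\}$ held fixed) as if it settled comparisons between outcomes differing on many variables at once. You instead identify the entailed order with the lexicographic order $\succ_{lex}$: improving flips raise lex-rank, so dominance implies $\succ_{lex}$, and conversely you construct an explicit improving flipping sequence by induction on $n$, using total dependency to show that the suffix can be improved either before or after flipping the top differing variable. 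This is longer than the paper's argument but actually closes the gap the paper glosses over, and it isolates precisely where total dependency is indispensable.

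Your ($\Rightarrow$) direction, however, contains a step that fails. You fix an \emph{arbitrary} assignment $w$ of $V-\{X,Y\}$ and claim $wx_1y_2$ and $wx_2y_1$ are incomparable because reaching one from the other ``would require a worsening flip of $X$ or $Y$ whose context is pinned by $w$.'' The context is not pinned: an improving sequence may flip variables of $w$ away and later flip them back, and in a totally dependent CP-net such intermediate flips reverse the local preferences of $X$ or $Y$. Concretely, take $V=\{X,Z,Y\}$ with arcs $X\rightarrow Z$ and $Z\rightarrow Y$ (no arc between $X$ and $Y$, so the induced variable order is incomplete), $CPT(X): x_1\succ x_2$; $CPT(Z): x_1: z_1\succ z_2$, $x_2: z_2\succ z_1$; $CPT(Y): z_1: y_1\succ y_2$, $z_2: y_2\succ y_1$. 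With $w=z_1$ (so $y_1\succ y_2$ given $w$), your pair is $x_1z_1y_2$ versus $x_2z_1y_1$, and
$x_2z_1y_1 \rightarrow x_2z_2y_1 \rightarrow x_2z_2y_2 \rightarrow x_1z_2y_2 \rightarrow x_1z_1y_2$
is an improving flipping sequence, so the two outcomes \emph{are} comparable. The existential statement you actually need (some incomparable pair exists) remains true --- in this example $w=z_2$ works --- but your argument does not deliver it; one must either choose $w$ judiciously (and then prove incomparability against sequences that leave the $\{X,Y\}$-subspace) or argue instead by exhibiting two distinct total orders satisfying $N$ that disagree on a pair. In fairness, the paper's own ($\Rightarrow$) proof (and its proof of Lemma~\ref{lemma_relative_importance_directed_arc}) asserts the same incomparability with no justification, so it is vulnerable to the same counterexample; your version is simply explicit enough to be falsified, and this is the step that needs repair.
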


\begin{proof} ($\Rightarrow$)  We prove this by contradiction.  Let $N$ be a CP-net on $V$ which represents a total order over the outcomes and there exists no induced relative importance relation between any pair of two variables $X$ and $Y$. Using Lemma~\ref{lemma_relative_importance_directed_arc}, there is no arc between $X$ and $Y$.  Therefore, $X$ and $Y$ are preferentially independent given $w\in D(V-\{X,Y\})$.  Now, since there is no relative importance relation between $X$ and $Y$, there exist some $x_1,x_2\in D(X)$ and some $y_1,y_2\in D(Y)$ such that $x_1 y_2w$ and $x_2 y_1w$ are not comparable given that $x_1\succ x_2$ and $y_1\succ y_2$. Since these two outcomes are not comparable, the preference order over the outcomes cannot be complete.  This contradicts with the total order property.

($\Leftarrow$) Let $N$ be a CP-net on $V$ such that the CP-net induced variable order is complete.  Let $o_i$ and $o_j$ be any two outcomes of $N$, which differ on the values of $U\subseteq V$.  The variable order on $U$ is also complete, i.e., for every $X_1,X_2\in U$, either $X_1\rhd X_2$ or $X_2\rhd X_1$.  Given this completeness and the acyclicity property of the CP-net, there is one and only one variable $X\in U$ such that $X\rhd Y$ for every $Y\in U-\{X\}$, and this variable is selected.  Every parent of $X$ gives the same value to $o_i$ and $o_j$; otherwise the parent will exist in $U$ and the parent will be selected instead of $X$. Given the parents' value, $CPT(X)$ represents a preference order on $D(X)$. If X gives $x_i$ and $x_j$ to $o_i$ and $o_j$ correspondingly, we have $o_i\succ o_j$ if $x_i\succ x_j$ or $o_j\succ o_i$ if $x_j\succ x_i$.  Therefore, $o_i$ and $o_j$ are comparable.
\end{proof}

\subsection{Proposed model: The CPR-net}
\label{section_cpr_net}

The condition of Theorem~\ref{Ch3OrderingTh1} immediately motivates us to simply extend the CP-net model such that it always represents a total order over the outcomes.  This is achieved by eliciting Additional Relative Importance (ARI) statements between every pair of variables in which the CP-net does not induce a relative importance relation.  We call such a pair of variables a Non-Ordered Pair (NOP).  The extended model is formally defined below.

\begin{definition}~(CPR-net)~A CP-net with ARI statements (CPR-net) $N_r$ is an acyclic CP-net $N$ extended to the following:  for every NOP $\langle X,Y \rangle$ in $N$, we have either ARI $X\rhd Y$ or ARI $Y\rhd X$ in $N_r$, which is denoted as a dashed directed arc, $\vv{\langle X,Y\rangle}$ for $X\rhd Y$ and $\vv{\langle Y,X\rangle}$ for $Y\rhd X$, in the directed graph.
\end{definition}

Note that a CPR-net, excluding the ARI statements, is simply an acyclic CP-net.  This guarantees that the CPR-net preserves the conditional preference statements under the \emph{ceteris paribus} assumption described by the acyclic CP-net.  The construction procedure of the CPR-net is adopted from the construction procedure of the acyclic CP-net~\cite{Boutilier2004_CPnetJAIR}. After we construct the acyclic CP-net, we identify all the NOPs using the directed acyclic graph.  Then, we ask the user to provide an ARI statement for each of the NOPs.  For every ARI statement, we add the corresponding dashed directed arc with the CP-net graph.  Note that, nothing prevents the CPR-net to be cyclic, although the corresponding CP-net is acyclic.  However, in this paper, we consider only acyclic CPR-nets, while we leave cyclic CPR-nets for a future study.

\begin{example} In the CP-net of Figure~\ref{Label_totallydependentcpnet_fig}, $\langle A,C\rangle$ is a NOP.  We can easily extend this CP-net by adding the ARI statement $A\rhd C$ or $C\rhd A$.  We illustrate the CPR-net in Figure~\ref{Ch3CPRnet_fig} that indicates that the ARI $A\rhd C$ is represented by the dashed directed arc $\vv{\langle A,C\rangle}$.  This is an acyclic CPR-net. However, instead of the ARI $A\rhd C$, if we consider the ARI $C\rhd A$ and add the dashed directed arc $\vv{\langle C,A\rangle}$, we will get a cyclic CPR-net.
\end{example}

\begin{figure}[t]
\centerline{\includegraphics[width=0.5\textwidth]{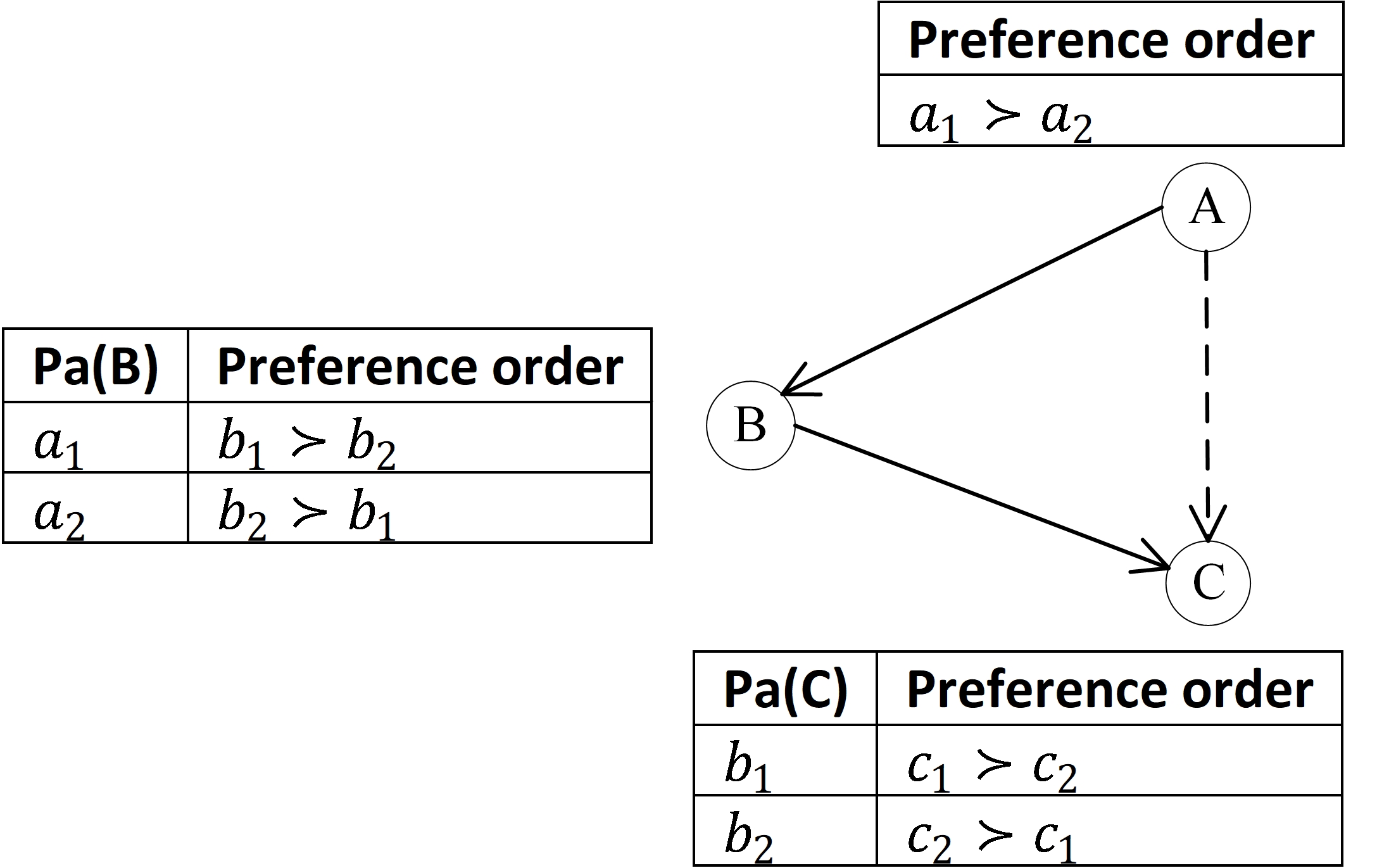}}
\caption{A CP-net with Additional Relative Importance Statements.}
\label{Ch3CPRnet_fig}
\end{figure}

The semantics of a CPR-net is defined below.  We use $\succ_p^X$ to denote the preference relation over $D(X)$ given $p\in D(Pa(X))$.

\begin{definition}~Consider a CPR-net $N_r$ on $V$.  Let $W=V-(\{X\}\cup Pa(X))$ and let $p\in D(Pa(X))$.  A preference order $\succ$ over $D(V)$ satisfies $\succ_p^X$ iff we have $x_ipw\succ x_jpw$, for every $w\in D(W)$, whenever $x_i \succ_p^X x_j$ holds.  The preference order $\succ$ satisfies $CPT(X)$ iff it satisfies $\succ_p^X$ for every $p\in D(Pa(X))$. The preference order $\succ$ satisfies an ARI $X\rhd Y$ iff for every $x_i,x_j\in D(X)$, every $y_a,y_b\in D(Y)$ and every $z\in D(Z)$ such that $Z=V-\{X,Y\}$, $\succ$ gives $x_iy_az\succ x_j y_b z$ whenever $x_i \succ_z^X x_j$ holds.  The preference order $\succ$ satisfies the CPR-net $N_r$ iff it satisfies every CPT and every ARI of $N_r$.
\end{definition}

\begin{example} Let the preference order $\succ$ over the outcomes of the CPR-net in Figure~\ref{Ch3CPRnet_fig} be $a_1 b_1 c_1\succ a_1 b_1 c_2\succ a_1 b_2 c_2\succ a_1 b_2 c_1\succ a_2 b_2 c_2\succ a_2 b_2 c_1\succ a_2 b_1 c_1\succ a_2 b_1 c_2$.  In $\succ$, between any two outcomes those differ on only the value of $A$, the outcome containing $a_1$ is preferred to the outcome containing $a_2$.  So, $\succ$ satisfies the preference order $a_1\succ a_2$ and thus the $CPT(A)$.  Given $A=a_1$ in $\succ$, between any two outcomes those differ on only the value of $B$, the outcome containing $b_1$ is preferred to the outcome containing $b_2$; and given $A=a_2$, between any two outcomes those differ on only the value of $B$, the outcome containing $b_2$ is preferred to the outcome containing $b_1$.  Therefore, $\succ$ satisfies $a_1:b_1\succ b_2$ and $a_2:b_2\succ b_1$; and thus $\succ$ satisfies the $CPT(B)$.  Similarly, we can show that $\succ$ satisfies the $CPT(C)$.  On the other hand, $\succ$ gives $a_1 b_1 c_2\succ a_2 b_1 c_1$ and $a_1 b_2 c_1\succ a_2 b_2 c_2$; which indicates that $\succ$ satisfies the ARI $A\rhd C$.  Therefore, we say that $\succ$ satisfies the CPR-net. $\hfill \square$
\end{example}

Surprisingly, we find that an acyclic CPR-net always represents a total order of the preferences over the outcomes.  This is explored using the theorem below.

\begin{theorem} There is one and only one preference order over the outcomes of an acyclic CPR-net, which satisfies the acyclic CPR-net.
\end{theorem}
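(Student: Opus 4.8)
The plan is to prove the two halves of the statement separately: \emph{existence} of at least one preference order satisfying the acyclic CPR-net, and \emph{uniqueness} (no two distinct satisfying orders). The structural fact that drives both is that the CPR-net, by construction, completes the induced variable-importance relation. Every pair of variables is ordered either by a solid dependency arc, which yields $X\rhd Y$ through Lemma~\ref{lemma_relative_importance_directed_arc} (we are in the totally dependent setting), or by an ARI statement attached to the corresponding NOP. Hence every pair of variables carries a directed edge (solid or dashed) in the CPR-net graph, and since that graph is acyclic it is a complete acyclic relation on a finite set, i.e. a transitive tournament and therefore a single strict total order of the variables. I would record this first, because it reduces the importance structure of an acyclic CPR-net to one global linear order $\pi$ of $V$, rather than the merely pairwise, non-transitive relation of Lemma~\ref{lemma_relative_importance_directed_arc}.

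Next I would fix the comparison rule and reduce to the lexicographic machinery already available. For two distinct outcomes $o_1,o_2$, let $U$ be the (nonempty) set of variables on which they disagree, and let $X$ be the $\pi$-maximal element of $U$, i.e. the unique variable with $X\rhd Y$ for every $Y\in U-\{X\}$. Because each such $X\rhd Y$ is a directed edge, $X$ precedes every other member of $U$ in $\pi$; in particular every parent in $Pa(X)$ lies outside $U$ and assigns a common value $p\in D(Pa(X))$ to both outcomes, so $CPT(X)$ under $p$ orders the two values $x_1,x_2$ that $X$ gives to $o_1,o_2$. Declaring $o_1\succ o_2$ exactly when $x_1\succ_p^X x_2$ makes $\pi$ together with the conditional tables behave like a degenerate LP-tree, namely a single importance order with value tables conditioned on ancestors. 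Consequently the resulting relation is irreflexive, asymmetric, transitive and complete — a genuine total order — by Lemma~\ref{lemma_lex_total_order} and Definition~\ref{definition_lex_order}, which settles existence of a total order once I verify it actually satisfies the CPR-net. That verification is routine: two outcomes differing only in $X$ have $X$ as their most important differing variable, so the order respects every $CPT(X)$; and for an ARI $X\rhd Y$, since $X$ precedes $Y$ in $\pi$ and the surrounding assignment $z$ is common, the first difference is at $X$, so the order gives $x_iy_az\succ x_jy_bz$ whenever $x_i\succ_z^X x_j$.

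For uniqueness I would show that any preference order $\succ'$ satisfying $N_r$ is \emph{forced} to agree with the rule above, which is the part I expect to be the main obstacle. The easy case, mirroring the $(\Leftarrow)$ direction of Theorem~\ref{Ch3OrderingTh1} and Lemma~\ref{lemma_relative_importance_directed_arc}, is when $o_1$ and $o_2$ differ on few variables: a single ceteris paribus flip of $X$ (licensed by $\succ_p^X$) or a single application of the ARI satisfaction condition pins the direction. The difficulty is when $U$ is large and mixes dependency arcs with ARI edges: the ARI and CPT conditions are stated \emph{pairwise} (with all but one or two variables held fixed), whereas the comparison of $o_1,o_2$ must be decided by $X$ alone even though many lower-importance variables also differ and may individually favor $o_2$. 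I would therefore argue that the local constraints cohere into the global lexicographic verdict by an induction along $\pi$: peel off $X$ via one improving flip to an intermediate outcome agreeing with $o_1$ through $X$, and then recurse on the strictly shorter suffix, assembling a chain of entailed preferences whose composition yields $o_1\succ' o_2$ and excludes $o_2\succ' o_1$.

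Carrying out that induction cleanly is exactly where the proof must be careful, since naive single-variable chains can leave the two outcomes uncomparable; the reduction to the LP-tree reading of $\pi$ is what I would lean on to guarantee transitivity and coherence rather than re-deriving it edge by edge. Assembling these pieces — completeness of the variable order by construction, the total order supplied by the LP-tree reduction for existence, and the forced most-important-differing-variable comparison for uniqueness — gives both that a satisfying order exists and that it is unique, completing the theorem.
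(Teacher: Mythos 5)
Your existence half is correct and is essentially the paper's own argument: the paper likewise observes that the CPR-net construction makes the variable-importance relation complete and then invokes Theorem~\ref{Ch3OrderingTh1}, whose $(\Leftarrow)$ direction is exactly your most-important-differing-variable rule. Your additions (acyclic complete tournament $\Rightarrow$ a single linear order $\pi$ of the variables, the reduction to a path-shaped LP-tree, and the explicit check that the resulting order satisfies every CPT and every ARI) merely make explicit what the paper leaves implicit. Where you genuinely attempt more than the paper is uniqueness: the paper's proof ends with the bare assertion ``The total order is also unique,'' whereas you try to derive it by forcing. That is exactly where your argument has a gap.

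The peeling induction does not go through. Write $o_1=wx_1s_1$ and $o_2=wx_2s_2$, where $X$ is the most important differing variable, $x_1\succ_p^X x_2$, $w$ is the common assignment to the more important variables, and $s_1,s_2$ are the suffixes. Your intermediate outcome $o_3=wx_1s_2$ does give the forced preference $o_3\succ' o_2$ (a single licensed flip), but the recursion on the suffix settles $o_1$ versus $o_3$ in whichever direction the \emph{suffix} dictates; if $s_2$ is lexicographically better than $s_1$ you obtain $o_3\succ' o_1$ and $o_3\succ' o_2$, from which transitivity yields nothing, and the symmetric intermediate $wx_2s_1$ fails the same way. This cannot be repaired by ``leaning on the LP-tree reading,'' because Lemma~\ref{lemma_lex_total_order} only describes the order you constructed, not an arbitrary order $\succ'$ satisfying the CPR-net. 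Indeed, the satisfaction conditions for CPTs and ARIs are pairwise (all of $V-\{X,Y\}$, respectively all of $V-(\{X\}\cup Pa(X))$, is held fixed), and their transitive closure genuinely fails to force the lexicographic comparison once two outcomes differ on three or more variables in mixed directions. Concretely, take three mutually independent binary variables with ARIs $X\rhd Y$, $X\rhd Z$, $Y\rhd Z$ and preferences $x_1\succ x_2$, $y_1\succ y_2$, $z_1\succ z_2$: the set of forced pairs places below $x_1y_2z_2$ only the outcomes $x_2y_1z_2$, $x_2y_2z_1$, $x_2y_2z_2$, so no chain of forced preferences ever yields $x_1y_2z_2\succ x_2y_1z_1$, and the minimal satisfying order leaves this pair incomparable, admitting total extensions on either side. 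So uniqueness simply cannot be obtained by chaining the stated satisfaction conditions; any complete proof would have to strengthen the semantics (e.g., build the lexicographic reading into what ``satisfies'' means), which is in effect what the paper does tacitly when it asserts uniqueness after citing Theorem~\ref{Ch3OrderingTh1}.
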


\begin{proof} Let the CPR-net $N_r$ be formed from the acyclic CP-net $N$ such that, for every NOP $\langle X,Y\rangle$ in N, we have either ARI $X\rhd Y$ or ARI $Y\rhd X$ in $N_r$.  This indicates that, for every pair of variables in $N_r$, we have a relative importance relation, either induced by $N$ or an ARI statement.  Using Theorem~\ref{Ch3OrderingTh1}, we have a preference order over the outcomes, which is a total order. The total order is also unique.
\end{proof}

\subsection{Constrained optimization with CPR-nets: the Constrained CPR-net model}
\label{section_constraint}

The main advantage of the CPR-net is when constraints are involved, which will lead to a constrained optimization problem. In this regard, we extend the CPR-net to a set of hard constraints, and we call the new model the Constrained CPR-net. Since an acyclic CPR-net represents a total order over the outcomes, the Constrained CPR-net has one single optimal outcome, where the optimal outcome is the one that is feasible with respect to the underlying CSP and is preferred to every other feasible outcome. In this section, we propose an algorithm that we call Search-CPR, to find the optimal outcome for a Constrained CPR-net.

Search-CPR is a recursive algorithm that takes three parameters as input in each call.  The first parameter $N_r$ is a sub CPR-net, which is initially the original CPR-net $N_{r_{orig}}$.  The second parameter $C$ is a set of hard constraints, which is initially the original set of constraints $C_{orig}$.  The third parameter $K$ is an assignment to the variables of $N_{r_{orig}}-N_r$, which is initially $null$.  Search-CPR produces the optimal outcome which is saved in $S$.

\begin{algorithm}\textbf{Algorithm 1.}~Search-CPR($N_{r},C,K$)\\
\textbf{Input:} Acyclic CPR-net $N_r$, constraints $C$, assignment $K$ to $N_{r_{orig}}-N_r$

\textbf{Output:} Optimal outcome $S$ with respect to $C$ and $N_r$

\begin{algorithmic}[1]
  \STATE Choose a variable $X$ with no parents in $N_r$
  \STATE Let $x_1\succ x_2\succ \cdots \succ x_m$ be the preference order over $D(X)$ given the assignment to $Pa(X)$ in $K$

  \FOR {$i=1;i\leq m;i=i+1$}
    \STATE Strengthen the constraints $C$ by $X=x_i$ to obtain $C_i$
    \IF {$C_i$ is inconsistent}
      \STATE \textbf{continue} with the next iteration
    \ELSE
      \STATE Let $K'$ be the partial assignment induced by $X=x_i$ and $C_i$
      \STATE Reduce $N_r$ to $N_{r_i}$ by removing each variable that is instantiated in $K'$ and by restricting the CPT of the remaining variables to $K'$
      \IF {$N_{r_i}$ is empty}
        \STATE $S\leftarrow K\cup K'$
        \STATE \bf{Exit}
      \ELSE
        \STATE Search-CPR($N_{r_i},C_i,K\cup K'$)
      \ENDIF
    \ENDIF
  \ENDFOR
  \STATE $S\leftarrow null$
  \end{algorithmic}
\end{algorithm}

In line 1, Search-CPR chooses a variable $X$ with no parents from the sub CPR-net. The acyclicity property guarantees that such a variable exists.  Lines 4-16 continue for every value $x_i$ of $X$ with the most preferred one first, and so on, given $Pa(X)$'s instantiation in $K$. The constraints $C$ are strengthened using $X=x_i$ in line 4 to get the current set of constrains $C_i$.  If the constraints are inconsistent, the loop continues with the next value.  Otherwise, in line 8, the partial assignment $K'$ induced by $X=x_i$ and $C_i$ is determined.  In line 9, the sub CPR-net $N_{r_i}$ is obtained by removing all instantiated variables from $N_r$, and by restricting the CPTs of the remaining variables to the current instantiation which is $K\cup K'$.  Line 10 checks the termination criterion.  If all variables are instantiated, the optimal outcome is saved in $S$ and Search-CPR is terminated; otherwise $K'$ is forwarded with $K$ to the next call of Search-CPR using line 14.  Finally, if no feasible outcome exists, Search-CPR saves $null$ in $S$ using line 18.

\begin{example} Let us apply Search-CPR for the CPR-net $N_{r_{orig}}$ in Figure~\ref{label_search_cpr_illustration_fig}~(a) with the set of four binary constrains $C_{orig}=\{\{A=a_1\}\rightarrow \{B=b_2\},\{A=a_1\}\leftrightarrow \{D=d_2\},\{C=c_2\}\rightarrow \{B=b_1\},\{C=c_1\}\leftrightarrow \{D=d_1\}\}$.  Initially, we have that $K$ is $null$.  The corresponding search tree is illustrated in Figure~\ref{label_search_cpr_Search_tree_fig}.

\begin{figure}[htbp]
\centerline{\includegraphics[width=1.0\textwidth]{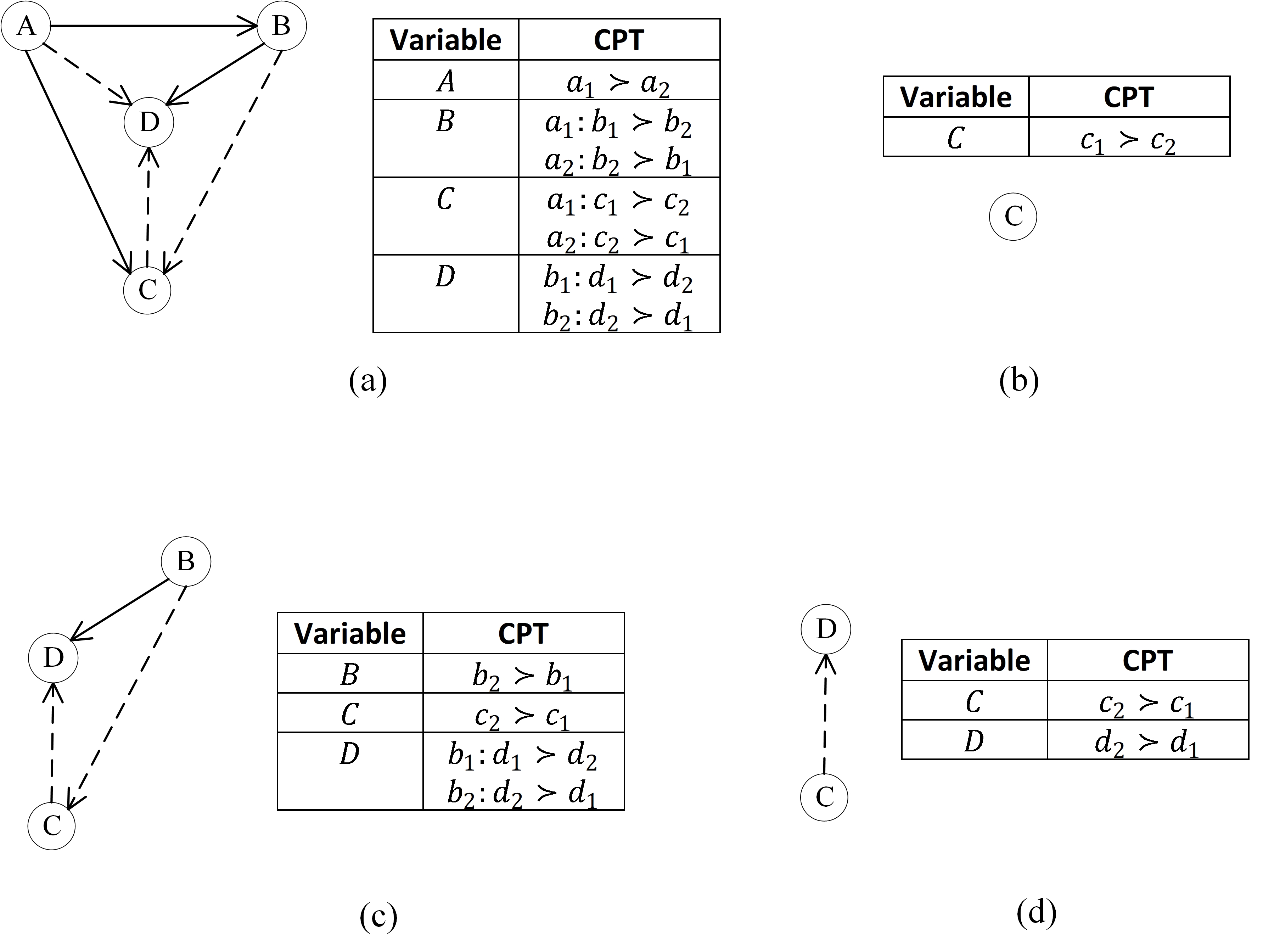}}
\caption{Illustration of the Search-CPR algorithm.}
\label{label_search_cpr_illustration_fig}
\end{figure}

\begin{figure}[htbp]
\centerline{\includegraphics[width=0.9\textwidth]{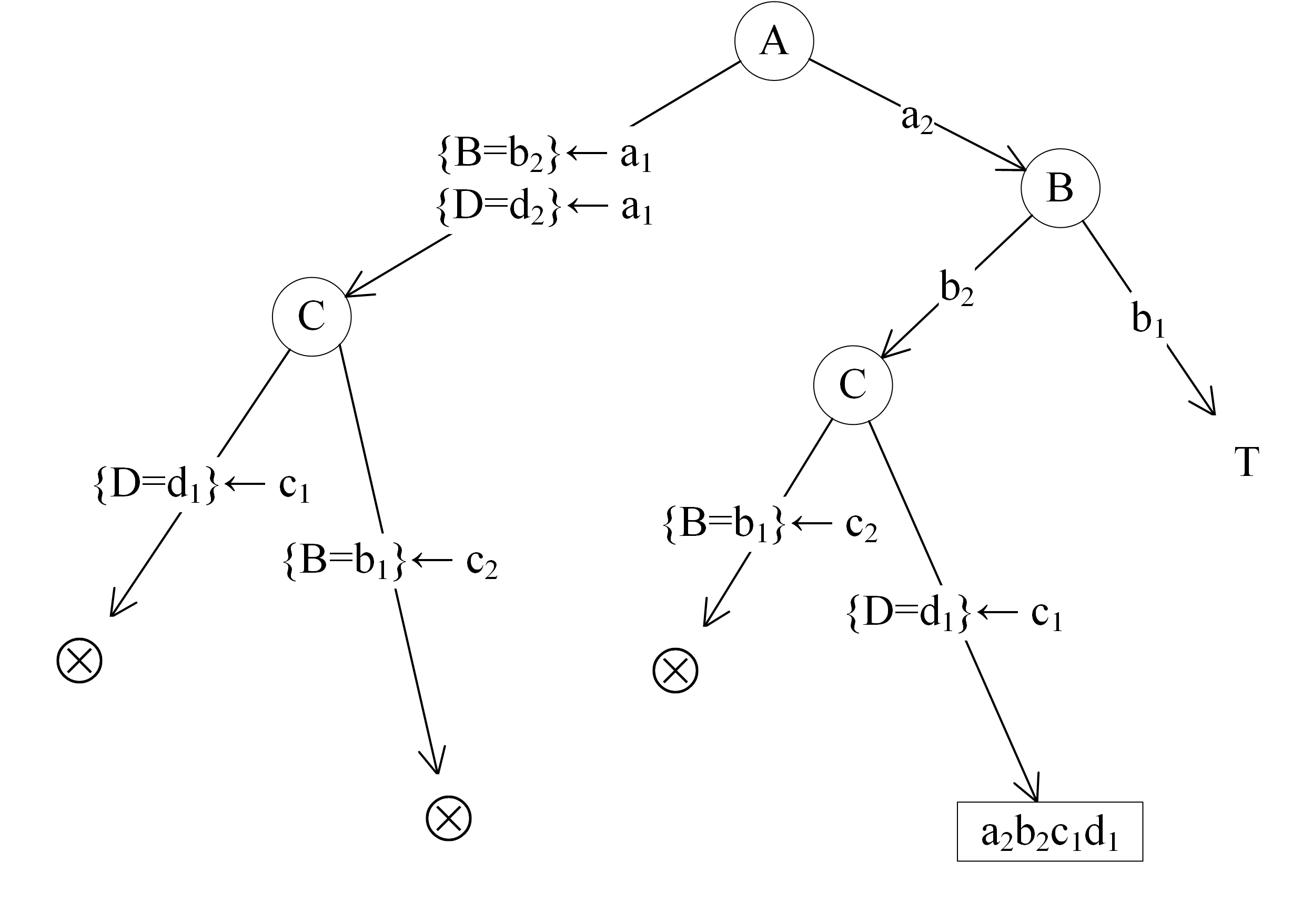}}
\caption{The search tree of the Search-CPR algorithm for the CPR-net in Figure~\ref{label_search_cpr_illustration_fig}~(a).}
\label{label_search_cpr_Search_tree_fig}
\end{figure}

In the initial call (call 0), variable $A$ is chosen.  Two branches are initiated, one for $A=a_1$ and another for $A=a_2$.  For branch $A=a_1$, after strengthening $C_{orig}$ with $A=a_1$, we get $C_{a_1}=\{\{B=b_2\},\{D=d_2\},\{C=c_2\}\rightarrow \{B=b_1\},\{C=c_1\}\leftrightarrow \{D=d_1\}\}$\footnote{At this point, $A$, $B$ and $D$ are instantiated. The only remaining variable is $C$ which can be assigned to $c_1$ or $c_2$.  Note that, for $C=c_1$, $C_{a_1}$ implies $D=d_1$, which is inconsistent with the partial assignment $K_{a_1}'=a_1 b_2 d_2$.  Again, for $C=c_2$, $C_{a_1}$ implies $B=b_1$, which is also inconsistent with $K_{a_1}'$.  Therefore, the branch $A=a_1$ does not need to be extended. This observation is a clear indication of the forward checking method~\cite{Dechter2003constraint}.  In this way, we can always incorporate various constraint propagation techniques with our algorithm.}. $C_{a_1}$ is not inconsistent and this iteration continues.   The partial assignment in line 8 is $K_{a_1}'=a_1 b_2 d_2$.  The reduced sub CPR-net $N_{r_{a_1}}$ is obtained by removing $A$, $B$ and $D$ from $N_{r_{orig}}$, which is shown in Figure~\ref{label_search_cpr_illustration_fig}~(b).  Note that, the CPT of $C$ is restricted to $A=a_1$. Since $N_{r_{a_1}}$ is not empty, the termination criterion is not fulfilled.  Search-CPR is again called with $N_{r_{a_1}}$, $C_{a_1}$ and $K_{a_1}=a_1 b_2 d_2$ (call 1).  In this call, we have one variable $C$, and two branches are created for $C=c_1$ and $C=c_2$ correspondingly.  For the branch $C=c_1$, after strengthening $C_{a_1}$ by $C=c_1$, we get $C_{a_1 c_1}=\{\{B=b_2\},\{D=d_2\},\{C=c_2\}\rightarrow \{B=b_1 \},\{D=d_1 \}\}$, which indicates an inconsistency with $K_{a_1}$ for the assignment of $D$.  Therefore, the branch stops, and we denote it as $\otimes$ in Figure~\ref{label_search_cpr_Search_tree_fig}.  For the branch $C=c_2$, after strengthening $C_{a_1}$ by $C=c_2$, we get $C_{a_1 c_2}=\{\{B=b_2 \},\{D=d_2\},\{B=b_1 \},\{C=c_1 \}\leftrightarrow \{D=d_1 \}\}$, which is inconsistent for the assignment of $B$.  Therefore, this branch also stops.  There is no more remaining iteration for values of $C$, and the loop ends. Call 1 is terminated.

Now in call 0, we continue the branch for $A=a_2$.  Strengthening $C_{orig}$ with $A=a_2$ remains the same, i.e., $C_{a_2}=C_{orig}$.  In line 8, we get the partial assignment $K_{a_2}'=a_2$.  The reduced CPR-net $N_{r_{a_2}}$ after removing $A$ from $N_{r_{orig}}$ is shown in Figure~\ref{label_search_cpr_illustration_fig}~(c).  Search-CPR is called with $N_{r_{a_2}}$, $C_{a_2}$ and $K_{a_2}=a_2$ (call 2).  For this call, variable $B$ is chosen.  Given $A=a_2$ in $K_{a_2}$, the preference order over $D(B)$ is $b_2\succ b_1$ as we see in Figure~\ref{label_search_cpr_illustration_fig}~(c).  Let us continue with the branch $B=b_2$.  Strengthening $C_{a_2}$ with $B=b_2$ remains the same, i.e., $C_{a_2 b_2}=C_{a_2}$.  We have the partial assignment $K_{a_2 b_2}'=b_2$.  The reduced CPR-net $N_{r_{a_2 b_2}}$ after removing $B$ from $N_{r_{a_2}}$ is shown in Figure~\ref{label_search_cpr_illustration_fig}~(d).  Search-CPR is called with $N_{r_{a_2 b_2}}$, $C_{a_2 b_2}$ and $K_{a_2 b_2}=a_2,b_2$ (call 3).  In this call, variable $C$ is chosen.  The branch for $C=c_2$ stops as the set of constraints after strengthening with $C=c_2$ implies $B=b_1$, which contradicts with $K_{a_2 b_2}$ for the assignment of $B$.  Let us consider the branch $C=c_1$.  After strengthening $C_{a_2 b_2}$ with $C=c_1$, we get: $C_{a_2 b_2 c_1}=\{\{A=a_1\}\rightarrow \{B=b_2 \},\{A=a_1 \}\leftrightarrow \{D=d_2\},\{C=c_2 \}\rightarrow \{B=b_1 \},\{D=d_1 \}\}$.  This set of constraints is not inconsistent and the iteration continues.  In line 8, we get the partial assignment, $K_{a_2 b_2 c_1}'=c_1,d_1$.  The sub CPR-net after removing $C$ and $D$ from $N_{r_{a_2 b_2}}$ becomes $empty$.  The termination criterion in line 10 is fulfilled, which indicates that Search-CPR reaches to the solution.  The rest of the search tree is not needed to search, and is indicated as $T$ in Figure~\ref{label_search_cpr_Search_tree_fig}. The feasible optimal outcome $a_2 b_2 c_1 d_1$ is saved in $S$, and Search-CPR is terminated. $\hfill \square$
\end{example}

We present the correctness of Search-CPR using the theorem below.

\begin{theorem} Let $N_r$ and $C$ be a CPR-net and a set of hard constraints on $V$ correspondingly.  If Search-CPR produces the outcome $o_i$, then $o_i$ is the optimal outcome with respect to $N_r$ and $C$.
\end{theorem}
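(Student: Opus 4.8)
The plan is to prove two things about the produced outcome $o_i$: that it is feasible with respect to $C$, and that it is preferred to every other feasible outcome. Since $N_r$ is an acyclic CPR-net, the earlier theorem guarantees it induces a single total order $\succ$ on the outcomes, so ``optimal'' means exactly the $\succ$-greatest feasible outcome, and it suffices to establish these two properties. The structural fact I would rely on throughout is that in an acyclic CPR-net every pair of variables is related by importance --- either a dependency arc (Lemma~\ref{lemma_relative_importance_directed_arc}) or an ARI arc --- so the importance relation is a complete, acyclic tournament and hence a strict total order on $V$. Consequently $\succ$ is a conditional-lexicographic order: two outcomes are compared at the most important variable on which they differ, using its CPT conditioned on the shared values of the more important variables. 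In particular the variable $X$ chosen in line 1, having no parents in $N_r$, is the unique most important variable.

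Feasibility is immediate: the constraints are only ever strengthened ($C_i \supseteq C$), a value is committed only when the strengthened set is consistent, and the returned $K \cup K'$ satisfies the final strengthened set, hence $C$ itself. For optimality I would induct on the number of variables, using the strengthened hypothesis that on a consistent instance Search-CPR returns the $\succ$-greatest feasible outcome (and returns $null$ otherwise). Let $x_1 \succ \cdots \succ x_m$ be the order on $D(X)$ and let $x_j$ be the value the loop commits to; since the loop skips exactly those values whose strengthened constraint set is inconsistent, $x_j$ is the most preferred value of $X$ admitting any feasible completion. Now take an arbitrary feasible $o' \ne o_i$. No feasible outcome assigns $X$ a value $x_{j'}$ with $j' < j$ (those $C_{j'}$ are inconsistent), so $o'(X) = x_{j''}$ for some $j'' \ge j$.

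If $j'' > j$, then $o_i$ and $o'$ differ already on the most important variable $X$, where $o_i$ carries the strictly preferred value $x_j$; by the conditional-lexicographic semantics $o_i \succ o'$. If $j'' = j$, then $o'(X) = x_j$, so $o'$ satisfies $C_j$, and therefore $o'$ must agree with every variable forced in the induced partial assignment $K'$, since these are uniquely determined by $C_j$. Thus $o'$ restricted to the remaining variables is a feasible outcome of the reduced instance $(N_{r_j}, C_j)$, and since $o' \ne o_i$ the two differ on some remaining variable. Here I would invoke a compatibility lemma --- analogous to the Reduced LP-tree result --- stating that for outcomes agreeing on the removed variables, $\succ$ on $N_r$ restricted to these agrees with the order $\succ_{N_{r_j}}$ of the reduced net (obtained by deleting the removed variables and restricting the remaining CPTs to $K \cup K'$). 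Granting this, the induction hypothesis yields that the recursive call returns the $\succ_{N_{r_j}}$-greatest feasible completion $\tilde o$, so $\tilde o \succeq o'|_{\text{rem}}$ and hence $o_i = (K \cup K') \cup \tilde o \succ o'$. The base case is $N_{r_j}$ empty, where $o_i = K \cup K'$ and every feasible $o'$ with $o'(X)=x_j$ is forced to equal it.

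The main obstacle I anticipate is the compatibility lemma: one must verify that restricting each remaining CPT to $K \cup K'$ and inheriting the importance order reproduces exactly the conditional preference that $N_r$ assigns to the most important differing variable once the removed variables are pinned to their shared values --- in particular that no importance relation among the remaining variables is lost or reversed by the reduction, and that a forced (propagated) variable being removed does not disturb the comparison. A secondary subtlety, needed to justify that the committed $x_j$ is genuinely the first consistent value, is that the recursion never has to backtrack across a consistent choice; this is precisely the ``consistent $\Rightarrow$ returns a feasible outcome'' half of the strengthened induction hypothesis, which is why I fold it into the statement being inducted on.
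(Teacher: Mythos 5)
Your strategy is sound, but it follows a genuinely different route from the paper's own proof. The paper uses no induction and never reasons about reduced nets: it argues directly on the original CPR-net that (1) any outcome preferred to the produced $o_i$ must be infeasible --- taking the most important variable $X$ on which the two outcomes differ, noting that all of $X$'s parents are shared, so the preference at $CPT(X)$ decides the comparison, and observing that the branch for the better value of $X$ is explored first, so a feasible better outcome would have terminated the search before $o_i$ was ever produced --- and (2) any outcome never searched is dominated by $o_i$, by the same most-important-differing-variable argument; totality of the CPR-net order (Theorem~\ref{Ch3OrderingTh1}) then yields optimality. Your inductive decomposition, recursing on the reduced net with the strengthened hypothesis that each consistent sub-instance returns its greatest feasible outcome, is in fact closer in spirit to how the paper proves correctness of Search-LP, where the reduction-compatibility result you need is actually established (Lemma~\ref{lemma_compatible_lp_tree_correctness} for Reduced LP-trees). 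What your route buys is precision about the propagated assignments $K'$: you verify explicitly that any feasible outcome taking the committed value of the root must agree with $K'$ and hence projects to a feasible outcome of the reduced instance; the paper's step ``the outcome $o_j$ is located with the branch $X=x_j$'' silently relies on exactly this bookkeeping (plus the fact that variables branched before $X$ are more important than $X$ and hence shared). What your route costs is the compatibility lemma for reduced CPR-nets, which you correctly flag as the main obligation but leave unproven, and which the paper never states for CPR-nets; it does hold, essentially by the argument you sketch: the reduction preserves all solid and dashed arcs among remaining variables (so no importance relation is lost or reversed), and the parents of the deciding variable are each either removed and fixed by $K\cup K'$ or remaining and shared by both outcomes, so the restricted CPT gives the same comparison as the original one. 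With that lemma written out, in parallel to the proof of Lemma~\ref{lemma_compatible_lp_tree_correctness}, your proof is complete.
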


\begin{proof} In order to prove the theorem, we have to prove that: (1) every outcome $o_j$ such that $o_j\succ o_i$ is infeasible; and (2) every outcome $o_k$ such that $o_i\succ o_k$ is pruned.  We prove these below.

(1)  This proof is by contradiction.  Let us consider that Search-CPR produces $o_i$.  Let $o_j$ be a feasible outcome and we have $o_j\succ o_i$.  Let $o_j$ and $o_i$ differ on the values of $U\subseteq V$.  There is a variable $X\in U$ such that for every $Y\in U-\{X\}$, we have $X\rhd Y$.  This shows that no parent of $X$ is in $U$ and so every parent of $X$ gives the same value to $o_j$ and $o_i$.  Let $Pa(X)$ gives $p$ to $o_j$ and $o_i$.  Let $X$ gives $x_j$ and $x_i$ to $o_j$ and $o_i$ correspondingly.  $o_j\succ o_i$ implies that $x_j\succ x_i$ given $Pa(X)$'s instantiation to $p$.  Using lines 2-3, Search-CPR produces the branch for $X=x_j$ before the branch for $X=x_i$, in the search tree.  The outcome $o_j$ is located with the branch $X=x_j$.  Since, $o_j$ is feasible, during the search process for the branch $X=x_j$, Search-CPR determines $o_j$ as the feasible optimal outcome and Search-CPR terminates.  Therefore, $o_i$ is not produced by Search-CPR.  Thus, $o_j$ cannot be feasible.

(2) As soon as Search-CPR finds $o_i$, it terminates, i.e., no other outcome is searched.  Let $o_k$ be such an outcome that is not searched by Search-CPR.  Let $o_i$ and $o_k$ differ on the values of $U\subseteq V$.  There is a variable $X\in U$ such that for every $Y\in U-\{X\}$, we have $X\rhd Y$.  It gives us that no parent of $X$ is in $U$ and so every parent of $X$ gives the same value to $o_i$ and $o_k$.  Let $Pa(X)$ gives $p$ to $o_i$ and $o_k$.  Let $X$ gives $x_i$ and $x_k$ to $o_i$ and $o_k$ correspondingly.  Since $o_i$ is obtained prior to searching for $o_k$, using lines 2-3, it is clear that $x_i\succ x_k$ given $Pa(X)$'s instantiation to $p$, which implies $o_i\succ o_k$.
\end{proof}

It is hard to compare the complexity of a decision problem with the complexity of an optimization problem. However, in some sense, we can say that the constrained optimization problem, based on the preferences described by a CPR-net, is no harder than the underlying CSP. On the other hand, the performance of a CSP-solving algorithm is often influenced by the order in which the variables are instantiated~\cite{Mouhoub2011VariableValueOrderingCSP,Yong2017VariableValueOrderCSP}. In this case, Search-CPR is restricted to use the topological order of the CPR-net as the instantiation order, which might not be optimal for solving the CSP efficiently. Nevertheless, the CPR-net based constrained optimization has the following advantage over CSPs.  The real world CSPs usually have a set of feasible solutions, and the solving algorithms need to find all such solutions.  After getting all feasible solutions, the decision maker needs to do a trade-off analysis to choose the best one~\cite{Utyuzhnikov20096Pareto_Set}.  Using our method, a trade-off analysis is not needed as the method returns a single solution.

\section{Constrained LP-trees}
\label{SEC_CON_LP_TREE}
In this section, we extend the LP-tree to a set of hard constraints. We call the new model the Constrained LP-tree. Then, we propose a solving algorithm, called Search-LP, that returns the most preferable feasible outcome of the Constrained LP-tree. Finally, we discuss formal properties of Search-LP.

\subsection{Compatible LP-trees and Reduced LP-trees}

In Search-LP algorithm, every time a set of variables are instantiated, the original LP-tree is updated to a sub LP-tree by removing the instantiated variables. Then, the algorithm is called recursively with the sub LP-tree until it gets the solution. In this regard, first, we define the compatibility between the LP-tree and the sub LP-tree. We call that the sub LP-tree is compatible to the LP-tree if the preferences represented by the sub LP-tree are also represented by the LP-tree given the instantiation to the instantiated variables. Then, given an LP-tree and some instantiation to a subset of variables, we explain a method to reduce the LP-tree by eliminating the instantiated variables.  We prove that the Reduced LP-tree is always a Compatible LP-tree.

\begin{definition} Let $L$ be an LP-tree on $V$, and $L_c$ be an LP-tree on $U\subset V$.  We call that $L_c$ is compatible to $L$ if there exists some instantiation $w$ of $V-U$ such that for every two outcomes, $o_{1c}$ and $o_{2c}$, of $L_c$, we have: $(L_c\models o_{1c}\succ o_{2c})\Rightarrow (L\models w o_{1c}\succ w o_{2c})$.
\end{definition}

\begin{figure}[t]
\centering
\includegraphics[width=0.2\textwidth]{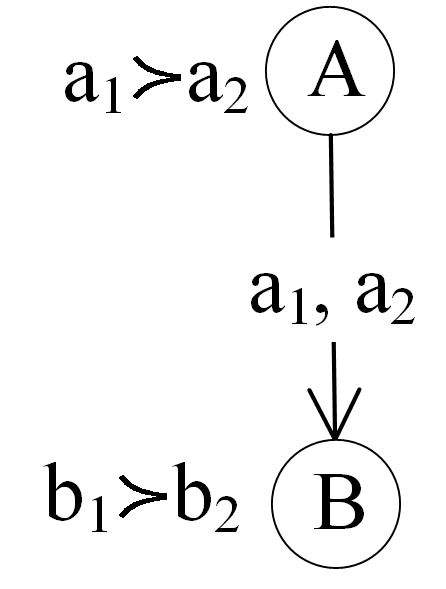}
\caption{A Compatible LP-tree of the LP-tree in Figure~\ref{fig_my_dinner_lp_tree} given $C=c_1$.}
\label{fig_compatible_lp_tree000}
\end{figure}

\begin{example} Let us assume that Figure~\ref{fig_compatible_lp_tree000} represents an LP-tree on $\{A,B\}$. We get the induced preference order as:  $a_1 b_1 \succ a_1 b_2 \succ a_2 b_1\succ a_2 b_2$. We now check if Figure~\ref{fig_compatible_lp_tree000} is compatible to Figure~\ref{fig_my_dinner_lp_tree}. Let $C$ be instantiated to $c_1$ in Figure~\ref{fig_my_dinner_lp_tree}. From Example~\ref{example_my_dinner_lp_tree_total_order}, given $C=c_1$, we get $a_1 b_1 \succ a_1 b_2 \succ a_2 b_1\succ a_2 b_2$ which is exactly the one represented by Figure~\ref{fig_compatible_lp_tree000}. Therefore, Figure~\ref{fig_compatible_lp_tree000} is a Compatible LP-tree of the LP-tree in Figure~\ref{fig_my_dinner_lp_tree} for $C=c_1$. $\hfill \square$
\end{example}

We now explain how to get a Reduced LP-tree from an LP-tree.

\begin{definition} Let $L$ be an LP-tree over $V$.  Let $W\subset V$ be instantiated to $w$. An LP-tree $L_r$ on $U=V-W$ is called Reduced LP-tree of $L$ given $W=w$ if $L_r$ is obtained from $L$ in the following way.  For every node $X\in W$ where $X$ is instantiated to $x$, we apply the following for every existence of $X$~\footnote{Note that a variable $X$ might exist in an LP-tree more than once. For example, in Figure~\ref{fig_my_dinner_lp_tree}, variable $B$ exists twice.}:
\begin{enumerate}
	\item All sub-trees for $X$ except the subtree $X=x$ are deleted, and then the $CPT(Y)$ for every $Y\in De(X)$ is restricted to $X=x$.
	\item If $X$ has both a child and a parent, the arc $\vv{(Pa(X),X)}$ is replaced with\\$\vv{(Pa(X),Ch(X))}$, and then the arc $\vv{(X,Ch(X))}$ is deleted.  If $X$ has a parent but not a child, the arc $\vv{(Pa(X),X)}$ is deleted.  If $X$ has a child but not a parent, the arc $\vv{(X,Ch(X))}$ is deleted.
	\item The node $X$ and its CPT are deleted.
	\item If there exist identical branches for any ancestor variable of $X$, then the branches are merged together.
\end{enumerate}
\label{definitin_compatible_lp_tree}
\end{definition}

\begin{example} Consider the variable $B$ is instantiated to $b_1$ for the LP-tree of Figure~\ref{fig_my_dinner_lp_tree}, and we want to find the Reduced LP-tree.  The node $B$ exists twice in Figure~\ref{fig_my_dinner_lp_tree}, first in branch $A=a_1$ and then in branch $A=a_2$.

First, consider $B$ in branch $A=a_1$.  Since $B$ is a parent node, the label $b_1,b_2$ for the arc $\vv{(B,C)}$ is updated to $b_1$ and the $CPT(C)$ is restricted to $B=b_1$.  Since $B$ has both child and parent, the arc $\vv{(A,B)}$ is replaced with $\vv{(A,C)}$, and the arc $\vv{(B,C)}$ is deleted.  Then, node $B$ and $CPT(B)$ are deleted.

Second, consider $B$ in branch $A=a_2$.  Since $B$ has a parent but not a child, the arc $\vv{(C,B)}$ is deleted.  Then, node $B$ and $CPT(B)$ are deleted.  At this point, node $A$ has two branches, one for $a_1$ and another for $a_2$.  These branches are identical, and are merged together with label $a_1,a_2$ for the arc $\vv{(A,C)}$.  The Reduced LP-tree is shown in Figure~\ref{fig_compatible_lp_tree}. $\hfill \square$
\end{example}

\begin{figure}[t]
\centering
\includegraphics[width=0.2\textwidth]{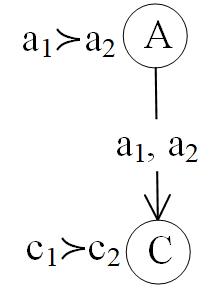}
\caption{A Reduced LP-tree of the LP-tree in Figure~\ref{fig_my_dinner_lp_tree} given $B=b_1$.}
\label{fig_compatible_lp_tree}
\end{figure}

\begin{lemma} Let $L$ be an LP-tree on $V$ and $L_r$ be a Reduced LP-tree of $L$ on $U\subset V$.  Then, $L_r$ is also a Compatible LP-tree of $L$.
\label{lemma_compatible_lp_tree_correctness}
\end{lemma}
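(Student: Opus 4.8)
The plan is to take the witness instantiation $w$ required by the definition of compatibility to be exactly the assignment $W=w$ used to build the Reduced LP-tree $L_r$, noting that $V-U=W$. I fix two arbitrary outcomes $o_{1r},o_{2r}$ of $L_r$ with $L_r\models o_{1r}\succ o_{2r}$ and must show $L\models w\,o_{1r}\succ w\,o_{2r}$. Since Definition~\ref{definition_lex_order} decides the ordering of any two outcomes solely from a single decision node (the node $X$ whose ancestors agree and whose own value differs), it suffices to show that the node selected for $(o_{1r},o_{2r})$ in $L_r$ is the \emph{same} variable, sits in a matching context, and carries a matching $CPT$ entry, as the node selected for $(w\,o_{1r},w\,o_{2r})$ in $L$.

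To this end I would first record three structural invariants of the reduction of Definition~\ref{definitin_compatible_lp_tree}. (I1) \emph{Ancestors}: for any node $Y$ surviving in $L_r$, its ancestor set in $L_r$ equals $An_L(Y)\cap U$, in the same top-to-bottom order; this is because step~2 only contracts a removed node out of a path, replacing $\vv{(Pa(X),X)}$ and $\vv{(X,Ch(X))}$ by $\vv{(Pa(X),Ch(X))}$, which preserves the relative order of all retained nodes on every root-to-leaf path, while step~4 merges only branches already identical on the retained structure. (I2) \emph{CPTs}: for a surviving node $Y$ and any context $c$ over $An_{L_r}(Y)$, the entry of $CPT_{L_r}(Y)$ at $c$ equals the entry of $CPT_L(Y)$ at the context $(c,\,W{=}w)$, because step~1 restricts each retained $CPT$ to $W=w$ and deletes all other subtrees. (I3) \emph{Subtree membership}: the subtree for the surviving values is kept by step~1, so a node present in $L_r$ under context $R_r$ is present in $L$ under context $(R_r,\,W{=}w)$.

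With these in hand I would finish as follows. Applying Definition~\ref{definition_lex_order} in $L_r$ to $(o_{1r},o_{2r})$ yields a node $X\in U$ such that $An_{L_r}(X)$ gives both outcomes the common value $R_r$, $X$ gives them $x_1\neq x_2$, and $CPT_{L_r}(X)$ represents $x_1\succ x_2$ at $R_r$. Now consider $w\,o_{1r}$ and $w\,o_{2r}$ in $L$. By (I1), $An_L(X)=(An_L(X)\cap W)\,\cup\,An_{L_r}(X)$: the first part lies in $W$ and is fixed to $w$ on both outcomes, the second equals $An_{L_r}(X)$ and carries the common value $R_r$; hence all of $An_L(X)$ agrees, with common value $R:=(An_L(X)\cap W{=}w,\,R_r)$. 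By (I3), $X$ lives in the subtree of $L$ for $An_L(X)=R$ and still gives the distinct values $x_1,x_2$. By (I2), $CPT_L(X)$ at $R$ equals $CPT_{L_r}(X)$ at $R_r$, so it too represents $x_1\succ x_2$. Thus $X$ is precisely the node Definition~\ref{definition_lex_order} selects for $(w\,o_{1r},w\,o_{2r})$ in $L$, and it orders $x_1\succ x_2$, giving $L\models w\,o_{1r}\succ w\,o_{2r}$. As $o_{1r},o_{2r}$ were arbitrary, $L_r$ is a Compatible LP-tree of $L$.

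The main obstacle is establishing (I1) cleanly: that the arc re-wiring of step~2 and the branch merging of step~4 leave ancestor relationships among surviving variables intact even when a removed variable occurs several times in the tree (as $B$ does in Figure~\ref{fig_my_dinner_lp_tree}). I would handle this occurrence-by-occurrence, arguing that contracting one occurrence out of its path preserves the relative order of the endpoints of every retained edge, and that step~4 only identifies subtrees already pointwise equal on $U$, so merging neither creates nor destroys an ancestor relation nor alters any $CPT$ entry over the retained variables. A perhaps tidier alternative is induction on $\lvert W\rvert$: prove single-variable removal preserves compatibility (isolating the re-wiring/merging argument to one deletion) and then compose, since compatibility composes, namely $L_r\models o_1\succ o_2\Rightarrow L'\models w'o_1\succ w'o_2\Rightarrow L\models w\,o_1\succ w\,o_2$ with $w=(x_0,w')$.
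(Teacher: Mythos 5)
Your proposal is correct and follows essentially the same route as the paper's proof: both identify the decision node $X$ of Definition~\ref{definition_lex_order} in the reduced tree, observe that $An(X)$ in $L$ is contained in $An_{L_r}(X)\cup(V-U)$ so that the extended outcomes $w\,o_{1r}$ and $w\,o_{2r}$ agree on all of $X$'s ancestors in $L$, and conclude from the restricted CPT that the same preference holds in $L$. Your version merely makes explicit, via invariants (I1)--(I3), the structural facts about the reduction that the paper asserts in a single sentence.
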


\begin{proof} Let $L_r$ be obtained from $L$ for $(V-U)$'s instantiation to $p$ in $L$. We need to show that the preference order induced by $L_r$ is represented by the preference order induced by $L$ for $(V-U)=p$. Let $o_{1c}$ and $o_{2c}$ be two outcomes of $L_r$, and $L_r$ induces $o_{1c}\succ o_{2c}$.  Let the variable $X$ gives two different values $x_1$ and $x_2$ to $o_{1c}$ and $o_{2c}$ correspondingly, and $An(X)$ in $L_r$, denoted as $A_{L_r}$, gives the same value $z$ to $o_{1c}$ and $o_{2c}$.  By Definition~\ref{definition_lex_order}, we can clearly see that $o_{1c}\succ o_{2c}$ implies $x_1\succ x_2$ given $A_{L_r}=z$.

The definition of Reduced LP-tree indicates that $An(X)$ in $L$ is a subset of, or equals to, $A_{L_r}\cup (V-U)$.  Let $A_{L_r}\cup (V-U)$ is instantiated to $q$.  Therefore, $An(X)$ in $L$ gives the same value (let $r$) to the outcomes $po_{1c}$ and $po_{2c}$ of $L$.  Given $An(X)=r$ in $L$, we have $x_1\succ x_2$ which implies $po_{1c}\succ po_{2c}$.  Thus, $o_{1c}\succ o_{2c}$ induced by $L_r$ is also held by $L$ for $V-U=p$.
\end{proof}

\begin{example} The preference order induced by the Reduced LP-tree of Figure~\ref{fig_compatible_lp_tree} is $a_1 c_1\succ a_1 c_2\succ a_2 c_1\succ a_2 c_2$.  This order is held by the LP-tree of Figure~\ref{fig_my_dinner_lp_tree} for $B=b_1$, i.e., $a_1 b_1 c_1\succ a_1 b_1 c_2\succ a_2 b_1 c_1\succ a_2 b_1 c_2$ is induced by Figure~\ref{fig_my_dinner_lp_tree} (see Example~\ref{example_my_dinner_lp_tree_total_order}). $\hfill \square$
\end{example}

\subsection{Search-LP algorithm}

\begin{algorithm}\textbf{Algorithm 2.}~Search-LP($L_r,C,K$)\\
\textbf{Input:} The original LP-tree $L_{orig}$, the original set of constraints $C_{orig}$

\textbf{Output:} The most preferable feasible outcome with respect to $L_{orig}$ and $C_{orig}$

\textbf{Initialization:} $L_r=L_{orig}$,  $C=C_{orig}$ and $K=null$

\begin{algorithmic}[1]
  \STATE Let $X$ be the root node of $L_r$
  \STATE Let $x_1\succ x_2\succ \cdots \succ x_l$ be the preference order on $D(X)$
  \FOR{$i=1;i\leq l;i=i+1$}
    \STATE Strengthen $C$ with $X=x_i$ to get $C_i$
    \IF{$C_i$ is inconsistent}
        \STATE \textbf{continue} with the next iteration
    \ENDIF
	\STATE Let $K_i$ be the partial assignment induced by $X=x_i$ and $C_i$
	\STATE Find the Reduced LP-tree $L_{r_i}$ given the instantiation $K_i$
    \IF{$L_{r_i}$ is empty}
        \STATE Set: $S=K\cup K_i$
        \STATE \textbf{exit}
    \ELSE
    	\STATE Search-LP($L_{r_i},C_i,K\cup K_i$)
    \ENDIF
  \ENDFOR
  \STATE Set: $S=null$
  \STATE \textbf{end}
\end{algorithmic}
\end{algorithm}

Given a Constrained LP-tree, the optimal outcome for the corresponding LP-tree might be infeasible. In this case, our goal is to find the most preferable feasible outcome. The problem is not trivial as the underlying CSP is NP-complete, in general~\cite{Dechter2003constraint}.  For this purpose, we propose a recursive backtrack search algorithm that we call Search-LP.  Search-LP has three parameters.  The first parameter is a Reduced LP-tree $L_r$ which is initially the original LP-tree $L_{orig}$.  The second parameter is a set of hard constraints $C$ which is initially the original set of constraints $C_{orig}$.  The third parameter is a partial assignment $K$ on $L_{orig}-L_r$ which is initially $null$.  Search-LP returns the most preferable feasible outcome with respect to $L_{orig}$ and $C_{orig}$.

Search-LP begins with the root node $X$ and the preference order on $D(X)$.  For every value $x_i$ of $X$, the loop (lines 3-16) continues according to the preference order.  In line 4, the current set of constraints $C$ is strengthened by $X=x_i$ to get the new set of constraints $C_i$.  If $C_i$ is inconsistent, the branch $X=x_i$ is terminated (lines 5-7), and Search-LP continues with the next branch.  Otherwise, in line 8, the partial assignment $K_i$ induced by $X=x_i$ and $C_i$ is obtained.  In line 9, the Reduced LP-tree $L_{r_i}$ is obtained for the LP-tree $L_r$ and instantiation $K_i$, using Definition~\ref{definitin_compatible_lp_tree}.  If $L_{r_i}$ is empty, this indicates that all variables have been successfully instantiated.  The outcome is stored in $S$, and Search-LP is stopped (line 10-12).  If this termination criterion is not met, Search-LP is called recursively until the criterion is met (line 14).  Every time, the partial assignment $K_i$ is forwarded to the next call.  Finally, if no feasible outcome exists, i.e., the underlying CSP is inconsistent, Search-LP saves $null$ in $S$ (line 17) and ends (line 18).

\subsection{Search-LP: Example}

Consider the LP-tree $L_{orig}$ in Figure~\ref{fig_example_lp_tree}, which has four variables $A$, $B$, $C$ and $D$ with corresponding domains $\{a_1,a_2,a_3\}$, $\{b_1,b_2,b_3\}$, $\{c_1,c_2\}$ and $\{d_1,d_2\}$.  The initial set of constraints $C_{orig}$ is $\{\{A=a_1\}\rightarrow \{C=c_2\},\{B=b_1\}\rightarrow \{D=d_2\},\{C=c_2\}\leftrightarrow \{D=d_1\},\{B=b_3\}\rightarrow \{D=d_2\},\{D=d_2\}\rightarrow \{A=a_1\},\{A=a_2\}\leftrightarrow \{B=b_2\},\{A=a_3\}\leftrightarrow \{B=b_1\},\{A=a_3\}\rightarrow \{C=c_2\}\}$. We call Search-LP where initially the partial assignment $K$ is $null$.  For this example, the corresponding search tree is illustrated in Figure~\ref{fig_search_lp_seatch_tree}.

\begin{figure*}
\centering
\includegraphics[width=1.0\textwidth]{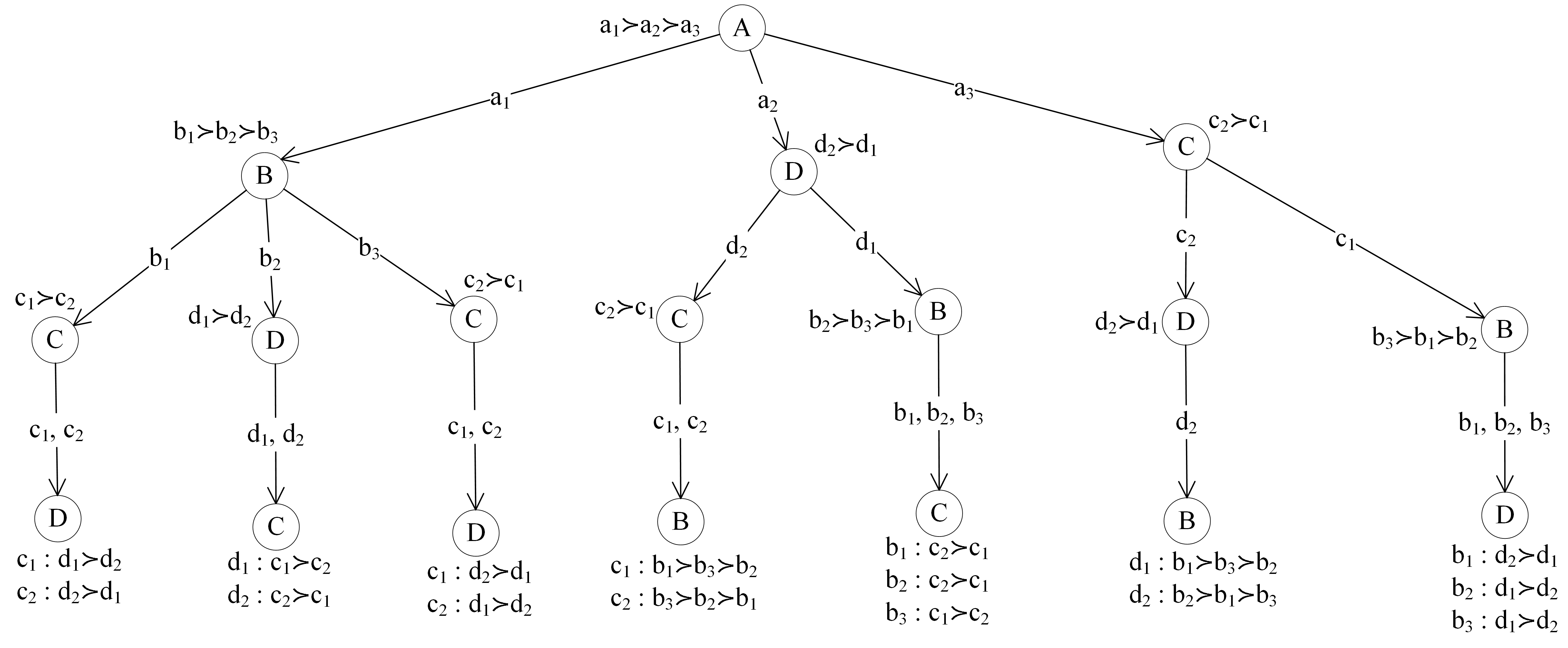}
\caption{An example of an LP-tree.}
\label{fig_example_lp_tree}
\end{figure*}

\begin{figure*}
\centering
\includegraphics[width=1\textwidth]{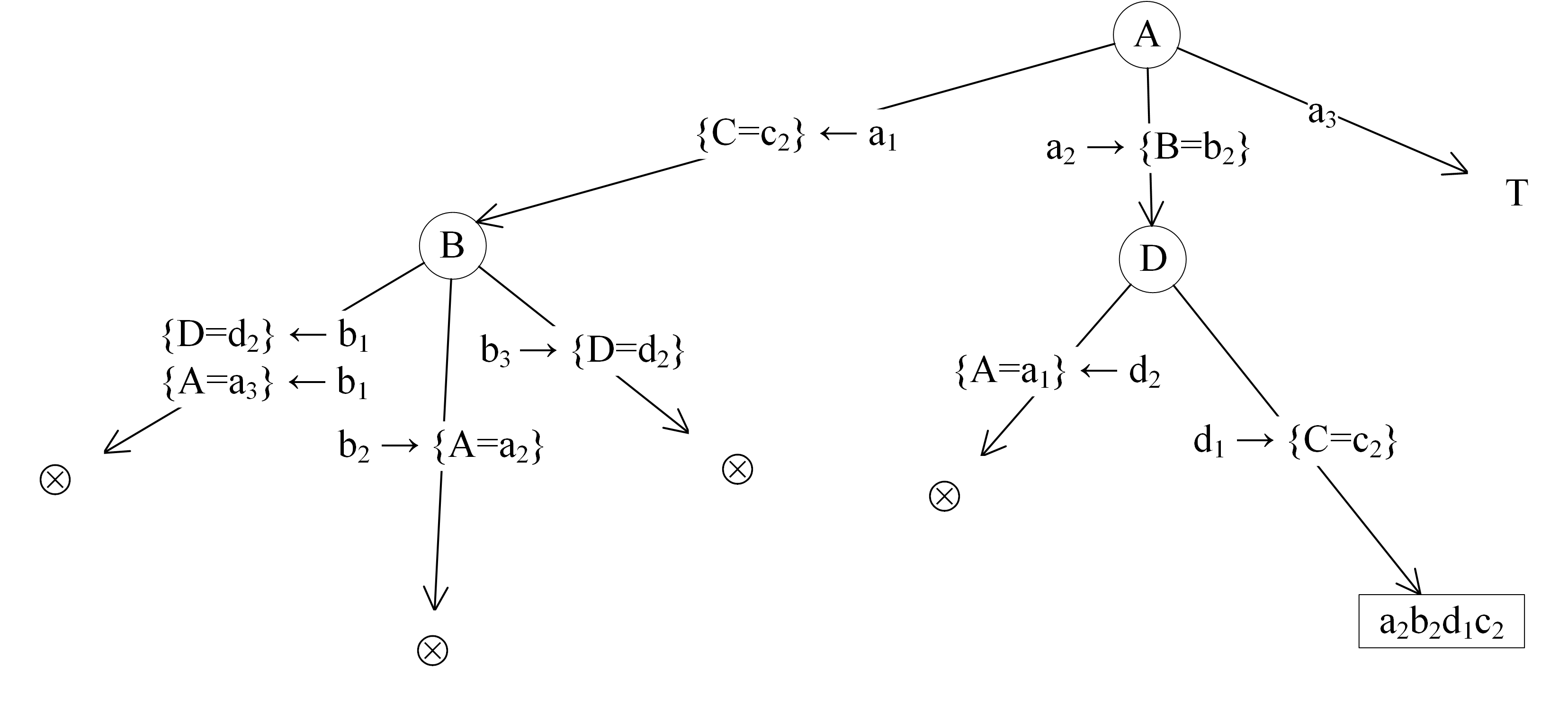}
\caption{Search tree of the Search-LP algorithm for the LP-tree of Figure~\ref{fig_example_lp_tree} with the hard constraints $C_{orig}=\{\{A=a_1\}\rightarrow \{C=c_2\},\{B=b_1\}\rightarrow \{D=d_2\},\{C=c_2\}\leftrightarrow \{D=d_1\},\{B=b_3\}\rightarrow \{D=d_2\},\{D=d_2\}\rightarrow \{A=a_1\},\{A=a_2\}\leftrightarrow \{B=b_2\},\{A=a_3\}\leftrightarrow \{B=b_1\},\{A=a_3\}\rightarrow \{C=c_2\}\}$.}
\label{fig_search_lp_seatch_tree}
\end{figure*}

\begin{figure*}[!h]
\centering
\includegraphics[width=0.5\textwidth]{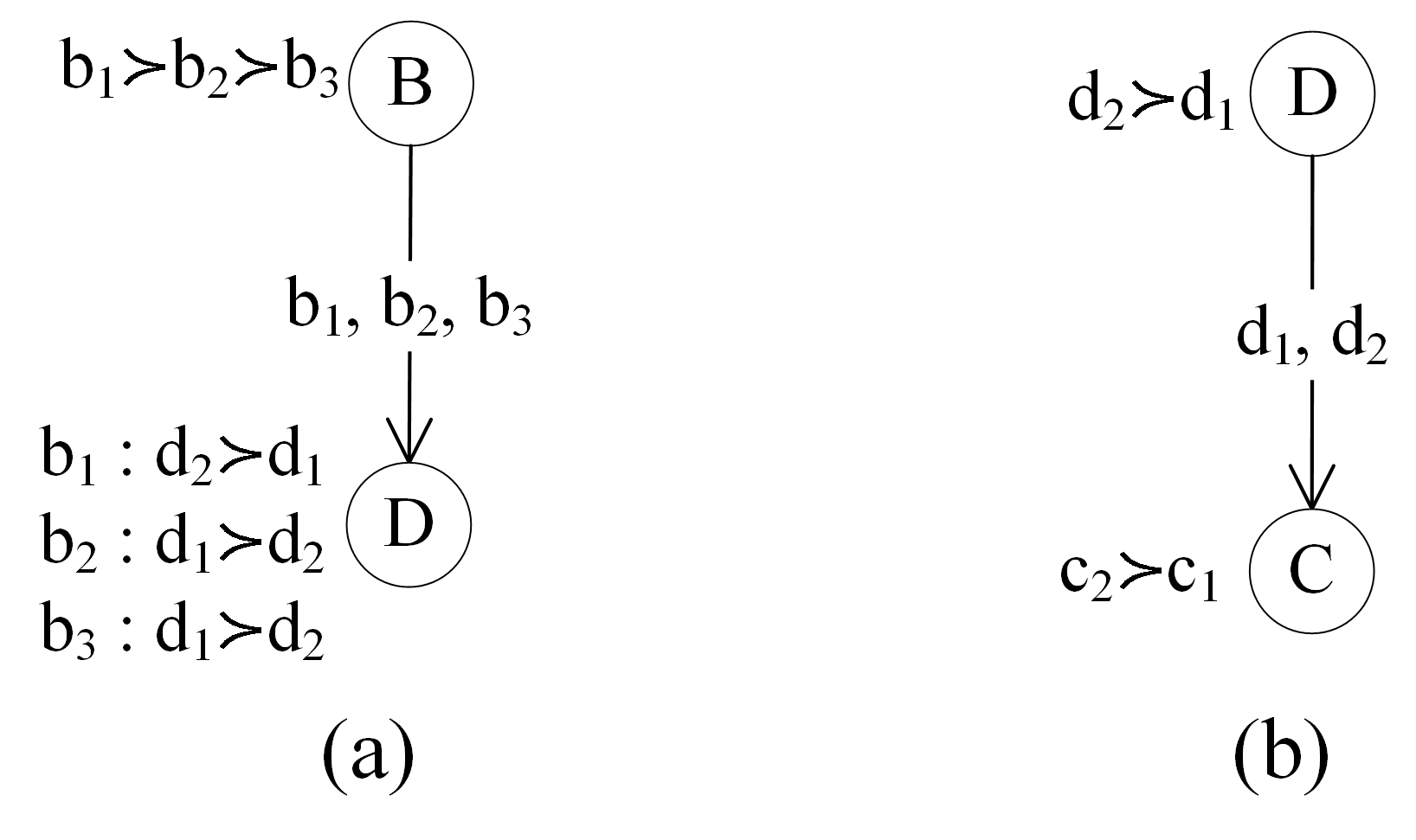}
\caption{Two Reduced LP-trees generated by the Search-LP algorithm for the LP-tree of Figure~\ref{fig_example_lp_tree} with the hard constraints $C_{orig}=\{\{A=a_1\}\rightarrow \{C=c_2\},\{B=b_1\}\rightarrow \{D=d_2\},\{C=c_2\}\leftrightarrow \{D=d_1\},\{B=b_3\}\rightarrow \{D=d_2\},\{D=d_2\}\rightarrow \{A=a_1\},\{A=a_2\}\leftrightarrow \{B=b_2\},\{A=a_3\}\leftrightarrow \{B=b_1\},\{A=a_3\}\rightarrow \{C=c_2\}\}$.}
\label{fig_search_lp_compatible_lp_trees}
\end{figure*}

In the initial call (call 0), the root node is $A$ which has the preference $a_1\succ a_2\succ a_3$ on its domain.  Let us continue the loop (lines 3-16) for $A=a_1$.  The corresponding branch in Figure~\ref{fig_search_lp_seatch_tree} is indicated with $A=a_1$.  In line 4, after strengthening $C_{orig}$ with $A=a_1$, we get $C_{a_1}=\{\{C=c_2\},\{B=b_1\}\rightarrow \{D=d_2\},\{C=c_2\}\leftrightarrow \{D=d_1\},\{B=b_3\}\rightarrow \{D=d_2\},\ignore{\{D=d_2\}\rightarrow \{A=a_1\},}\{A=a_2\}\leftrightarrow \{B=b_2\},\{A=a_3\}\leftrightarrow \{B=b_1\},\{A=a_3\}\rightarrow \{C=c_2\}\}$ which is consistent; so lines 5-7 are not relevant.  In line 8, the partial assignment induced by $A=a_1$ and $C_{a_1}$ is $K_{a_1}=a_1 c_2$ which indicates that variables $A$ and $C$ are instantiated at this point.  We build the Reduced LP-tree $L_{a_1 c_2}$ given the instantiation of $A$ and $C$.  $L_{a_1 c_2}$ is shown in Figure~\ref{fig_search_lp_compatible_lp_trees}~(a).  Since $L_{a_1 c_2}$ is not empty, Search-LP($L_{a_1 c_2},C_{a_1},a_1 c_2$) is called in line 14 (call 1).

For call 1, $B$ is the root node and its domain preference is $b_1\succ b_2\succ b_3$.  Consider the loop (lines 3-16) for $B=b_1$.  After strengthening $C_{a_1}$ with $B=b_1$ in line 4, we get $C_{a_1 b_1}=\{\{C=c_2\},\{D=d_2\},\{C=c_2\}\leftrightarrow \{D=d_1\},\{B=b_3\}\rightarrow \{D=d_2\},\ignore{\{D=d_2\}\rightarrow \{A=a_1\},}\{A=a_2\}\leftrightarrow \{B=b_2\},\{A=a_3\},\{A=a_3\}\rightarrow \{C=c_2\}\}$ which is inconsistent given that $\{C=c_2\}$ implies $\{D=d_1\}$ while we have $\{D=d_2\}$.  This branch is terminated and denoted as $\otimes$ in Figure~\ref{fig_search_lp_seatch_tree}.  Similarly, we find that the branches for $B=b_2$ and $B=b_3$ are also terminated. Call 1 ends at line 18, knowing that we still do not have a solution.

In call 0, let us continue the loop (lines 3-16) for $A=a_2$.  Strengthening $C_{orig}$ with $A=a_2$ gives $C_{a_2}=\{\{A=a_1\}\rightarrow \{C=c_2\},\{B=b_1\}\rightarrow \{D=d_2\},\{C=c_2\}\leftrightarrow \{D=d_1\},\{B=b_3\}\rightarrow \{D=d_2\},\{D=d_2\}\rightarrow \{A=a_1\},\{B=b_2\},\{A=a_3\}\leftrightarrow \{B=b_1\},\{A=a_3\}\rightarrow \{C=c_2\}\}$.  Since $C_{a_2}$ is not inconsistent, line 6 is ignored.  In line 8, the partial assignment induced by $A=a_2$ and $C_{a_2}$ is $K_{a_2}=a_2 b_2$ which means that variables $A$ and $B$ are instantiated.  We find the Reduced LP-tree $L_{a_2 b_2}$ which is shown in Figure~\ref{fig_search_lp_compatible_lp_trees}~(b).  In line 14, Search-LP($L_{a_2 b_2},C_{a_2},a_2 b_2$) is called (call 2).

For call 2, $D$ is the root node and its domain preference is $d_2\succ d_1$.  Consider the loop (lines 3-16) for $D=d_2$.  After strengthening $C_{a_2}$ with $D=d_2$ in line 4, we get $C_{a_2 d_2}=\{\{A=a_1\}\rightarrow \{C=c_2\},\{C=c_2\}\leftrightarrow \{D=d_1\},\{A=a_1\},\{B=b_2\},\{A=a_3\}\leftrightarrow \{B=b_1\},\{A=a_3\}\rightarrow \{C=c_2\}\}$ which is inconsistent for the value of $A$. Now consider the loop (lines 3-14) for $D=d_1$.  After strengthening $C_{a_2}$ with $D=d_1$ in line 4, we get $C_{a_2 d_1}=\{\{A=a_1\}\rightarrow \{C=c_2\},\{B=b_1\}\rightarrow \{D=d_2\},\{C=c_2\},\{B=b_3\}\rightarrow \{D=d_2\},\{D=d_2\}\rightarrow \{A=a_1\},\{B=b_2\},\{A=a_3\}\leftrightarrow \{B=b_1\},\{A=a_3\}\rightarrow \{C=c_2\}\}$ which is not inconsistent.  In line 8 the partial assignment induced by $D=d_1$ and $C_{a_2 d_1}$ is $K_{a_2 d_1}=d_1 c_2$.  At this point the Reduced LP-tree of $L_{a_2 b_2}$, given the instantiation of $D$ and $C$, become empty.  This indicates that we obtain the solution.  The solution $a_2 b_2 d_1 c_2$ is stored in $S$ in line 11.  The rest of the search tree is irrelevant which is denoted by $T$ in Figure~\ref{fig_search_lp_seatch_tree}.  Search-LP is terminated in line 12.

\subsection{Search-LP: Formal properties}
We establish the correctness of Search-LP using the theorem below.
\begin{theorem} Let $L$ be an LP-tree and $C$ be a set of constraints over $V$.  Search-LP returns the outcome $o$ if and only if $o$ is the most preferable feasible outcome with respect to $L$ and $C$.
\end{theorem}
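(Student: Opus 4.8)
The plan is to exploit that an LP-tree induces a \emph{total} order over the outcomes (Lemma~\ref{lemma_lex_total_order}), so that whenever the underlying CSP is consistent the most preferable feasible outcome is \emph{unique}. Consequently it suffices to establish the forward implication, namely that if Search-LP returns $o$ then $o$ is the most preferable feasible outcome, together with the fact that Search-LP returns $null$ exactly when the CSP is inconsistent; the reverse implication then follows from uniqueness and from the fact that Search-LP is deterministic. I would prove the forward implication by strong induction on $|V|$, the number of variables carried by the current LP-tree $L_r$, mirroring the recursive structure of the algorithm.

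For the inductive step, let $X$ be the root of $L_r$ with $x_1\succ x_2\succ\cdots\succ x_l$. Because $X$ is the root it has no ancestors, so $An(X)$ assigns the (empty) same value to every pair of outcomes; hence by Definition~\ref{definition_lex_order} the root value is decisive whenever two outcomes disagree on $X$. This means that every feasible outcome assigning $X$ a more preferred value is strictly preferred to every feasible outcome assigning a less preferred one, so the most preferable feasible outcome, if it exists, must assign $X$ the most preferred value $x_{i^*}$ admitting a feasible completion. This $x_{i^*}$ is precisely the first value for which the strengthened constraint set $C_{i^*}$ is consistent, i.e. the first branch that Search-LP does not prune in lines 4--7. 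I would then observe that the induced partial assignment $K_{i^*}$ of line 8 consists only of values \emph{forced} by $C_{i^*}$, so every feasible outcome extending $X=x_{i^*}$ must agree with $K_{i^*}$; thus restricting the problem to $X=x_{i^*}$ is the same as restricting it to $K_{i^*}$.

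Next I would reduce to the recursive subproblem. By Lemma~\ref{lemma_compatible_lp_tree_correctness} the Reduced LP-tree $L_{r_{i^*}}$ produced in line 9 (via Definition~\ref{definitin_compatible_lp_tree}) is compatible with $L_r$ given $K_{i^*}$, so the preference order among outcomes extending $K_{i^*}$ agrees between $L_{r_{i^*}}$ and $L_r$. Since $L_{r_{i^*}}$ has strictly fewer variables and the recursive call carries the tightened constraints $C_{i^*}$, the induction hypothesis gives that the call returns the most preferable feasible outcome of the reduced problem; combining it with $K\cup K_{i^*}$ yields, by compatibility, the most preferable feasible outcome of the whole problem. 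The base case is line 10, where an empty $L_{r_i}$ certifies that all variables are assigned and a single feasible outcome is returned; and if no root value admits a feasible completion then the CSP restricted to $L_r$ is inconsistent and the loop falls through to line 17, returning $null$, which establishes the $null$ characterization.

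The hard part will be justifying that the order in which Search-LP enumerates \emph{complete} outcomes really coincides with the LP-tree total order, despite two complications: constraint propagation may instantiate several variables at once through the forced assignment $K_i$, and the recursion operates on Reduced rather than original LP-trees. Both are handled by the two observations above, that forced values are common to all feasible completions (hence irrelevant to comparisons among feasible outcomes) and that compatibility (Lemma~\ref{lemma_compatible_lp_tree_correctness}) transports the reduced order faithfully back to $L$. Once the forward implication and the $null$ characterization are in place, the reverse implication is immediate: if $o$ is the most preferable feasible outcome then the CSP is consistent, so Search-LP returns some outcome which the forward implication identifies as the most preferable feasible outcome, and uniqueness forces it to equal $o$.
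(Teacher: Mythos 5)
Your proposal is correct, but it is organized quite differently from the paper's proof. The paper argues globally about the search tree: after invoking Lemma~\ref{lemma_compatible_lp_tree_correctness} once to justify that reduced-tree preferences are inherited by $L$, it proves two claims directly — (1) any outcome $o_1$ with $o_1\succ o$ must be infeasible, by contradiction, since the branch for $o_1$'s value at the first point of divergence is created before $o$'s branch (line 3), so a feasible $o_1$ would have caused termination before $o$ was ever reached; and (2) any feasible outcome $o_2$ left unsearched lies in a later-created branch, hence the differing variable's value satisfies $x\succ x_2$ and so $o\succ o_2$. Your route instead does a structural induction mirroring the recursion, characterizing the optimum level by level (first consistent root value, then forced assignments $K_i$, then the recursive optimum), and derives the reverse implication from uniqueness of the maximum under the total order of Lemma~\ref{lemma_lex_total_order} plus a ``returns $null$ iff inconsistent'' characterization. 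The trade-off: the paper's argument is shorter, but it silently assumes what your induction makes explicit — that whenever a branch contains a feasible outcome, the search inside that branch succeeds (the paper's step ``since $o_1$ is feasible, Search-LP returns $o_1$'' is slightly imprecise: the algorithm may return a \emph{different} feasible outcome of that branch, which still yields the contradiction, but establishing that it returns \emph{something} there is exactly the completeness fact your induction supplies). Your proof also treats the propagation-forced assignments $K_i$ explicitly, where the paper leaves their interaction with the comparison argument implicit; the cost is the extra scaffolding of the induction and the null characterization.
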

\begin{proof} Using Lemma~\ref{lemma_compatible_lp_tree_correctness}, a Reduced LP-tree is also a Compatible LP-tree which indicates that the preferences induced by the Reduced LP-tree in line 9 are also held by the original LP-tree, given the instantiation.  Therefore, in order to prove this theorem, we have to prove that: (1) for every $o_1\succ o$, $o_1$ is infeasible with respect to $C$; and (2) $o$ is preferred to every other feasible outcome $o_2$.  We prove these below.

(1)  This proof is by contradiction.  Assume that $o_1$ is feasible with respect to $C$.  Let $X$ be the variable such that $An(X)$ gives the same value $R$ to $o_1$ and $o$, and $X$ gives $x_1$ and $x$ to $o_1$ and $o$ correspondingly.  Therefore, we get that $o=Rxo{'}$ and $o_1=Rxo_{1}{'}$.  Since $o_1\succ o$, we get $x_1\succ x$ (by Definition~\ref{definition_lex_order}). According to line 3 of Search-LP, the branch $X=x_1$ is created before the branch $X=x$ is created, in the search tree.  It is easy to see that the outcome $o_1$ is associated with the branch $X=x_1$.  Since $o_1$ is feasible, Search-LP returns $o_1$ and is terminated.  Search-LP does not return $o$ which is a contradiction.  Therefore, $o_1$ cannot be feasible.

(2)  In this case, Search-LP returns $o$, and then stops searching other outcomes.  $o_2$ is one of the feasible outcomes in the unsearched outcomes. Let $o$ and $o_2$ differ on the values of variable $X$ such that $An(X)$ gives the same value $R$ to both $o$ and $o_2$.  Let $X$ gives $x$ and $x_2$ to $o$ and $o_2$ correspondingly.  Since $o$ is searched by Search-LP before searching for $o_2$, we get that the branch $X=x$ is created before the branch $X=x_2$ is created, in the search tree.  This gives us $x\succ x_2$ (using line 3), which also implies $o\succ o_2$.
\end{proof}

The efficiency of standard solving algoritms for a CSP can often be improved using various variable and value order heuristics~\cite{Mouhoub2011VariableValueOrderingCSP,Yong2017VariableValueOrderCSP}. Such variable order heuristics are applicable in solving Constrained CP-nets, and thus the efficiency can be improved~\cite{Alanazi2016constrained_cp_net}. However, we cannot apply such heuristics in our Search-LP algorithm because the instantiation occurs according to a defined variable and value order by the corresponding LP-tree.

In theory, both the Constrained LP-tree and the CSP are NP-complete and require exponential time effort in the worst case to obtain the solution(s) using backtrack solving algorithms. On the other hand, in practice, the Constrained LP-tree is no harder than the CSP if the goal when solving the CSP is to find all the feasible solutions while Search-LP stops as soon as the first feasible outcome is found. However, if solving the CSP means finding a single feasible solution (which is usually the case), then solving the CSP is easier than finding the optimal outcome for the Constrained LP-tree as the CSP solver will also stop after finding the first feasible solution.

\section{A Divide and Conquer Algorithm for Dominance Testing in Acyclic CP-nets}
\label{Acyclic_CP_DT_Section_Algorithm_Example}

In this section, first, we describe a divide and conquer algorithm for dominance testing in acyclic CP-nets. Then, we illustrate the algorithm using some examples. Finally, we explain the formal properties of the algorithm, and compare the algorithm with the existing methods.

\subsection{Algorithm: Acyclic-CP-DT}
%

\ignore{We propose algorithm Acyclic-CP-DT to perform the dominance testing for an acyclic CP-net.} The algorithm Acyclic-CP-DT has three inputs.  The first input is a sub CP-net $N$ over the set of variables $V$, which is initially the original CP-net $N_{orig}$.  The second and third inputs, $o_1$ and $o_2$, are two partial assignments over $V$, which are initially two outcomes of $N_{orig}$.  Acyclic-CP-DT returns ``yes" if $N\models o_1\succ o_2$ holds, or ``no'', if $N\models o_1\succ o_2$ does not hold.

\begin{algorithm}
\textbf{Algorithm 3.} Acyclic-CP-DT($N,o_1,o_2$)\\
\textbf{Input:} An acyclic CP-net $N$ over the set of variables $V$; two outcomes $o_1$ and $o_2$ on $V$

\textbf{Output:} The algorithm returns ``yes'' if $N\models o_1\succ o_2$ holds, or ``no'' if $N\models o_1\succ o_2$ does not hold

\begin{algorithmic}[1]
  \STATE Let $X$ be a variable such that the ancestor set $An(X)$ in $N$ gives the same value $R$ to both $o_1$ and $o_2$, and $X$ gives two distinct values $x_1$ and $x_2$ to $o_1$ and $o_2$ correspondingly
  \STATE Let $V-(An(X)\cup\{X\})$ gives $o_1'$ and $o_2'$ to $o_1$ and $o_2$ correspondingly
  \IF {$x_2\succ x_1$ given $An(X)=R$ in $N$}
      \STATE \textbf{Return} ``no''
  \ELSIF {$x_1\succ x_2$ given $An(X)=R$ in $N$ and $o_1'=o_2'$}
      \STATE \textbf{Return} ``yes''
  \ELSE
      \STATE Construct a sub CP-net $N_1$ by removing $An(X)\cup \{X\}$ from $N$, and restricting the CPTs of the remaining variables to $Rx_1$
      \IF {Acyclic-CP-DT($N_1,o_1',o_2'$) = ``yes''}
          \STATE \textbf{Return} ``yes''
      \ENDIF
      \STATE Construct the sub CP-net $N_2$ by removing $An(X)\cup \{X\}$ from $N$, and restricting the CPTs of the remaining variables to $Rx_2$
      \IF {Acyclic-CP-DT ($N_2,o_1',o_2'$) = ``yes''}
          \STATE \textbf{Return} ``yes''
      \ENDIF
      \FOR {each $x_i\in D(X)-\{x_1,x_2\}$ such that $x_1\succ x_i \succ x_2$ given $An(X)=R$}
 	      \STATE Construct the sub CP-net $N_i$ by removing $An(X)\cup \{X\}$ from $N$, and restricting the CPTs of the remaining variables to $Rx_i$
          \IF {Acyclic-CP-DT($N_i,o_1',o_2'$) = ``yes''}
               \STATE \textbf{Return} ``yes''
          \ENDIF
      \ENDFOR
      \FOR {each $o_3'\in (D(V-(An(X)\cup\{X\}))-\{o_1',o_2'\})$ \ignore{such that a same value is assigned to $o_1'$, $o_2'$ and $o_3'$ by a bottom portion of an arbitrary topological order} }
          \IF {Acyclic-CP-DT($N_2,o_3',o_2'$) = ``yes''}
              \IF {Acyclic-CP-DT($N_1,o_1',o_3'$) = ``yes''}
                    \STATE \textbf{Return} ``yes''
               \ENDIF
          \ENDIF
      \ENDFOR
  \ENDIF
  \STATE \textbf{Return} ``no''
  \end{algorithmic}
\end{algorithm}

In line 1, a variable $X$ is selected such that $An(X)$ in $N$ gives the same value $R$ to both $o_1$ and $o_2$, and $X$ gives two distinct values $x_1$ and $x_2$ to $o_1$ and $o_2$ correspondingly. If $o_1\neq o_2$ is true, we claim that such a variable always exists for the acyclic CP-net $N$. We explain this in the following. Let $X_1X_2\cdots X_n$ be an arbitrary topological order on the variables of $N$. Every variable $X_i$ (where $i=1$ to $n$) gives $x_{1i}$ and $x_{2i}$ to $o_1$ and $o_2$ correspondingly. Then we get: $o_1=x_{11}x_{12}\cdots x_{1n}$ and $o_2=x_{21}x_{22}\cdots x_{2n}$. Since we have $o_1\neq o_2$, there exists $x_{1i}\neq x_{2i}$ such that $x_{11}x_{12}\cdots x_{1(i-1)} = x_{21}x_{22}\cdots x_{2(i-1)}$. In this case, $X_i$ gives $x_{1i}$ and $x_{2i}$ to $o_1$ and $o_2$ correspondingly, and $An(X_i)$ gives the same value to both $o_1$ and $o_2$.

Note that in the rest of the algorithm, to answer $N\models o_1\succ o_2$, no instantiation of $An(X)$ except $R$ is considered. Thus, the problem of answering $o_1\succ o_2$ in $N$ is converted to the problem of answering $o_1\succ o_2$ in $N$ restricted to $An(X)=R$. We see that if $R$ is not \emph{empty}, the converted problem is computationally easier than the original problem. We call this notion -- the prefix elimination. If $R$ is $empty$, no conversion occurs. However, as Acyclic-CP-DT is a recursive algorithm, the conversion and the amount of reduced computation will vary for the subsequent calls, depending on the input instances. In line 2, we denote the remaining value of $o_1$ and $o_2$ as $o_1'$ and $o_2'$ correspondingly.

The notion of ordering query~\cite{Boutilier2004_CPnetJAIR} is adapted to formulate the base condition in line 3. If the condition is true, Acyclic-CP-DT returns ``no'', i.e., $N\models o_1\succ o_2$ does not hold.  This indicates that in such cases, Acyclic-CP-DT can answer the dominance query using a single check. Given that the condition in line 3 is sufficient, but not necessary, to answer the ordering query $N\not\models o_1\succ o_2$, there can be cases in which $N\not\models o_1\succ o_2$ is true but this is not captured by the condition. To fill the gap and regard the completeness of the method, Acyclic-CP-DT returns ``no'' in line 30 if it does not answer ``yes'' to $N\models o_1\succ o_2$ in lines 6, 10, 14, 19 or 25 after evaluating the corresponding criterion of the lines.

The second base condition in line 5 is for a ``yes'' return.  We see that if the lower portion of the topological order gives the same value to $o_1$ and $o_2$ (i.e., $o_1'=o_2'$), the algorithm returns ``yes'', i.e., $N\models o_1\succ o_2$ holds. This implication is a direct consequence of the CP-net semantics, given that $o_1$ and $o_2$ differ only on the values of variable $X$. This is also related to the suffix fixing principle~\cite{Boutilier2004_CPnetJAIR} that the common value of two outcomes which is given by some bottom portion of a topological order can be ignored for determining the dominance query without affecting the completeness of the search. If none of the base cases is true, Acyclic-CP-DT continues (lines 8-29).  For the rest of the algorithm, we have $x_1\succ x_2$ given $An(X)=R$, and $o_1'\neq o_2'$.

In line 8, a sub CP-net $N_1$ by removing $An(X)\cup \{X\}$ from $N$, and restricting the CPTs of the remaining variables to $Rx_1$, is constructed.  Note that in the worst case, $N_1$ consists of at least a variable less than the variables in $N$.  Acyclic-CP-DT is called for this sub CP-net, considering the partial assignments $o_1'$ and $o_2'$ as the new set of inputs.  If this call returns ``yes'', the main call also returns ``yes''.  Otherwise, it does the similar thing for $Rx_2$ (lines 12-15).  Note that the two sub calls (in lines 9 and 13) can be written in a compact form, i.e., \textbf{if} Acyclic-CP-DT($N_1,o_1',o_2'$) = ``yes'' \textbf{or} Acyclic-CP-DT($N_2,o_1',o_2'$) = ``yes'' \textbf{then Return} ``yes".  However, we stress to write this Boolean expression in a separate form.  In the compact form, both Acyclic-CP-DT($N_1,o_1',o_2'$) and Acyclic-CP-DT($N_2,o_1',o_2'$) are called; and then the return values are evaluated to find the result. In the separate form, Acyclic-CP-DT($N_1,o_1',o_2'$) is called first.  If the return value is ``yes'', the solution is reached; and call to Acyclic-CP-DT($N_2,o_1',o_2'$) is not needed.  This way, the required computation is saved.  However, we do not know which call between these two should be executed first for a better performance of Acyclic-CP-DT, and so we choose one arbitrarily.

Returning ``yes'' in line 10 or line 14 indicates that there is an improving flipping sequence from $o_2$ to $o_1$ where every outcome consists of $x_1$ or $x_2$ as the value of $X$.  This idea is restrictive, and will potentially impact the completeness of Acyclic-CP-DT. Rather, we also need to search for a possible improving flipping sequence from $o_2$ to $o_1$ in which at least one outcome $o_i$ contains a value of $X$ except $x_1$ and $x_2$.  Let $x_i$ be such a value.  To complete the search, we construct the sub CP-net $N_i$ by removing $An(X)\cup \{X\}$ from $N$, and restricting the CPTs of the remaining variables to $Rx_i$, in line 17.  If $N_i$ induces $o_1'\succ o_2'$, we can show that $N$ also induces $o_1\succ o_2$, and ``yes'' is returned in line 19. Look at that for a Binary valued CP-net, we do not need to consider this part, i.e., lines 16-21 can be ignored. In lines 8, 12 and 16, every $x_j\in D(X)-\{x_1,x_2\}$ such that $x_j$ does not participate to prove $x_1\succ x_2$ using the transitive closure is considered to be irrelevant to query $N\models o_1\succ o_2$. This notion is related to the forward pruning technique. However, Boutilier et al.~\cite{Boutilier2004_CPnetJAIR} suggested the forward pruning as a pre-processing step to reduce the search space for testing a dominance query, while Acyclic-CP-DT considers removing the irrelevant values in every recursive call for the corresponding sub CP-net.

Lines 22-28 correspond to the search due to the distinct values of the outcomes given by the lower portion of a topological order.  This part of the algorithm is expensive. We attempt to find an outcome $o_3$ such that $o_2$ can be improved to $o_3$, and $o_3$ can be improved to $o_1$. Finding such an outcome indicates that $o_1\succ o_2$ holds, and Acyclic-CP-DT returns ``yes''.  Note that we still do not need to search the portion of the search space where $An(X)$ gives a different partial assignment from $R$.  It indicates that in this part, the algorithm is at least better than the methods, described in~\cite{Boutilier2004_CPnetJAIR}. Finally, if a ``yes'' answer to $N\models o_1\succ o_2$ is not found, Acyclic-CP-DT returns ``no'' in line 30.

\subsection{Example}
For simplicity, we consider the CP-net $N$ in Figure~\ref{fig_divide_conquer_cpnet_sub_cpnet}~(i).  Its topological order $A,B,C$ is unique.  Consider a call to Acyclic-CP-DT($N,a_2 b_2 c_2,a_1 b_1 c_1$) to check if $N\models a_2 b_2 c_2\succ a_1 b_1 c_1$ holds or not.  Variable $A$ gives two distinct values to the outcomes, and $CPT(A)$ represents $a_1\succ a_2$.  Therefore, in line 4, Acyclic-CP-DT($N,a_2 b_2 c_2,a_1 b_1 c_1$) returns ``no'', i.e., $N\models a_2 b_2 c_2\succ a_1 b_1 c_1$ does not hold.  Similarly, there can be many queries in the network which can be answered with a single check.

\begin{figure}
\centering
\includegraphics[width=2.6in]{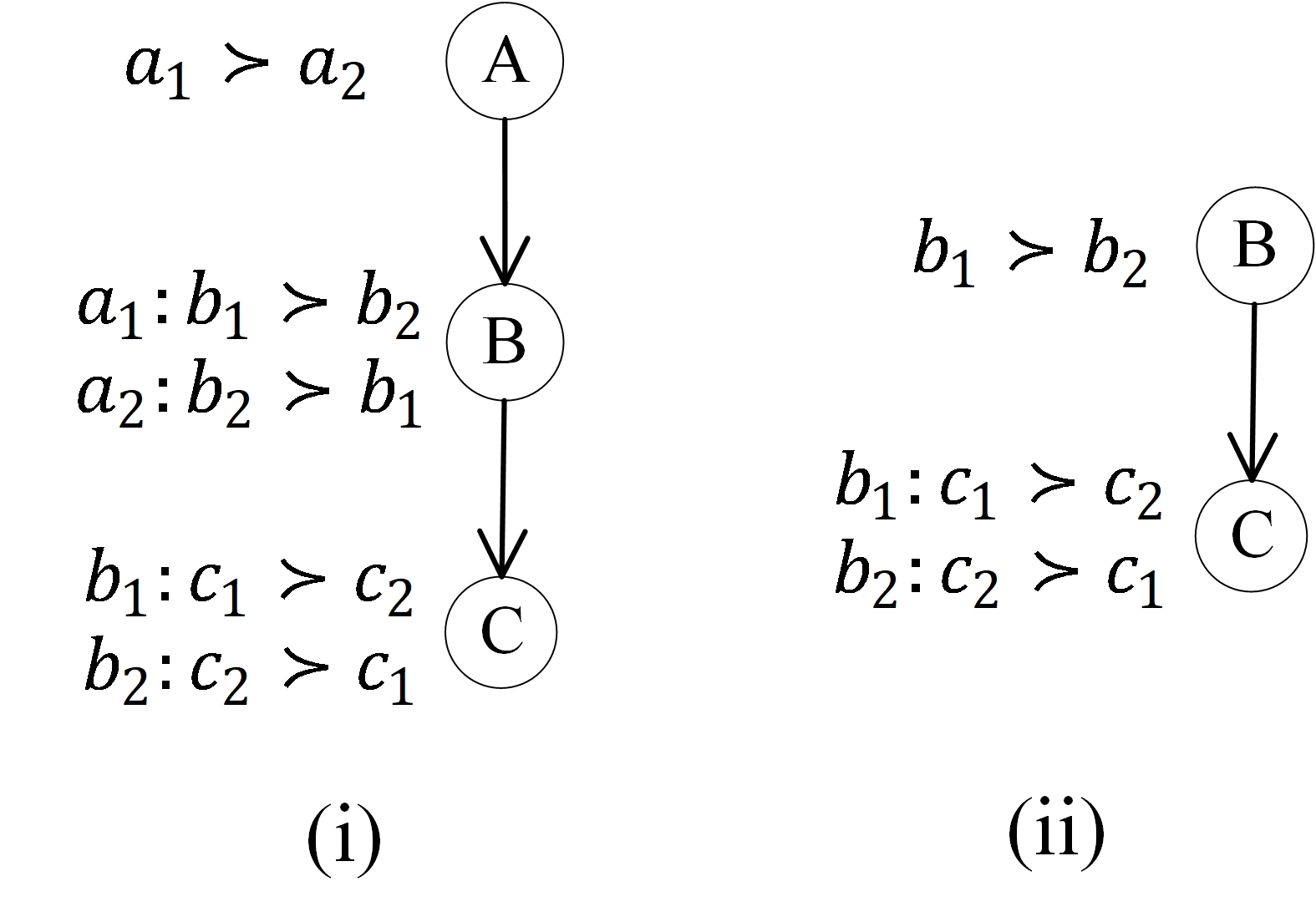}
\caption{(i) A CP-net and (ii) a sub CP-net.}
\label{fig_divide_conquer_cpnet_sub_cpnet}
\end{figure}

Consider the call to Acyclic-CP-DT($N,a_2 b_2 c_2,a_2 b_1 c_2$).  In this case, variable $B$ gives two distinct values to the outcomes while $A$ gives the same value $a_2$.  Given $A=a_2$, we have $b_2\succ b_1$.  Variable $C$ also gives the same value $c_2$ to both outcomes.  Therefore, in line 6, Acyclic-CP-DT($N,a_2 b_2 c_2,a_2 b_1 c_2$) returns ``yes''. In this way, Acyclic-CP-DT can simply answer any query in which two outcomes differ only on the values of a single variable.

Consider the call to Acyclic-CP-DT($N,a_1 b_1 c_1,a_2 b_1 c_2$).  For this call, the base conditions are not true.  Variable $A$ gives two distinct values to the outcomes.  We construct a sub CP-net $N_{a_1}$ by removing $A$ from $N$, and restricting $CPT(B)$ to $A=a_1$.  This sub CP-net is shown in Figure~\ref{fig_divide_conquer_cpnet_sub_cpnet}~(ii).  In line 9, Acyclic-CP-DT($N_{a_1},b_1 c_1,b_1 c_2$) is called.  For this call, $C$ gives two distinct values to the partial assignments $b_1 c_1$ and $b_1 c_2$.  Base condition in line 3 is not true as $c_2\succ c_1$ is not true for $B=b_1$; however in line 5, the condition is true.  Therefore, Acyclic-CP-DT($N_{a_1},b_1 c_1,b_1 c_2$) returns ``yes'', and consequently Acyclic-CP-DT($N,a_1 b_1 c_1,a_2 b_1 c_2$) also returns ``yes'', i.e., $N\models a_1 b_1 c_1\succ a_2 b_1 c_2$ holds.  Note that in the network, there are six improving flipping sequences from $a_2 b_1 c_2$ to $a_1 b_1 c_1$.  They are:\\

$1.$ $a_2 b_1 c_2\rightarrow a_1 b_1 c_2\rightarrow a_1 b_1 c_1$

$2.$ $a_2 b_1 c_2\rightarrow a_2 b_1 c_1\rightarrow a_1 b_1 c_1$

$3.$ $a_2 b_1 c_2\rightarrow a_2 b_2 c_2\rightarrow a_1 b_2 c_2  \rightarrow a_1 b_1 c_2\rightarrow a_1 b_1 c_1$

$4.$ $a_2 b_1 c_2\rightarrow a_2 b_1 c_1\rightarrow a_2 b_2 c_1  \rightarrow a_1 b_2 c_1\rightarrow a_1 b_1 c_1$

\hangindent=1cm \hangafter=1  $5.$ $a_2 b_1 c_2\rightarrow a_2 b_1 c_1\rightarrow a_2 b_2 c_1  \rightarrow a_2 b_2 c_2\rightarrow a_1 b_2 c_2\rightarrow a_1 b_1 c_2\rightarrow a_1 b_1 c_1$

\hangindent=1cm \hangafter=1 $6.$ $a_2 b_1 c_2\rightarrow a_2 b_1 c_1\rightarrow a_2 b_2 c_1  \rightarrow a_1 b_2 c_1\rightarrow a_1 b_2 c_2\rightarrow a_1 b_1 c_2\rightarrow a_1 b_1 c_1$ \\

However, Acyclic-CP-DT chooses the one with the least number of improving flips; in this example, it is $a_2 b_1 c_2\rightarrow a_1 b_1 c_2\rightarrow a_1 b_1 c_1$. This claim is understandable with the fact that Acyclic-CP-DT checks the easy ways first in lines 3, 5, 9 and 13. If Acyclic-CP-DT fails to answer the dominance query in the lines above, it checks the hard part in lines 16-21 and then in lines 22-28.

For instance, consider the call to Acyclic-CP-DT($N,a_1b_1c_2,a_2b_1c_1$).  We see that both Acyclic-CP-DT($N_{a_1},b_1c_2,b_1c_1$) and Acyclic-CP-DT($N_{a_2},b_1c_2,b_1c_1$) return ``no'', where $N_{a_1}$ and $N_{a_2}$ are obtained from $N$ by removing $A$ and restricting $CPT(B)$ to $A=a_1$ and $A=a_2$ correspondingly.  Therefore, Acyclic-CP-DT cannot obtain a solution using the conditions in lines 3, 5, 9 or 13.  Lines 16-21 are irrelevant as $N$ is a binary valued CP-net. We evaluate the lines 22-28.  Except $b_1c_2$ and $b_1c_1$, $\{B,C\}$ has two partial assignments which are $b_2c_1$ and $b_2c_2$. We can show that both Acyclic-CP-DT($N_{a_1},b_1c_2,b_2c_1$) and Acyclic-CP-DT($N_{a_2},b_2c_1,b_1c_1$) return ``yes''.  Therefore, the main call also returns ``yes'' in line 25, which indicates $N\models a_1b_1c_2\succ a_2b_1c_1$ holds. In this case, the improving flipping sequence is $a_2b_1c_1 \rightarrow a_2b_2c_1 \rightarrow a_1b_2c_1 \rightarrow a_1b_2c_2 \rightarrow a_1b_1c_2$.

\subsection{Acyclic-CP-DT: Formal Properties}

\label{Acyclic_CP_DT_Section_Formal_properties}

Before we prove the correctness and the completeness of Acyclic-CP-DT, we provide the lemmas below.

\begin{lemma} Let $o_1$ and $o_2$ be two outcomes of an acyclic CP-net $N$ over the variables set $V$.  Let $X$ be a variable such that $An(X)$ in $N$ gives the same value $R$ to both $o_1$ and $o_2$, and $X$ gives two different values $x_1$ and $x_2$ to $o_1$ and $o_2$ correspondingly.  Let $V-(An(X)\cup \{X\})$ gives $o_1'$ and $o_2'$ to $o_1$ and $o_2$ correspondingly.  Let $N_1$ and $N_2$ be two sub CP-nets obtained from  $N$ by removing $An(X)\cup \{X\}$ and restricting the CPTs of the remaining variables to $Rx_1$ and $Rx_2$ correspondingly.  Then, the following are true if we have $x_1\succ x_2$ for $An(X)=R$:

\begin{enumerate}
	\item $(N_1\models o_1'\succ o_2')\Rightarrow (N\models o_1\succ o_2 )$.
	\item $(N_2\models o_1'\succ o_2')\Rightarrow (N\models o_1\succ o_2 )$.
\end{enumerate}
\label{lemma_divide_conquer_yes_return1}
\end{lemma}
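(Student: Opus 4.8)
The plan is to work entirely with improving flipping sequences, since the CP-net semantics recalled earlier tells us that $N\models o_1\succ o_2$ holds exactly when there is a sequence of improving flips from $o_2$ to $o_1$. I would first fix notation: write $o_1=Rx_1o_1'$ and $o_2=Rx_2o_2'$, where $R$ is the common value given by $An(X)$ and the suffixes $o_1',o_2'$ range over $V-(An(X)\cup\{X\})$, which is precisely the variable set of both $N_1$ and $N_2$.

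For part~1, I would start from the hypothesis $N_1\models o_1'\succ o_2'$, which yields an improving flipping sequence $o_2'\rightarrow\cdots\rightarrow o_1'$ inside $N_1$. The central step is to \emph{lift} this sequence into $N$ by prepending the fixed prefix $Rx_1$ to every outcome, producing $Rx_1o_2'\rightarrow\cdots\rightarrow Rx_1o_1'=o_1$. I then need to verify that each lifted step is still a legal improving flip in $N$, and this is where the construction of $N_1$ does the work: a flip of a variable $Y$ in $N_1$ is governed by $CPT(Y)$ restricted to $Rx_1$, i.e. with the parents of $Y$ lying in $An(X)\cup\{X\}$ frozen at their values in $Rx_1$; in $N$ with $An(X)=R$ and $X=x_1$ those parents take exactly the same values, so the preference order over $D(Y)$ is identical and the flip remains improving. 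Finally I prepend a single improving flip on $X$ itself, from $o_2=Rx_2o_2'$ to $Rx_1o_2'$. This flip is legal because $Pa(X)\subseteq An(X)$ takes the value induced by $R$ and, by hypothesis, $x_1\succ x_2$ given $An(X)=R$. Concatenating yields the improving flipping sequence $o_2\rightarrow Rx_1o_2'\rightarrow\cdots\rightarrow o_1$, hence $N\models o_1\succ o_2$.

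Part~2 is symmetric, with the single $X$-flip placed at the \emph{end} rather than the beginning. From $N_2\models o_1'\succ o_2'$ I obtain a sequence $o_2'\rightarrow\cdots\rightarrow o_1'$ in $N_2$; lifting it with prefix $Rx_2$ gives $o_2=Rx_2o_2'\rightarrow\cdots\rightarrow Rx_2o_1'$ in $N$, and then the improving flip $Rx_2o_1'\rightarrow Rx_1o_1'=o_1$ on $X$ (again justified by $x_1\succ x_2$ given $An(X)=R$) completes the sequence. The same CPT-restriction argument shows the lifted flips are valid.

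The main obstacle, and the only step that is more than bookkeeping, is this lifting justification: I must argue that restricting the CPTs to $Rx_i$ exactly captures the behaviour of $N$ when $An(X)\cup\{X\}$ is held at $Rx_i$, so that improving flips transfer faithfully between $N_i$ and $N$, and that no variable of the frozen prefix is ever disturbed, keeping every lifted step a genuine single-variable \emph{ceteris paribus} flip. Once that correspondence is established, both implications follow by simply gluing one $X$-flip onto the lifted sequence.
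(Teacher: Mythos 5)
Your proof is correct, and its skeleton is the same as the paper's: in both arguments part~1 glues the single ceteris paribus flip on $X$ at the $o_2'$ end ($Rx_2o_2'\to Rx_1o_2'$) to the lifted $N_1$-comparison, and part~2 glues the flip at the $o_1'$ end ($Rx_2o_1'\to Rx_1o_1'$) to the lifted $N_2$-comparison. The difference is the level at which you work. The paper stays at the level of the entailment relation: it writes the flip as a preference $Rx_1o_2'\succ Rx_2o_2'$ (resp.\ $Rx_1o_1'\succ Rx_2o_1'$), silently promotes $N_i\models o_1'\succ o_2'$ to $N\models Rx_io_1'\succ Rx_io_2'$, and finishes by transitivity of $\succ$. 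You instead work with improving flipping sequences: you extract a sequence in $N_i$, lift it flip-by-flip under the frozen prefix $Rx_i$, and concatenate. Your route has the merit of making rigorous exactly the step the paper takes for granted --- that restricting the CPTs to $Rx_i$ faithfully reproduces the behaviour of $N$ when $An(X)\cup\{X\}$ is held at $Rx_i$, so sub-net preferences really do transfer to $N$; your CPT-restriction argument for each lifted flip is the justification the paper omits. The cost is that you import both halves of the Boutilier et al.\ equivalence between entailment and flipping sequences (in particular the nontrivial completeness half, needed to extract a sequence from $N_i\models o_1'\succ o_2'$), whereas the paper's transitivity argument needs no such detour. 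Both proofs are sound; yours is more self-documenting on the lifting, the paper's is more economical in what it invokes.
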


\begin{proof} $1.$ From CP-net semantics, we get:  $Rx_1 o_2'\succ Rx_2 o_2'$.  Therefore, we have:\\

$(N_1\models o_1'\succ o_2')\land (Rx_1 o_2'\succ Rx_2 o_2') $\\
$\Rightarrow (Rx_1 o_1'\succ Rx_1 o_2')\land (Rx_1 o_2'\succ Rx_2 o_2' ) $\\
$\Rightarrow Rx_1 o_1'\succ Rx_2 o_2'$ [Transitivity]\\
$\Rightarrow o_1\succ o_2 $.\\

\noindent $2.$ From CP-net semantics, we get: $Rx_1 o_1'\succ Rx_2 o_1'$.  Therefore, we have:\\

$(N_2\models o_1'\succ o_2')\land (Rx_1 o_1'\succ Rx_2 o_1')$\\
$\Rightarrow (Rx_2 o_1'\succ Rx_2 o_2')\land (Rx_1 o_1'\succ Rx_2 o_1' ) $\\
$\Rightarrow Rx_1 o_1'\succ Rx_2 o_2'$ [Transitivity]\\
$\Rightarrow o_1\succ o_2$. 
\end{proof}

To answer the ordering queries in acyclic CP-nets, Boutilier et al.~\cite{Boutilier2004_CPnetJAIR} established a sufficient but not necessary condition. We extend the condition in case each CPT in the CP-net does not necessarily represent a total order over the corresponding variable's values for each combination of the parent set. The preference order on the values can also be a strict partial order~\cite{Ahmed2018PartialCPnet}. For example, we consider that if CPT($X$) for variable $X$ (where $x_1,x_2\in D(X)$) does not represent $x_1\succ x_2$, then $x_2\succ x_1$ is true, or $x_1$ and $x_2$ are incomparable. The latter can be interpreted as the user did not provide a preference between $x_1$ and $x_2$. The lemma below is an extension of Corollary 4 in~\cite{Boutilier2004_CPnetJAIR}.

\begin{lemma} Let $o_1$ and $o_2$ be two outcomes of an acyclic CP-net $N$ over $V$.  Let $X$ be a variable such that $An(X)$ in $N$ gives the same value $R$ to both $o_1$ and $o_2$, and $X$ gives two different values $x_1$ and $x_2$ to $o_1$ and $o_2$ correspondingly. Let $V-(An(X)\cup \{X\})$ gives $o_1'$ and $o_2'$ to $o_1$ and $o_2$ correspondingly. Then, if $CPT(X)$ does not represent $x_1\succ x_2$ for $An(X)=R$, then we get that $N\not\models o_1\succ o_2$ is true.
\label{lemma_divide_conquer_no_return1}
\end{lemma}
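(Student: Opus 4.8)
The plan is to prove the statement directly by working with the improving-flipping-sequence characterization of dominance: recall that $N\models o_1\succ o_2$ holds if and only if there is an improving flipping sequence from $o_2$ to $o_1$~\cite{Boutilier2004_CPnetJAIR}. Hence it suffices to show that, under the hypothesis that $CPT(X)$ does not represent $x_1\succ x_2$ given $An(X)=R$, no such sequence can exist. First I would assume, for contradiction, an improving flipping sequence $o_2=w_0\to w_1\to\cdots\to w_k=o_1$, in which each step flips a single variable to a $\succ$-better value according to the CPT row determined by the current assignment to that variable's parents.

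The key step is to show that every variable in $An(X)$ keeps its value $R$ throughout the whole sequence. I would establish this by projecting the sequence onto the set $An(X)$, which is closed under taking parents, since the parents of any ancestor of $X$ are themselves ancestors of $X$. Because this set is ancestor-closed, deleting from the sequence the steps that flip variables outside $An(X)$ yields a legitimate improving flipping sequence for the sub-CP-net induced on $An(X)$, now running from $o_2$ restricted to $An(X)$, namely $R$, back to $o_1$ restricted to $An(X)$, namely $R$. An improving flipping sequence from an outcome to itself must be empty, since a nonempty one would force $R\succ R$, contradicting irreflexivity. Therefore no variable of $An(X)$ --- in particular no parent of $X$ --- ever flips, so the parent context of $X$ is pinned to its value in $R$ at every step. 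Consequently each of the (at least one) flips of $X$ is an improving flip read off the single row $CPT(X)$ given $An(X)=R$; chaining these flips of $X$ and using transitivity of that preference order yields $x_1\succ x_2$ in $CPT(X)$ given $R$, which contradicts the hypothesis. This contradiction establishes $N\not\models o_1\succ o_2$.

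The main obstacle is precisely the worry that an improving sequence might temporarily change an ancestor of $X$ to an alternative context in which $CPT(X)$ \emph{does} prefer $x_1$, flip $X$ there, and then restore the ancestor to $R$. The projection argument is what rules this out: because improving flips are monotone, i.e. one-directional, they can never return a sub-outcome to a previously held value, so the ancestors, being fixed at $R$ at both endpoints, cannot deviate at all in between. I would also note that this matches, and extends to the partial-order CPT setting of~\cite{Ahmed2018PartialCPnet}, the prefix-style reasoning behind the ordering-query test of~\cite{Boutilier2004_CPnetJAIR}: the only place the hypothesis is used is the final transitivity step, so the argument is insensitive to whether $CPT(X)$ given $R$ is a total or merely partial order over $D(X)$.
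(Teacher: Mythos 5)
Your proof is correct and follows essentially the same route as the paper's: a proof by contradiction via the improving-flipping-sequence characterization of dominance, concluding that any sequence from $o_2$ to $o_1$ would force $CPT(X)$ to represent $x_1\succ x_2$ given $An(X)=R$. The only difference is one of rigor: where the paper's proof simply asserts that dominance yields an improvement from $Rx_2o_1'$ to $Rx_1o_1'$, you explicitly justify that the ancestor context stays pinned at $R$ (via the projection onto the ancestor-closed set $An(X)$ and irreflexivity) and then chain the flips of $X$ by transitivity --- precisely the step the paper leaves implicit.
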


\begin{proof} We prove this by contradiction.  Assume that $N\models o_1\succ o_2$ holds.  Therefore, we get $Rx_1 o_1'\succ Rx_2 o_2'$. This means that there exists $Rx_2 o_1'$ such that $Rx_2 o_1'$ can be improved to $Rx_1 o_1'$.  This contradicts with $CPT(X)$ that does not represent $x_1\succ x_2$ for $An(X)=R$.  Therefore, $N\models o_1\succ o_2$ cannot hold. 
\end{proof}

\begin{lemma} Let $o_1$ and $o_2$ be two outcomes of an acyclic CP-net $N$ over $V$.  Let $X$ be a variable such that $An(X)$ in $N$ gives the same value $R$ to both $o_1$ and $o_2$, and $X$ gives two different values $x_1$ and $x_2$ to $o_1$ and $o_2$ correspondingly. Let $V-(An(X)\cup \{X\})$ gives $o_1'$ and $o_2'$ to $o_1$ and $o_2$ correspondingly.  Let $o_1'\succ o_2'$ is not true for $An(X)\cup \{X\}=Rx_2$ or $An(X)\cup \{X\}=Rx_1$.  If we have $x_1\succ x_2$, $o_1'\neq o_2'$ and there is no $o_3'\in (D(V-(An(X)\cup \{X\}))-\{o_1',o_2'\})$ such that  $o_3'\succ o_2'$ for $An(X)\cup \{X\}=Rx_2$ and $o_1'\succ o_3'$ for $An(X)\cup \{X\}=Rx_1$, then $N\models o_1\succ o_2$ does not hold.
\label{lemma_divide_conquer_no_return_2}
\end{lemma}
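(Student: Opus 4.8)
The plan is to prove the contrapositive: assuming $N\models o_1\succ o_2$ together with $x_1\succ x_2$ given $An(X)=R$ and $o_1'\neq o_2'$, I will exhibit either an improving sequence witnessing $N_1\models o_1'\succ o_2'$, or one witnessing $N_2\models o_1'\succ o_2'$, or an intermediate assignment $o_3'$ of the form forbidden in the hypothesis, so that in every case a hypothesis is violated. The starting point is the flipping-sequence characterization of dominance for acyclic CP-nets~\cite{Boutilier2004_CPnetJAIR}: $N\models o_1\succ o_2$ yields an improving flipping sequence $o_2=s_0\to s_1\to\cdots\to s_m=o_1$, and I will analyze the structure of this sequence.

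First I would establish a prefix-elimination claim: every outcome $s_t$ assigns the value $R$ to $An(X)$. This follows by topological induction on $An(X)$. The topologically first ancestor is a root of $N$, so its flips are unconditional; since improving flips strictly increase its value (even under a partial order on the domain, transitivity and asymmetry forbid returning to an abandoned value) and it starts and ends at its $R$-value, it can never leave $R$. Inductively, once all earlier ancestors are pinned to $R$, each remaining ancestor has a fixed CPT context, and the same monotonicity argument pins it to its $R$-value. Consequently $Pa(X)\subseteq An(X)$ is fixed at $R$ throughout, so every flip of $X$ obeys the single fixed preference order on $D(X)$ given $R$; as improving flips are strict, the value of $X$ is nondecreasing along the sequence and rises from $x_2$ to $x_1$.

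Next I specialize to the setting in which $X$ takes only the values $x_1$ and $x_2$, so that $X$ flips exactly once. The sequence then splits cleanly into a block of $W$-flips with $X=x_2$, the single flip $Rx_2o_3'\to Rx_1o_3'$ for some assignment $o_3'$ to $W=V-(An(X)\cup\{X\})$, and a block of $W$-flips with $X=x_1$. Because $An(X)$ is pinned to $R$, the first block restricts to an improving sequence from $o_2'$ to $o_3'$ in $N_2$, and the second to an improving sequence from $o_3'$ to $o_1'$ in $N_1$. A case analysis on $o_3'$ then closes the argument: if $o_3'\notin\{o_1',o_2'\}$ it is precisely the witness whose existence is denied; if $o_3'=o_2'$ the second block gives $N_1\models o_1'\succ o_2'$; and if $o_3'=o_1'$ the first block gives $N_2\models o_1'\succ o_2'$, where in the last two cases $o_1'\neq o_2'$ guarantees the relevant sequence is genuinely improving. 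Each alternative contradicts a hypothesis, so $N\models o_1\succ o_2$ cannot hold.

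The hard part is the multi-valued case, where $X$ may pass through values other than $x_1$ and $x_2$ on its way from $x_2$ up to $x_1$. Then the sequence contains several $X$-flips interleaved with $W$-flips, the clean two-block decomposition fails, and any witness would have to be threaded through sub CP-nets restricted to those intermediate values, which is exactly the behaviour addressed by lines 16--21 and not captured by the single $o_3'$ of this lemma. I would therefore either read the statement as confined to the values reachable under $x_1\succ x_2$ (in particular binary domains) or couple it with the companion treatment of intermediate values; the delicate bookkeeping is to project the interleaved $X$-flips and $W$-flips onto well-defined improving sequences in each restricted sub-net, which is where the completeness of the procedure becomes genuinely intricate.
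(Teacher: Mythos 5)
Your proof follows the same basic route as the paper's: assume an improving flipping sequence from $Rx_2o_2'$ to $Rx_1o_1'$ exists, split it at the flip of $X$, read the two resulting blocks as improving sequences in the sub-nets restricted to $Rx_2$ and $Rx_1$, and let the crossing suffix $o_3'$ contradict one of the hypotheses according to whether it equals $o_2'$, equals $o_1'$, or is a third assignment. The difference is rigor. The paper's entire argument is the one-sentence assertion that ``$Rx_2o_2'$ can be improved to some $Rx_2o_3'$, and $Rx_2o_3'$ can be improved to $Rx_1o_1'$,'' which silently uses both ingredients you prove explicitly: the prefix-elimination claim (each ancestor of $X$, taken in topological order, has a fixed CPT context once its predecessors are pinned at $R$, and a variable whose flips are strictly improving and which starts and ends at its $R$-value can never move), and the assumption that $X$ flips exactly once, directly from $x_2$ to $x_1$. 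Your case analysis on $o_3'$ is also sharper than the paper's, which never separates the cases $o_3'=o_1'$ and $o_3'=o_2'$, where the contradiction is with the first hypothesis rather than with the nonexistence of a witness. Within the two-value regime your argument is correct and complete.

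Your refusal to extend the argument to multi-valued $X$ is not excessive caution: the lemma as stated is false in that case, and the paper's proof simply glosses over this. Concretely, let $V=\{X,Y\}$ with $Y$ a child of $X$, $D(X)=\{x_1,x_i,x_2\}$, $CPT(X)$ giving $x_1\succ x_i\succ x_2$, and $CPT(Y)$ giving $y_2\succ y_1$ under both $x_1$ and $x_2$ but $y_1\succ y_2$ under $x_i$. Take $o_1=x_1y_1$ and $o_2=x_2y_2$, so $R$ is empty, $o_1'=y_1$, $o_2'=y_2$. Every hypothesis of the lemma holds: $x_1\succ x_2$, $o_1'\neq o_2'$, neither the restriction to $Rx_1$ nor the restriction to $Rx_2$ makes $y_1\succ y_2$ true, and the witness condition is vacuous because $D(Y)-\{y_1,y_2\}$ is empty. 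Yet $x_2y_2\to x_iy_2\to x_iy_1\to x_1y_1$ is an improving flipping sequence, so $N\models o_1\succ o_2$ holds and the conclusion fails. The failure occurs exactly where you predicted: the sequence threads through the intermediate value $x_i$, which is the situation targeted by lines 16--21 of the algorithm (and those lines do catch this instance, since the sub-net $N_i$ restricted to $Rx_i$ satisfies $y_1\succ y_2$). So your proposed repair is the right reading: either confine the lemma to the case where no value of $X$ lies strictly between $x_2$ and $x_1$ given $R$ (in particular binary domains), or augment its hypotheses with the corresponding conditions on the intermediate sub-nets $N_i$. The same repair is needed in the ``third case'' of the proof of Theorem~\ref{theorem_divide_conquer_completeness}, whose stated conditions likewise omit the intermediate values; and, as you suspect, the bookkeeping for sequences that interleave $X$-flips and suffix-flips across several intermediate values is genuinely delicate and is not fully disposed of by the single-witness check of this lemma.
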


\begin{proof} Assume that $N\models o_1\succ o_2$ holds.  Therefore, there is an improving flipping sequence from $Rx_2 o_2'$ to $Rx_1 o_1'$.  This indicates that $Rx_2 o_2'$ can be improved to some $Rx_2 o_3'$, and $Rx_2 o_3'$ can be improved to $Rx_1 o_1'$.  In other words, we have $o_3'\succ o_2'$ for $An(X)\cup \{X\}=Rx_2$ and $o_1'\succ o_3'$ for $An(X)\cup \{X\}=Rx_1$.  However, this contradicts with our assumption.  Therefore, $N\models o_1\succ o_2$ cannot hold. 
\end{proof}

We now provide our main result -- correctness and completeness of Acyclic-CP-DT.

\begin{theorem}~(Correctness) Let $o_1$ and $o_2$ be two outcomes of an acyclic CP-net $N$ over $V$.  Acyclic-CP-DT($N,o_1,o_2$) returns ``yes'' if $N\models o_1\succ o_2$ holds.
\end{theorem}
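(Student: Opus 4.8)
The statement asserts the completeness direction: whenever $N \models o_1 \succ o_2$, the call Acyclic-CP-DT($N,o_1,o_2$) must return ``yes''. The plan is to prove this by strong induction on the number of variables $|V|$ of $N$, exploiting the fact that every recursive call is made on a sub CP-net ($N_1$, $N_2$, or $N_i$) obtained by deleting $An(X)\cup\{X\}$, hence on strictly fewer variables. The induction hypothesis is precisely the completeness claim for all smaller nets.

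Assume $N \models o_1 \succ o_2$; by irreflexivity $o_1\neq o_2$, so a variable $X$ as in line~1 exists and is fixed by the algorithm, together with $R$, $x_1$, $x_2$, $o_1'$, $o_2'$, $N_1$, $N_2$. First I would dispatch the base conditions. Since $N\models o_1\succ o_2$, the contrapositive of Lemma~\ref{lemma_divide_conquer_no_return1} forces $CPT(X)$ to represent $x_1\succ x_2$ given $An(X)=R$; by asymmetry $x_2\succ x_1$ then fails, so line~3 does not fire and the algorithm does not return ``no'' at line~4. If moreover $o_1'=o_2'$, line~5 fires and ``yes'' is returned at line~6 (this also subsumes the inductive base case, e.g. $|V|=1$, where $An(X)$ is empty and $o_1'=o_2'$ trivially). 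It remains to treat $o_1'\neq o_2'$, where the algorithm enters the ELSE block and executes the recursive branches.

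The crux is the contrapositive of Lemma~\ref{lemma_divide_conquer_no_return_2}. With the established facts $x_1\succ x_2$ and $o_1'\neq o_2'$, that lemma guarantees $N\models o_1\succ o_2$ implies at least one of: (a) $N_1\models o_1'\succ o_2'$; (b) $N_2\models o_1'\succ o_2'$; or (c) there is some $o_3'\in D(V-(An(X)\cup\{X\}))-\{o_1',o_2'\}$ with $N_2\models o_3'\succ o_2'$ and $N_1\models o_1'\succ o_3'$. I would then argue by contradiction: suppose the algorithm instead reached line~30 and returned ``no''. In case (a), the induction hypothesis applied to the smaller net $N_1$ makes the line~9 call return ``yes'', so the algorithm returns ``yes'' at line~10, a contradiction; case (b) is identical via lines~13--14. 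In case (c), reaching line~30 means the loop of lines~22--28 ran to completion; since the guaranteed $o_3'$ lies in that loop's enumeration range, and since by the induction hypothesis both Acyclic-CP-DT($N_2,o_3',o_2'$) and Acyclic-CP-DT($N_1,o_1',o_3'$) return ``yes'', that iteration would have returned ``yes'' at line~25, again a contradiction. Hence the algorithm cannot return ``no'', and therefore returns ``yes''.

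Note that this argument uses the induction hypothesis only in the completeness (dominance $\Rightarrow$ ``yes'') direction and never needs soundness of the sub-calls, which keeps the three cases uniform; the intermediate-value branch (lines~16--21) only adds further ``yes'' routes and so never threatens completeness. The main obstacle is essentially inherited from Lemma~\ref{lemma_divide_conquer_no_return_2}: its decomposition of an improving flipping sequence from $Rx_2o_2'$ to $Rx_1o_1'$ into an ``improve within $x_2$, flip $X$ once, improve within $x_1$'' form is what licenses the clean three-way split, and the delicate bookkeeping step to verify is that the $o_3'$ it produces is exactly one of those scanned by the loop at lines~22--28.
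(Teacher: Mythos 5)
Your proof is correct given the paper's lemmas, but it is not the paper's proof of this theorem --- it is, in substance, the contrapositive of the paper's proof of the companion completeness result (Theorem~\ref{theorem_divide_conquer_completeness}). The paper's own argument for the present theorem runs in the opposite logical direction: it inducts on the number of variables to show that every ``yes'' the algorithm emits (lines 6, 10, 14, 19, 25) is \emph{sound}, i.e.\ that a ``yes'' return implies $N\models o_1\succ o_2$, and its key tool is Lemma~\ref{lemma_divide_conquer_yes_return1} plus transitivity; Lemma~\ref{lemma_divide_conquer_no_return_2} is never used there. You instead prove the implication exactly as the statement is worded --- $N\models o_1\succ o_2$ forces a ``yes'' --- by showing that neither ``no'' exit (line 4 or line 30) is reachable under dominance, using the contrapositives of Lemma~\ref{lemma_divide_conquer_no_return1} and Lemma~\ref{lemma_divide_conquer_no_return_2}, with the induction hypothesis invoked only in the dominance-implies-``yes'' direction. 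So the two proofs use disjoint key lemmas and establish converse implications: read literally, the paper's ``Correctness'' proof actually establishes the content of its ``Completeness'' statement and vice versa, and your proof of the literal statement is, up to contraposition, the argument the paper gives under Theorem~\ref{theorem_divide_conquer_completeness}. What your route buys is fidelity to the statement as written and a clean, uniform induction; what the paper's route buys is robustness, since it rests only on the easy transitivity facts of Lemma~\ref{lemma_divide_conquer_yes_return1}.

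One caveat worth flagging: your argument places the entire weight on Lemma~\ref{lemma_divide_conquer_no_return_2}, whose proof decomposes an improving flipping sequence as ``improve under $Rx_2$, flip $X$ once, improve under $Rx_1$'' and thereby silently ignores sequences passing through intermediate values $x_i$ of $X$ --- exactly the sequences lines 16--21 of the algorithm exist to catch. A symptom of this is that your proof renders lines 16--21 logically redundant (you dismiss them as ``further yes routes''): if Lemma~\ref{lemma_divide_conquer_no_return_2} were airtight for multi-valued domains, the algorithm would not need that loop at all. Since the paper states the lemma and its own completeness proof leans on it in the same way, your proof stands relative to the paper; but the difficulty you acknowledge at the end has been relocated into that lemma, not eliminated, whereas the paper's soundness-style proof of this particular theorem does not depend on it.
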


\begin{proof} In call Acyclic-CP-DT($N,o_1,o_2$), let $X$ be a variable such that $An(X)$ in $N$ gives the same value $R$ to both $o_1$ and $o_2$, and $X$ gives two different values $x_1$ and $x_2$ to $o_1$ and $o_2$ correspondingly.  Let $V-(An(X)\cup \{X\})$ gives $o_1'$ and $o_2'$ to $o_1$ and $o_2$ correspondingly.

This proof is by induction on the number of variables.  First, we prove two base cases in lines 4 and 6.  (1) If $CPT(X)$ represents $x_2\succ x_1$ for $An(X)=R$, $CPT(X)$ does not represent $x_1\succ x_2$ for $An(X)=R$. According to Lemma~\ref{lemma_divide_conquer_no_return1}, $N\models o_1\succ o_2$ does not hold and Acyclic-CP-DT($N,o_1,o_2$) correctly returns ``no'', in line 4.  (2) If we have $x_1\succ x_2$ for $An(X)=R$ and $o_1'=o_2'$, this indicates that the outcomes $Rx_1 o_1'$ and $Rx_2 o_2'$ differ only on the value of $X$.  By CP-net semantics, we get $Rx_1 o_1'\succ Rx_2 o_2'$ which is $o_1\succ o_2$.  Therefore, Acyclic-CP-DT($N,o_1,o_2$) correctly returns ``yes'' in line 6.

Let us consider the inductive hypothesis that Acyclic-CP-DT returns correct result for any acyclic CP-net with less than $n$ variables.  Let $N$ be the acyclic CP-net with $n$ variables. We show in the following that Acyclic-CP-DT correctly returns ``yes'' in lines 10, 14, 19 and 25.

According to the inductive hypothesis, call Acyclic-CP-DT($N_1,o_1',o_2'$) in line 9 is correct as $N_1$ is obtained from $N$ by removing at least a variable.  Similarly, call Acyclic-CP-DT($N_2,o_1',o_2'$) in line 13 is correct.  Now, we get that Acyclic-CP-DT($N_1,o_1',o_2'$) = ``yes'' implies $N_1\models o_1'\succ o_2'$ and Acyclic-CP-DT($N_2,o_1',o_2'$) = ``yes'' implies $N_2\models o_1'\succ o_2'$.  Using Lemma~\ref{lemma_divide_conquer_yes_return1}, in both cases, we get $o_1\succ o_2$.  Therefore, Acyclic-CP-DT($N,o_1,o_2$) correctly returns ``yes'' in line 10 and line 14.

The inductive hypothesis also implies that the call in line 18 is correct. Then, we get:\\

Acyclic-CP-DT($N_i,o_1',o_2'$) = ``yes''\\
$\Rightarrow N_i\models o_1'\succ o_2'$\\
$\Rightarrow Rx_i o_1'\succ Rx_i o_2'$

\noindent By CP-net semantics (given $x_1\succ x_i\succ x_2$ for $An(X)=R$), we get:  $Rx_1o_1'\succ Rx_io_1'$ and $Rx_io_2'\succ Rx_2o_2'$ hold.  Then, the principle of transitivity closure implies $Rx_1o_1'\succ Rx_2o_2'$ which is $o_1\succ o_2$.  Therefore, Acyclic-CP-DT correctly returns ``yes'' in line 19.

Now, consider that $o_3'$ is a partial assignment on $V-(An(X)\cup \{X\})$ except $o_1'$ and $o_2'$.  If both conditions in lines 23 and 24 are true, then we get:\\

 \hangindent=1.5cm  (Acyclic-CP-DT($N_2,o_3',o_2'$) = ``yes'') $\land$ (Acyclic-CP-DT($N_1,o_1',o_3'$) = ``yes'')

\noindent $\Rightarrow (N_2\models o_3'\succ o_2')\land (N_1\models o_1'\succ o_3')$

\noindent $\Rightarrow (N\models Rx_2 o_3'\succ Rx_2 o_2')\land (N\models Rx_1 o_1'\succ Rx_1 o_3')$

\noindent $\Rightarrow (N\models Rx_2 o_3'\succ Rx_2 o_2')\land (N\models Rx_1 o_1'\succ Rx_1 o_3') \land (N\models Rx_1 o_3'\succ Rx_2 o_3')$ [Using CP-net semantics]

\noindent $\Rightarrow N\models Rx_1 o_1'\succ Rx_2 o_2'$ [Transitivity]

\noindent $\Rightarrow N\models o_1\succ o_2$ \\

\hangindent=0cm \noindent Therefore, Acyclic-CP-DT($N,o_1,o_2$) correctly returns ``yes'' in line 25. 
\end{proof}

\begin{theorem}~(Completeness) Let $o_1$ and $o_2$ be two outcomes of an acyclic CP-net $N$ over $V$.  Acyclic-CP-DT($N,o_1,o_2$) returns ``no'' if $N\models o_1\succ o_2$ does not hold.
\label{theorem_divide_conquer_completeness}
\end{theorem}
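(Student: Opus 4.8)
The plan is to run the same induction on the number of variables that drives the correctness proof, but now to certify that the two places where Acyclic-CP-DT emits ``no'' --- line 4 and line 30 --- are both sound, i.e.\ each forces $N\not\models o_1\succ o_2$. Since the algorithm is total and, by the correctness argument, every ``yes'' return is already backed by a genuine improving flipping sequence, showing that the ``no'' returns fire only when no such sequence exists turns the output into an exact characterization and in particular yields the stated implication. I would reuse the line-1 notation: $X$ is the topmost variable on which $o_1$ and $o_2$ differ, $An(X)=R$ is their common prefix, $X$ contributes $x_1,x_2$, and $o_1',o_2'$ are the residual assignments on $V-(An(X)\cup\{X\})$. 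The base case is a single variable, where only lines 3--6 are reachable and soundness is immediate.

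The line-4 exit is the easy half. Reaching it means the test in line 3 succeeds, so given $An(X)=R$ the table $CPT(X)$ represents $x_2\succ x_1$ and in particular does not represent $x_1\succ x_2$; Lemma~\ref{lemma_divide_conquer_no_return1} then applies verbatim and delivers $N\not\models o_1\succ o_2$. The configuration $x_1\succ x_2$ with $o_1'=o_2'$ need not be handled here, since line 6 certifies $o_1\succ o_2$ directly from the CP-net semantics; hence whenever dominance fails, control instead enters the else block with $x_1\succ x_2$ and $o_1'\neq o_2'$.

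The substance is the line-30 exit. I would argue that reaching line 30 means every recursive probe has answered ``no'', and then invoke the inductive hypothesis to convert each such ``no'' into a non-entailment on the strictly smaller sub-nets: $N_1\not\models o_1'\succ o_2'$ and $N_2\not\models o_1'\succ o_2'$ from lines 9 and 13, $N_i\not\models o_1'\succ o_2'$ for every intermediate value $x_i$ with $x_1\succ x_i\succ x_2$ from line 18, and, for every residual suffix $o_3'\notin\{o_1',o_2'\}$, the failure of the conjunction ``$N_2\models o_3'\succ o_2'$ and $N_1\models o_1'\succ o_3'$'' from lines 23--24. Together with $x_1\succ x_2$ and $o_1'\neq o_2'$ supplied by the case split, these are exactly the hypotheses of Lemma~\ref{lemma_divide_conquer_no_return_2}, whose conclusion $N\not\models o_1\succ o_2$ closes the induction.

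The delicate point --- and why the introduction calls completeness ``very complicated'' --- is justifying that these three families of probes are collectively exhaustive, so that their joint failure really does rule out \emph{every} improving flipping sequence from $o_2=Rx_2o_2'$ to $o_1=Rx_1o_1'$. Two things must be nailed down. First, the prefix-elimination step, which asserts we may restrict attention to sequences that keep $An(X)=R$ throughout; I would justify this by a ``no need to disturb the prefix'' argument exploiting acyclicity, since the tables of $An(X)$ depend on neither $X$ nor its descendants, so any detour moving an ancestor off $R$ can be excised. Second, and harder, one must analyze the first flip at which $X$ leaves $x_2$: if it lands on $x_1$ the sequence splits into suffix-improvements before and after the flip (the $o_3'$ decomposition behind Lemma~\ref{lemma_divide_conquer_no_return_2}), whereas if it lands on an intermediate $x_i$ one needs the line 16--21 branch together with transitivity (mirroring the ``yes'' side via Lemma~\ref{lemma_divide_conquer_yes_return1}), and one must further argue that sequences flipping $X$ several times reduce to one of these shapes. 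Because Lemma~\ref{lemma_divide_conquer_no_return_2} is phrased only in terms of the direct $x_2\to x_1$ decomposition, I expect the real work to be a strengthened version of it that simultaneously accounts for the intermediate values $x_i$ and for repeated flips of $X$; this case analysis over the \emph{shape} of the flipping sequence, rather than the bookkeeping, is where the difficulty and the worst-case search over the residual suffix space genuinely reside.
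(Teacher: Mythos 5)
Your proposal takes essentially the same route as the paper's proof: it discharges the line-4 exit via Lemma~\ref{lemma_divide_conquer_no_return1}, and justifies the line-30 exit by arguing that the three ignored portions of the search space (prefixes other than $R$, values of $X$ outside the relevant ones, and sequences requiring an intermediate suffix $o_3'$) are safe, using the prefix-elimination argument, Lemma~\ref{lemma_divide_conquer_no_return1}, and Lemma~\ref{lemma_divide_conquer_no_return_2} respectively, exactly as the paper does. The delicate points you flag---exhaustiveness over the shapes of flipping sequences and the narrow phrasing of Lemma~\ref{lemma_divide_conquer_no_return_2}---are indeed left informal in the paper as well (it treats intermediate values as a separate case via Lemma~\ref{lemma_divide_conquer_no_return1} rather than strengthening Lemma~\ref{lemma_divide_conquer_no_return_2}, and it handles repeated flips of $X$ only implicitly through prefix elimination), so your explicit induction and your worry about a strengthened lemma are refinements of, not departures from, the published argument.
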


\begin{proof}
Acyclic-CP-DT($N,o_1,o_2$) can return ``no'' in line 4 or line 30.  If ``no'' is returned in line 4, Lemma~\ref{lemma_divide_conquer_no_return1} indicates that $N\models o_1\succ o_2$ does not hold, and this return is correct.

Now, we consider line 30.  We see that Acyclic-CP-DT($N,o_1,o_2$) searches the entire search space except for three cases described below. If the search finishes and a solution is not obtained, Acyclic-CP-DT($N,o_1,o_2$) returns ``no'' in line 30. We show below that for each of the three cases, $N\not\models o_1\succ o_2$ does not hold.

The first case is that for every $R'\in D(An(X))-\{R\}$, the search is ignored.  We can easily see that it does not impact the completeness, as $Rx_2o_2'$ cannot be improved to $R'x_2o_2'$ such that $R'x_2o_2'$ can be improved to $Rx_1o_1'$. More specifically, by removing the variables $V-An(X)$ and their CPTs from the original CP-net $N$, we can get a CP-net on $An(X)$ in which, if $R\succ R'$ holds then $R'\succ R$ does not hold.

The second case is:  $x_1\succ x_2$, $o_1'\neq o_2'$, $N_1\not\models o_1'\succ o_2'$, $N_2\not\models o_1'\succ o_2'$ and there is no $x_i\in D(X)-\{x_1,x_2\}$ such that $CPT(X)$ represents $x_1\succ x_i \succ x_2$ for $An(X)=R$ (See lines 16-21).  In this case, using Lemma~\ref{lemma_divide_conquer_no_return1}, we get, $(N\not\models Rx_1o_1'\succ Rx_io_2')\lor (N\not\models Rx_io_1'\succ Rx_2o_2')$.  This indicates that there cannot be any improving flipping sequence from $Rx_2o_2'$ to $Rx_1o_1'$ such that the sequence consists of an outcome with value $x_i$.  Therefore, ignoring the search for $X=x_i$ does not impact the completeness of Acyclic-CP-DT.

The third case is:  $x_1\succ x_2$, $o_1'\neq o_2'$, $N_1\not\models o_1'\succ o_2'$, $N_2\not\models o_1'\succ o_2'$ and there is no $o_3'\in (D(V-(An(X)\cup\{X\}))-\{o_1',o_2'\})$ such that $o_3'\succ o_2'$ for $An(X)\cup \{X\}=Rx_2$ and $o_1'\succ o_3'$ for $An(X)\cup \{X\}=Rx_1$ (See lines 22-28).  For this case, using Lemma~\ref{lemma_divide_conquer_no_return_2}, we see that $N\models o_1\succ o_2$ does not hold.  Therefore, Acyclic-CP-DT($N,o_1,o_2$) correctly returns ``no'' in line 30. 
\end{proof}





Acyclic-CP-DT has the exponential time complexity, in the worst case, i.e., the time complexity is $O(m^{n^2})$ where $n$ is the number of variables, and $m$ is the number of values that each variable is bounded by. However, by introducing the concept of prefix elimination, Acyclic-CP-DT outperforms the existing methods~\cite{Boutilier2004_CPnetJAIR}. We discuss this in the following subsection.

\subsection{Comparison with the existing methods}

\label{Acyclic_CP_DT_Section_Comparison}

Boutilier et al.~\cite{Boutilier2004_CPnetJAIR} converted the problem of dominance testing to searching for improving flipping sequences. They proposed two pruning techniques, i.e., suffix fixing and forward pruning. These techniques can be applied to any generic search algorithm. In this subsection, we describe how Acyclic-CP-DT implicitly  involves the suffix fixing and the forward pruning principles. Then, we describe the notion of prefix elimination, introduced in Acyclic-CP-DT.  This property adds additional computational advantage to the pruning principles above.

We represent two outcomes $o_1$ and $o_2$ as $o_1=Rx_1o_1'$ and $o_2=Rx_2o_2'$, where (1) variable $X$ gives two distinct values $x_1$ and $x_2$ to $o_1$ and $o_2$ correspondingly, (2) the top portion of the topological order (more specifically the ancestor set of $X$) gives $R$ to both $o_1$ and $o_2$, and (3) the bottom portion gives $o_1'$ and $o_2'$ to $o_1$ and $o_2$ correspondingly. Recall the termination conditions in lines 3 and 5. Let us consider that $o_1'=o_2'$ is true. Then, both conditions return the answer based on the values $Rx_1$ and $Rx_2$. The values $o_1'$ and $o_2'$ do not participate in determining the dominance query, which is exactly the suffix fixing rule, described in~\cite{Boutilier2004_CPnetJAIR}.

On the other hand, if we have $x_1\succ x_2$ or $x_2\succ x_1$ for $An(X)=R$ and the termination conditions (in lines 3 and 5) are not true, then the sub CP-nets are formed by restricting the original CP-net to the relevant values (in this case, the relevant values are $x_1$ and $x_2$). More specifically, Acyclic-CP-DT ignores every irrelevant value $x\in D(X)-\{x_1,x_2\}$ to the dominance query. This method is the forward pruning that Boutilier et al.~\cite{Boutilier2004_CPnetJAIR} suggested to apply as a pre-processing step in the CP-net before searching for improving flipping sequences. However, we incorporate the forward pruning in every recursive call of Acyclic-CP-DT.  Note that a value $x_i\in D(X)-\{x_1,x_2\}$ is relevant if $x_1\succ x_i \succ x_2$ or $x_2\succ x_i \succ x_1$. In this case, we do not ignore $x_i$ (see lines 16-20).

Given a dominance query $N\models o_1\succ o_2$, in the existing methods~\cite{Boutilier2004_CPnetJAIR} for finding an improving flipping sequence from $o_2$ to $o_1$, a value $x\in D(X)$ in $o_2$ for every variable $X\in V$ is checked for a possible improvement to a better value of $X$.  This improvement process continues until a solution is obtained or a further improvement is not possible.  However, for an instance of Acyclic-CP-DT($N,o_1,o_2$) (where $X$ is a variable such that $An(X)$ in $N$ gives the same value $R$ to both $o_1$ and $o_2$, and $X$ gives two different values $x_1$ and $x_2$ to $o_1$ and $o_2$ correspondingly), we see that no value in $R$ is checked for a possible improvement to a better value. If some values in $R$ are improved to get $R'\in (D(An(X))-\{R\})$, then there is an improving flipping sequence from $R$ to $R'$ for the CP-net, obtained by removing all variables in $V-An(X)$. Then, we cannot have $R\succ R'$. Since both outcomes ($o_1$ and $o_2$) consist of $R$, trying the improving flips for the values in $R$ will never give an improving sequence from $o_2$ to $o_1$. In this way, Acyclic-CP-DT can reduce the search space without impacting its completeness as we see in Theorem~\ref{theorem_divide_conquer_completeness}.  We call this notion -- the prefix elimination.

Santhanam et al.~\cite{Santhanam2010dominanceModelChecking} showed that the dominance testing problem can be successfully represented using model checking before it is solved.  However, they did not consider any actual improvement of the methods for answering the dominance query.

\section{Conclusion and Future Work}
\label{SEC_CON_FUTURE_WORK}

We formally illustrate the variable importance order induced by the parent-child relation of an acyclic CP-net. We show that the acyclic CP-net represents a total order over the outcomes if and only if the induced variable order is total. Then, we propose an extension of the CP-net model that we call the CPR-net.  The CPR-net consists of an acyclic CP-net augmented with a set of Additional Relative Importance (ARI) statements, each for every pair of variables which are not ordered by the CP-net. Indeed, the CP-net model is more expressive than the CPR-net model.  However, an acyclic CPR-net guarantees a total order over the outcomes.  Therefore, the constrained optimization using the acyclic CPR-net involves a single optimal solution.  Consequently, the dominance testing between outcomes is not needed as shown in our proposed solving algorithm.

We have proposed the Search-LP algorithm to obtain the most preferable feasible outcome for a Constrained LP-tree.  Search-LP instantiates the variables and values according to the hierarchical orders of variables and values induced by the conditional relative importance relations and the conditional preferences which are defined in the LP-tree.  In comparison with the Constrained CP-nets or the Constrained TCP-nets, the Constrained LP-tree is superior in the fact that the latter does not require dominance testing that is a very expensive operation in both CP-nets and TCP-nets.

We have proposed the divide and conquer algorithm, that we call Acyclic-CP-DT to perform the dominance testing in acyclic CP-nets. First, Acyclic-CP-DT determines a variable such that its ancestor set gives the same value (or the partial assignment) to both outcomes. In searching for improving flipping sequences, this value of ancestor set is ignored (i.e., no variable in the ancestor set is tried for a possible improving flip), although this value is used to build the sub CP-nets. In fact, our method disregards every value of the ancestor set except the one above, and we call this concept the prefix elimination. The prefix elimination reduces the search space without impacting the completeness of Acyclic-CP-DT. Then, Acyclic-CP-DT is recursively called for the sub CP-nets until an improving flipping sequence is searched. Together with the prefix elimination, we show that Acyclic-CP-DT also involves the sufix fixing and the forward pruning principles~\cite{Boutilier2004_CPnetJAIR}, which justifies the computational advantage of Acyclic-CP-DT over the existing methods.

Acyclic-CP-DT extends to the Partial CP-nets~\cite{Boutilier2004_CPnetJAIR,Ahmed2018PartialCPnet} where a CPT represents a partial order, not necessarily a total order, over the variable values for each combination of the variable's parent set.  In this case, it is easy to see that Lemma~\ref{lemma_divide_conquer_no_return1} also holds when $x_1$ and $x_2$ are incomparable.  Therefore, to apply Acyclic-CP-DT for Partial CP-nets, we need to simply change the condition in line 3 to $x_2\succ x_1$, or $x_2$ and $x_1$ are incomparable.

There is one and only one preference order over the outcomes, which satisfies an acyclic CPR-net. However, the satisfiability of a cyclic CPR-net is not guaranteed.  For example, if we consider the ARI $C\rhd A$ with the acyclic CP-net of Figure~\ref{Label_totallydependentcpnet_fig}, we get a cyclic CPR-net.  The CP-net in Figure~\ref{Label_totallydependentcpnet_fig} induces $a_1 b_1 c_2\succ a_2 b_1 c_1$, while it can be shown that the ARI $C\rhd A$ induces $a_2 b_1 c_1\succ a_1 b_1 c_2$.  Therefore, both of the outcomes dominate each other, in the CPR-net, one using the CP-net component and another using the ARI component.  This indicates that there is no preference order that satisfies the CPR-net. Nevertheless, given a set of hard constraints, some of the outcomes might be infeasible.  In this case, we still may have the optimal outcome for the CPR-net and the set of constraints.  Unfortunately, Search-CPR cannot find the optimal outcome, as it will fail at the beginning of execution due to the directed graph not having a topological order.  In this regard, we plan to find a solving algorithm that can deal with cyclic CPR-nets.

Acyclic-CP-DT does not work for cyclic CP-nets as it requires an arbitrary topological order of the related DAG.  We do not know if a divide and conquer algorithm can be derived for dominance testing in cyclic CP-nets.

We plan to incorporate various constraint propagation techniques~\cite{Dechter2003constraint,Zhang2016ConstrainedTcpNet,mouhoub2004systematic,mouhoub2008managing} with Search-CPR and Search-LP as was done in~\cite{Alanazi2016constrained_cp_net} for Constrained CP-nets. We will then run a series of experiments on Constrained CPR-net and Constrained LP-tree instances in order to assess the time efficiency of Search-CPR and Search-LP with constraint propagation. In this regard, we plan to extend the Model RB~\cite{Xu2000_RB_model}, which has the ability to generate hard constraints to solve CSP instances, to Constrained CPR-net and Constrained LP-tree instances.  This will enable us to measure the performance of our Search-CPR and Search-LP algorithms on hard to solve instances including those near the phase transition.

\section*{acknowledgements}
This research was partially funded by Natural Sciences and Engineering Research Council of Canada (RGPIN-2016-05673).

\bibliographystyle{plain}
\bibliography{CPRLP}

\begin{thebibliography}{10}

\bibitem{Ahmed2018PartialCPnet}
Sultan Ahmed and Malek Mouhoub.
\newblock Constrained optimization with partial {CP}-nets.
\newblock In {\em Proceedings of the IEEE International Conference on Systems,
  Man, and Cybernetics}, pages 3361--3366, 2018.

\bibitem{Ahmed2018CPRnet}
Sultan Ahmed and Malek Mouhoub.
\newblock Constrained optimization with preferentially ordered outcomes.
\newblock In {\em Proceedings of the IEEE 30th International Conference on
  Tools with Artificial Intelligence (ICTAI)}, pages 307--314, 2018.

\bibitem{Ahmed2019LPtreeSMC}
Sultan Ahmed and Malek Mouhoub.
\newblock Constrained {LP}-trees.
\newblock In {\em Proceedings of the IEEE International Conference on Systems,
  Man, and Cybernetics}, pages 2938--2944, 2019.

\bibitem{Ahmed2019DivedeConquerDT}
Sultan Ahmed and Malek Mouhoub.
\newblock A divide and conquer algorithm for dominance testing in acyclic
  {CP}-nets.
\newblock In {\em Proceedings of the IEEE 31st International Conference on
  Tools with Artificial Intelligence (ICTAI)}, pages 392--399, 2019.

\bibitem{Alanazi2016constrained_cp_net}
Eisa Alanazi and Malek Mouhoub.
\newblock Variable ordering and constraint propagation for constrained
  {CP}-nets.
\newblock {\em Applied Intelligence}, 44(2):437--448, 2016.

\bibitem{Amor2016}
Nahla~Ben Amor, Didier Dubois, H{\'e}la Gouider, and Henri Prade.
\newblock Graphical models for preference representation: An overview.
\newblock In {\em Proceedings of the 10th International Conference on Scalable
  Uncertainty Management}, pages 96--111, 2016.

\bibitem{Baier2009AIMagazine}
Jorge Baier and Sheila McIlraith.
\newblock Planning with preferences.
\newblock {\em {AI} Magazine}, 29(4):25--36, 2008.

\bibitem{Bistarelli1997SCSP}
Stefano Bistarelli, Ugo Montanari, and Francesca Rossi.
\newblock Semiring-based constraint satisfaction and optimization.
\newblock {\em Journal of the ACM}, 44(2):201--236, 1997.

\bibitem{Bistarelli1999SCSP_VCSP}
Stefano Bistarelli, Ugo Montanari, Francesca Rossi, Thomas Schiex, Gérard
  Verfaillie, and Hélène Fargier.
\newblock Semiring-based {CSP}s and valued {CSP}s: Frameworks, properties, and
  comparison.
\newblock {\em Constraints}, 4(3):199--240, 1999.

\bibitem{Booth2010LPtree}
Richard Booth, Yann Chevaleyre, J{\'e}r\^{o}me Lang, J{\'e}r\^{o}me Mengin, and
  Chattrakul Sombattheera.
\newblock Learning conditionally lexicographic preference relations.
\newblock In {\em Proceedings of the 19th European Conference on Artificial
  Intelligence}, pages 269--274, 2010.

\bibitem{Boutilier2004constrainedCP_net}
Craig Boutilier, Ronen Brafman, Carmel Domshlak, Holger Hoos, and David Poole.
\newblock Preference-based constrained optimization with {CP}-nets.
\newblock {\em Computational Intelligence}, 20:137--157, 2004.

\bibitem{Boutilier2004_CPnetJAIR}
Craig Boutilier, Ronen Brafman, Carmel Domshlak, Holger~H Hoos, and David
  Poole.
\newblock {CP}-nets: A tool for representing and reasoning with conditional
  ceteris paribus preference statements.
\newblock {\em Journal of Artificial Intelligence Research}, 21:135--191, 2004.

\bibitem{Brafman2002ProdConTCPnet}
Ronen Brafman and Carmel Domshlak.
\newblock {TCP}-nets for preference-based product configuration.
\newblock In {\em Proceedings of the Fourth Workshop on Configuration}, pages
  101--106, 2002.

\bibitem{Brafman2006TCP_net_JAIR}
Ronen Brafman, Carmel Domshlak, and Solomon~Eyal Shimony.
\newblock On graphical modeling of preference and importance.
\newblock {\em Journal of Artificial Intelligence Research}, 25:389--424, 2006.

\bibitem{Dechter2003constraint}
Rina Dechter.
\newblock {\em Constraint processing}.
\newblock Morgan Kaufmann, San Francisco, CA, USA, 2003.

\bibitem{Domshlak2003reasoningabout}
Carmel Domshlak, Francesca Rossi, Kristen Venable, and Toby Walsh.
\newblock Reasoning about soft constraints and conditional preferences:
  Complexity results and approximation techniques.
\newblock In {\em Proceedings of the International Joint Conferences on
  Artificial Intelligence}, pages 215--220, 2003.

\bibitem{Fishburn1974Lexicographic}
Peter Fishburn.
\newblock Lexicographic orders, utilities and decision rules: A survey.
\newblock {\em Management Science}, 20(11):1442--1471, 1974.

\bibitem{Freuder2010Lexicographic}
Eugene Freuder, Robert Heffernan, Richard Wallace, and Nic Wilson.
\newblock Lexicographically-ordered constraint satisfaction problems.
\newblock {\em Constraints}, 15(1):1--28, 2010.

\bibitem{Freuder2003OrdinalCSP}
Eugene Freuder, Richard Wallace, and Robert Heffernan.
\newblock Ordinal constraint satisfaction.
\newblock In {\em Fifth International Workshop on Soft Constraints}, 2003.

\bibitem{Goldsmith2008computational}
Judy Goldsmith, J{\'e}r{\^o}me Lang, Miroslaw Truszczynski, and Nic Wilson.
\newblock The computational complexity of dominance and consistency in
  {CP}-nets.
\newblock {\em Journal of Artificial Intelligence Research}, 33:403--432, 2008.

\bibitem{Jin2002PreferenceBasedCollobarativeFiltering}
Rong Jin, Luo Si, and ChengXiang Zhai.
\newblock Preference-based graphic models for collaborative filtering.
\newblock In {\em Proceedings of the Nineteenth Conference on Uncertainty in
  Artificial Intelligence}, pages 329--336, 2002.

\bibitem{Kaci2011}
Souhila Kaci.
\newblock {\em Working with preferences: less is more}.
\newblock Springer, Berlin, Heidelberg, 2011.

\bibitem{Keeney1993decision}
Ralph Keeney and Howard Raiffa.
\newblock {\em Decisions with multiple objectives: preferences and value
  trade-offs}.
\newblock Cambridge University Press, Cambridge, England, 1993.

\bibitem{mouhoub2004systematic}
Malek Mouhoub.
\newblock Systematic versus non systematic techniques for solving temporal
  constraints in a dynamic environment.
\newblock {\em AI Communications}, 17(4):201--211, 2004.

\bibitem{Mouhoub2011VariableValueOrderingCSP}
Malek Mouhoub and Bahareh Jafari.
\newblock Heuristic techniques for variable and value ordering in {CSP}s.
\newblock In {\em Proceedings of the 13th Annual Conference on Genetic and
  Evolutionary Computation}, pages 457--464, 2011.

\bibitem{mouhoub2008managing}
Malek Mouhoub and Jia Liu.
\newblock Managing uncertain temporal relations using a probabilistic interval
  algebra.
\newblock In {\em 2008 IEEE International Conference on Systems, Man and
  Cybernetics}, pages 3399--3404. IEEE, 2008.

\bibitem{Pigozzi2016}
Gabriella Pigozzi, Alexis Tsouki{\`a}s, and Paolo Viappiani.
\newblock Preferences in artificial intelligence.
\newblock {\em Annals of Mathematics and Artificial Intelligence},
  77(3):361--401, 2016.

\bibitem{PooleMackworth2017}
David Poole and Alan Mackworth.
\newblock {\em Artificial Intelligence: Foundations of Computational Agents}.
\newblock Cambridge University Press, New York, USA, 2017.

\bibitem{Sadaoui2016_pref_constraints}
Samira Sadaoui and Shubhashis Shil.
\newblock A multi-attribute auction mechanism based on conditional constraints
  and conditional qualitative preferences.
\newblock {\em Journal of Theoretical and Applied Electronic Commerce
  Research}, 11(1):1--25, 2016.

\bibitem{Santhanam2010dominanceModelChecking}
Ganesh Santhanam, Samik Basu, and Vasant Honavar.
\newblock Dominance testing via model checking.
\newblock In {\em Proceedings of the Twenty-Fourth AAAI National Conference on
  Artificial Intelligence}, pages 357--362, 2010.

\bibitem{Schiex1995VCSP}
Thomas Schiex, Helene Fargier, and Gerard Verfaillie.
\newblock Valued constraint satisfaction problems: Hard and easy problems.
\newblock In {\em Proceedings of the 14th International Joint Conference on
  Artificial Intelligence}, pages 631--637, 1995.

\bibitem{Utyuzhnikov20096Pareto_Set}
Sergey Utyuzhnikov, Paolo Fantini, and Marin Guenov.
\newblock A method for generating a well-distributed pareto set in nonlinear
  multiobjective optimization.
\newblock {\em Journal of Computational and Applied Mathematics}, 223:820--841,
  2009.

\bibitem{Wallace2006}
Richard Wallace and Nic Wilson.
\newblock Conditional lexicographic orders in constraint satisfaction problems.
\newblock In {\em Proceedings of the Integration of AI and OR Techniques in
  Constraint Programming for Combinatorial Optimization Problems}, pages
  258--272, 2006.

\bibitem{Xu2000_RB_model}
Ke~Xu and Wei Li.
\newblock Exact phase transitions in random constraint satisfaction problems.
\newblock {\em Journal of Artificial Intelligence Research}, 12(1):93--103,
  2000.

\bibitem{Yong2017VariableValueOrderCSP}
Ket Yong and Malek Mouhoub.
\newblock Using conflict and support counts for variable and value ordering in
  csps.
\newblock {\em Applied Intelligence}, 48(8):2487--2500, 2018.

\bibitem{Zajonc1980feeling}
Robert~B Zajonc.
\newblock Feeling and thinking: Preferences need no inferences.
\newblock {\em American Psychologist}, 35(2):151--175, 1980.

\bibitem{Zanker2010}
Markus Zanker, Markus Jessenitschnig, and Wolfgang Schmid.
\newblock Preference reasoning with soft constraints in constraint-based
  recommender systems.
\newblock {\em Constraints}, 15(4):574--595, 2010.

\bibitem{Zhang2016ConstrainedTcpNet}
Shu Zhang, Malek Mouhoub, and Samira Sadaoui.
\newblock Integrating {TCP}-nets and {CSP}s: The constrained {TCP}-net
  ({CTCP}-net) model.
\newblock In {\em Proceedings of the International Conference on Industrial,
  Engineering and Other Applications of Applied Intelligent Systems}, pages
  201--211, 2015.

\end{thebibliography}

\clearpage

\section*{Author Biography}



\end{document}